\documentclass{article}

% if you need to pass options to natbib, use, e.g.:
%     \PassOptionsToPackage{numbers, compress}{natbib}
% before loading neurips_2022
\usepackage[numbers]{natbib}

% ready for submission
% \usepackage{neurips_2023}

\usepackage{wrapfig}

% to compile a preprint version, e.g., for submission to arXiv, add add the
% [preprint] option:
%     \usepackage[preprint]{neurips_2022}

% to compile a camera-ready version, add the [final] option, e.g.:
\usepackage[final]{neurips_2023}

% to avoid loading the natbib package, add option nonatbib:
%    \usepackage[nonatbib]{neurips_2022}

\usepackage[utf8]{inputenc} % allow utf-8 input
\usepackage[T1]{fontenc}    % use 8-bit T1 fonts
\usepackage{hyperref}       % hyperlinks
\hypersetup{
    colorlinks,
    linkcolor={red!50!black},
    citecolor={blue!50!black},
    urlcolor={blue!80!black}
}

\usepackage{url}            % simple URL typesetting
\usepackage{booktabs}       % professional-quality tables
\usepackage{amsfonts}       % blackboard math symbols
\usepackage{nicefrac}       % compact symbols for 1/2, etc.
\usepackage{microtype}      % microtypography
\usepackage{xcolor}         % colors
\usepackage{amsmath}
\usepackage{graphicx} 
\usepackage{xargs}
\usepackage{mathrsfs}
\usepackage{bbm}
\usepackage{upgreek}
%%%%%%%%%%%%% added %%%%%%%%%%%%%

\usepackage{xcolor}
\usepackage{amssymb}
\usepackage{amsthm}
\usepackage{url}
\usepackage{enumitem}
\usepackage{soul}
\usepackage{outlines}
\usepackage{subcaption}  %

\definecolor{apricot}{rgb}{0.98, 0.81, 0.69}
\definecolor{celadon}{rgb}{0.67, 0.88, 0.69}
% add [disable] to remove all notes
% \usepackage{todonotes}

\usepackage[textsize=scriptsize]{todonotes}

\newcounter{relctr} %% <- counter for relations
\everydisplay\expandafter{\the\everydisplay\setcounter{relctr}{0}} %% <- reset every eq
 %% <- label format

\newcommand\labelrel[2]{%
  \begingroup
    \refstepcounter{relctr}%
    \stackrel{\textnormal{(\alph{relctr})}}{\mathstrut{#1}}%
    \originallabel{#2}%
  \endgroup
}
\AtBeginDocument{\let\originallabel\label}

\newtheorem{theorem}{Theorem}

\newtheorem{lemma}{Lemma}

\newtheorem{definition}{Definition}
\newtheorem{assumption}{Assumption}

\def\bbf{\mathbf{b}}
\def\bfA{\mathbf{A}}

% Nicolas

%%%mathcal

\def\mcz{\mathcal{Z}}

  %%% \mcb est déjà pris

%%%% mathsf

\def\msa{\mathsf{A}}
\def\mss{\mathsf{S}}
\def\msi{\mathsf{I}}

\def\mss{\mathsf{S}}

\def\msz{\mathsf{Z}}

%%%%%%%%%%
% mathrm

% Draft, bias, step function of n

% Sean

% Markov kernels, invariant measure, drift, generators
\newcommandx{\invpihat}[1][1=n]{\hat{\invpi}_{#1}}
\newcommandx{\invpihattrain}[1][1=m]{\tilde{\invpi}_{#1}}

\def\invpi{\pi}

\def\step{\gamma}
\newcommandx{\varinf}[1][1=]{\ifthenelse{\equal{#1}{}}{\sigma^2_\infty}{\sigma^2_{\infty,#1}}}

%Generators, test functions, Poisson solutions

%\def\sPoiga{\sPoi_{\step}^{f}}

\def\generatorA{\mathscr{E}}

\newcommandx{\sPoif}[1][1=f]{\hat{#1}}%\def\sPoif{\sPoi^{f}}
%\def\sPoig{\sPoi_{\step}^{g}}

% Supplement meta-theorem comparison of asymptotic variances

% Asymptotic variance and diffusion, Green Kubo Formula
%\def\projpi{\Pi}
%\def\projpig{\Pi_\step}

%\newcommand{\pspig}[2]{\left\langle#1,#2 \right\rangle_{\Lspace^2(\invpig)}}
%\newcommand{\pspi}[2]{\left\langle#1,#2 \right\rangle_{\Lspace^2(\invpi)}}

%\def\dompol{\mathcal{K}}
%\newcommand{\setLinf}[1]{\Lspace^{\infty}_{\dompol_{#1}}(\rset^d)}
%\def\borne{\mathcal{B}}

%\def\generatorD{\mathscr{D}}

%\newcommand{\genrmala}{\generatorA_{\step}^{\scriptscriptstyle{\operatorname{MALA}}}}
%\newcommand{\genrula}{\generatorA_{\step}^{\scriptscriptstyle{\operatorname{ULA}}}}
%\newcommand{\genrrwm}{\generatorA_{\step}^{\scriptscriptstyle{\operatorname{RWM}}}}

\newcommandx{\genrula}[1][1=]{\ifthenelse{\equal{#1}{}}
{\generatorA_{\step}^{\scriptscriptstyle{\operatorname{ULA}}}}
{\generatorA_{#1}^{\scriptscriptstyle{\operatorname{ULA}}}}}
\newcommandx{\genrmala}[1][1=]{\ifthenelse{\equal{#1}{}}
{\generatorA_{\step}^{\scriptscriptstyle{\operatorname{MALA}}}}
{\generatorA_{#1}^{\scriptscriptstyle{\operatorname{MALA}}}}}
\newcommandx{\tgenrmala}[1][1=]{\ifthenelse{\equal{#1}{}}
{\tilde{\generatorA}_{\step}^{\scriptscriptstyle{\operatorname{MALA}}}}
{\tilde{\generatorA}_{#1}^{\scriptscriptstyle{\operatorname{MALA}}}}}

\newcommandx{\genrrwm}[1][1=]{\ifthenelse{\equal{#1}{}}
{\generatorA_{\step}^{\scriptscriptstyle{\operatorname{RWM}}}}
{\generatorA_{#1}^{\scriptscriptstyle{\operatorname{RWM}}}}}
\newcommandx{\genrbar}[1][1=]{\ifthenelse{\equal{#1}{}}
{\generatorA_{\step}^{\scriptscriptstyle{\operatorname{B}}}}
{\generatorA_{#1}^{\scriptscriptstyle{\operatorname{B}}}}}

 % suppress
 % suppress

%\def\varula{\sigma^2_{\operatorname{ULA}}}
%\def\varmala{\sigma^2_{\operatorname{MALA}}}
%\def\varrwm{\sigma^2_{\operatorname{RWM}}}
%\def\resteula{\reste_{\generator^{-1} \projpi \tf \projpig \tf}^{\operatorname{ULA}}}

% Assumptions on U
\def\pU{U}

% Foster Lyapounov conditions

\newcommand{\vertiii}[1]{{\left\vert\kern-0.25ex\left\vert\kern-0.25ex\left\vert #1
    \right\vert\kern-0.25ex\right\vert\kern-0.25ex\right\vert}}
%\newcommand{\vertiii}[1]{\operatorname{osc}(#1)}

%Basis Poisson solution

\def\param{\theta}

% Bayesian examples

 %\def\xb{\mathsf{X}}
 %\def\yb{\mathsf{Y}}

\newcommandx{\sigS}[1][1=f]{ \hat{\sigma}^2_{N,n}(#1)}
\newcommand{\logl}[1]{%
    \IfEqCase{#1}{%
        {l}{\ell_{\operatorname{log}}}%
        {p}{\ell_{\operatorname{pro}}}%
        % you can add more cases here as desired
    }[\PackageError{logl}{Undefined option to logl: #1}{}]%
}%
\newcommand{\Ub}[1]{%
    \IfEqCase{#1}{%
        {l}{\pU_{\operatorname{log}}}%
        {p}{\pU_{\operatorname{pro}}}%
        % you can add more cases here as desired
    }[\PackageError{Ub}{Undefined option to Ub: #1}{}]%
}%
\newcommand{\pib}[1]{%
    \IfEqCase{#1}{%
        {l}{\invpi_{\operatorname{log}}}%
        {p}{\invpi_{\operatorname{pro}}}%
        % you can add more cases here as desired
    }[\PackageError{Ub}{Undefined option to Ub: #1}{}]%
}%

% Drift RWM

\newcommandx{\rayrwm}[1][1=]{\ifthenelse{\equal{#1}{}}{K_{\step}}{K_{#1}}}
%\def\rayrwm{K_{\step}}

%Supplementary doc

%Proof of Lemaire theorem V.3

% Limits
\newcommandx{\flecheLimiteLoi}[1][1=\mu]{\overset{\PP_{#1}-\text{weakly}}{\underset{n\to+\infty}{\Longrightarrow}}}
\newcommandx{\flecheLimiteLoiNu}{\overset{\text{weakly}}{\underset{n\to+\infty}{\Longrightarrow}}}

\newcommandx{\flecheLimiteLoit}[1][1=x]{\overset{\PP_{#1}-\text{weakly}}{\underset{t\to+\infty}{\Longrightarrow}}}

%%%%%%%%%%%%%%%%%%%%%%%%%%%%%%%%%%%% ALAIN ERIC

%%%%%%%%%%%%%%%%%%%%%%%%%%%%%%%%%%%%%%%%%%%%%%%%%%%%%%%%%%%%%%%%%%%%%%%%%%%%%%%%%%%%%%%%%%%%%%%%%%%%%%%%%%%%%%%%%%%%%%%%%%%%%%%%%%%%%%%%%%%%%%%%%%%%%%%%%%%%%%%%%

\def\xstar{x^\star}

\newcommandx{\functionspace}[2][1=+]{\mathcal{F}_{#1}(#2)}
%% argmin, argmax

\newcommandx{\VarDeux}[3][3=]{\operatorname{Var}^{#3}_{#1}\left[#2 \right]}

\newcommand{\LeftEqNo}{\let\veqno\@@leqno}

%\DeclareMathOperator*{\pCN}{pCN}

%%%% Floating Points Notation

%voc

%order

% Sets
\newcommand{\N}{\ensuremath{\mathbb{N}}}

\newcommand{\PE}{\mathbb{E}}

\newcommand{\PP}{\mathbb{P}}

% Operands

\newcommandx{\Vnorm}[2][1=V]{\| #2 \|_{#1}}
\newcommandx{\VnormEq}[2][1=V]{\left\| #2 \right\|_{#1}}
\newcommandx{\VnormEqs}[2][1=V]{\| #2 \|_{#1}}

\newcommandx{\estparam}[2][2=n,1=\lambda]{\widehat{\param}_{#1,#2}}
\newcommandx{\estparaml}[1][1=n]{\widetilde{\param}_{#1}}

\newcommandx{\norm}[2][1=]{\ifthenelse{\equal{#1}{}}{\Vert #2 \Vert}{\Vert #2 \Vert^{#1}}}
\newcommandx{\normEq}[2][1=]{\ifthenelse{\equal{#1}{}}{\left\Vert #2 \right\Vert}{\left\Vert #2 \right \Vert^{#1}}}
\newcommandx{\normop}[2][1=]{\Vert #2 \Vert^{#1}_{\operatorname{op}}}
\newcommandx{\normopEq}[2][1=]{\left\Vert #2 \right\Vert^{#1}_{\operatorname{op}}}
\newcommandx{\norfro}[2][1=]{\Vert #2 \Vert^{#1}_{\operatorname{F}}}

\newcommandx{\normLigne}[2][1=]{\ifthenelse{\equal{#1}{}}{\Vert #2 \Vert}{\Vert #2\Vert^{#1}}}

\newcommand{\parenthese}[1]{\left(#1 \right)}

%\newcommand{\defSystem}[1]{\left\lbrace #1 \right. }

% Relations

% Proba

\newcommandx\probaMarkovTilde[2][2=]
{\ifthenelse{\equal{#2}{}}{{\widetilde{\mathbb{P}}_{#1}}}{\widetilde{\mathbb{P}}_{#1}\left[ #2\right]}}

\newcommand{\plusinfty}{+\infty}

%notation egale

%plusieurs ligne indice
%\sum\limits_{\substack{i=0 \\ i \neq i_0}}^{n}{A_

\newcounter{hypoconbis}
\newcounter{saveconbis}
\newcommand\debutH{\begin{list}
{\textbf{H\arabic{hypoconbis}}}{\usecounter{hypoconbis}}\setcounter{hypoconbis}{\value{saveconbis}}}
\newcommand\finH{\end{list}\setcounter{saveconbis}{\value{hypoconbis}}}

\def\eqsp{\;}

\newcommandx{\weight}[2][2=n]{\omega_{#1,#2}^N}

\def\rmd{\mathrm{d}}
\newcommandx\sequenceg[3][2=,3=]
{\ifthenelse{\equal{#3}{}}{\ensuremath{( #1_{#2})}}{\ensuremath{( #1_{#2})_{ #2 \geq #3}}}}

\newcommandx\sequence[3][2=,3=]
{\ifthenelse{\equal{#3}{}}{\ensuremath{\{ #1_{#2}\}}}{\ensuremath{\{ #1_{#2} \, : \,  \eqsp #2 \in #3 \}}}}
\newcommandx\sequenceW[3][2=,3=]
{\ifthenelse{\equal{#3}{}}{\ensuremath{\{ #1 \}}}{\ensuremath{\{ #1 \, :\,\eqsp #2 \in #3 \}}}}
\newcommandx\sequenceD[3][2=,3=]
{\ifthenelse{\equal{#3}{}}{\ensuremath{\{ #1_{#2}\}}}{\ensuremath{( #1)_{ #2 \in #3} }}}

\newcommandx{\sequencen}[2][2=n\in\nset]{\ensuremath{\{ #1_n \, :\, \eqsp #2 \}}}
\newcommandx{\sequencens}[2][2=n\in\nsets]{\ensuremath{\{ #1_n \, :\, \eqsp #2 \}}}
\newcommandx{\sequenceWn}[2][2=n\in\nset]{\ensuremath{\{ #1 \, :\, \eqsp #2 \}}}
\newcommandx{\sequenceWns}[2][2=n\in\nsets]{\ensuremath{\{ #1 \, :\, \eqsp #2 \}}}
\newcommandx{\sequencek}[2][2=k\in\nset]{\ensuremath{\{ #1_k \, :\, \eqsp #2 \}}}
\newcommandx{\sequenceWk}[2][2=k\in\nset]{\ensuremath{\{ #1 \, :\, \eqsp #2 \}}}
\newcommandx{\sequenceks}[2][2=k\in\nsets]{\ensuremath{\{ #1_k \, :\, \eqsp #2 \}}}
\newcommandx{\sequenceWks}[2][2=k\in\nsets]{\ensuremath{\{ #1 \, :\, \eqsp #2 \}}}
\newcommandx\sequenceDouble[4][3=,4=]
{\ifthenelse{\equal{#3}{}}{\ensuremath{\{ (#1_{#3},#2_{#3}) \}}}{\ensuremath{\{  (#1_{#3},#2_{#3}) \, :\, \eqsp #3 \in #4 \}}}}
\newcommandx\sequenceDoubleW[4][3=,4=]
{\ifthenelse{\equal{#3}{}}{\ensuremath{\{ (#1,#2) \}}}{\ensuremath{\{  (#1,#2) \, :\, \eqsp #3 \in #4 \}}}}
\newcommandx{\sequencenDouble}[3][3=n\in\N]{\ensuremath{\{ (#1_{n},#2_{n}) \, :\, \eqsp #3 \}}}

\def\iid{i.i.d.}
\def\rme{\mathrm{e}}

\def\rset{\mathbb{R}}

\def\nset{\mathbb{N}}
\def\nsets{\mathbb{N}^*}

%\newcommandx{\CPE}[3][1=]{{\mathbb E}_{#1}\left[\left. #2 \, \right| #3 \right]} %%%% esperance conditionnelle
\newcommandx{\CPE}[3][1=]{{\mathbb E}^{#3}_{#1}\left[#2 \right]} %%%% esperance conditionnelle
\newcommand{\CPP}[3][]
{\ifthenelse{\equal{#1}{}}{{\mathbb P}\left(\left. #2 \, \right| #3 \right)}{{\mathbb P}_{#1}\left(\left. #2 \, \right | #3 \right)}}

\newcommandx{\osc}[2][1=]{\mathrm{osc}_{#1}(#2)}

\newcommand{\chunk}[4][]%
{\ifthenelse{\equal{#1}{}}{\ensuremath{{#2}_{#3:#4}}}{\ensuremath{#2^#1}_{#3:#4}}
}

%%%%%%% ANAND COMMANDS %%%%%%%%%%%

% Bold

% FPF defs

%\def\rmd{\,\textmd{d}}

\def\param{\theta}

%Caligraphy

% ------------------------------------------------------------------------------------

%tilde

%  Hat

% ------------------------------------------------------------------------------------

% bold

% ------------------------------------------------------------------------------------

% ------------------------------------------------------------------------------------

% ------------------------------------------------------------------------------------

  %uncomment for final version

\newcounter{rmnum}

\newcommandx{\hControlFuncOpt}[1][1=n]{g_{#1}^{\star}}
\newcommandx{\hControlFuncOptLambda}[1][1={n,\lambda}]{g_{#1}^{\star}}

\newcommandx{\ControlFuncSet}[1][1=]{\mathcal{G}_{#1}}
\newcommandx{\ControlFuncSetH}[1][1=]{\mathcal{H}_{#1}}

\newcommandx{\pen}[1][1=n]{\operatorname{pen}_{#1}}
\newcommandx{\EmpRisk}[1][1=n]{\operatorname{R}_{#1}}

\newcommandx{\PVar}[1][1=]{\ensuremath{\operatorname{Var}_{#1}}}

\newcommandx{\PCov}[1][1=]{\ensuremath{\operatorname{Cov}_{#1}}}

\newcommandx{\dlim}[1]{\ensuremath{\stackrel{#1}{\Longrightarrow}}}

\newcommandx{\MSEd}[3][1={x,\step},3=n]{\operatorname{MSE}^{#3}_{#1}(#2)}

\def\Idd{\operatorname{I}_d}
\def\scrP{\mathscr{P}}
\def\munz{\mu_z^{(n)}}
\def\munAb{\mu^{(n)}_{\bfA,\bbf}}
\def\munA{\mu^{(n)}_{\bfA}}
\def\munb{\mu^{(n)}_{\bbf}}
\def\bfW{\mathbf{W}}
\def\muAb{\mu_{\bfA,\bbf}}
\def\muA{\mu_{\bfA}}
\def\mub{\mu_{\bbf}}
\def\scrL{\mathscr{L}}
%% here

%%% Local Variables:
%%% mode: latex
%%% TeX-master: "main_neurips"
%%% End:

%%%%%%%%%%%%% added end %%%%%%%%%%%%%

\title{Approximate Heavy Tails in Offline (Multi-Pass) Stochastic Gradient Descent}

% The \author macro works with any number of authors. There are two commands
% used to separate the names and addresses of multiple authors: \And and \AND.
%
% Using \And between authors leaves it to LaTeX to determine where to break the
% lines. Using \AND forces a line break at that point. So, if LaTeX puts 3 of 4
% authors names on the first line, and the last on the second line, try using
% \AND instead of \And before the third author name.

\begin{document}

\author{%
  Krunoslav Lehman Pavasovic\\%\thanks{Use footnote for providing further information
    %about author (webpage, alternative address)---\emph{not} for acknowledging
    %funding agencies.} \\
  Inria Paris, CNRS, Ecole Normale Sup\'{e}rieure,
PSL Research University\\
  Paris, France\\
  \texttt{krunoslav.lehman-pavasovic@inria.fr} \\
  % examples of more authors
  \And
  Alain Durmus\\
  CMAP, CNRS, Ecole Polytechnique, Institut Polytechnique de Paris\\
  Paris, France\\
  \texttt{alain.durmus@polytechnique.edu} \\
  \And
  Umut \c{S}im\c{s}ekli\\
Inria Paris, CNRS, Ecole Normale Sup\'{e}rieure,
PSL Research University\\
  Paris, France\\
  \texttt{umut.simsekli@inria.fr}\\
  }

\maketitle

\begin{abstract}

A recent line of empirical studies has demonstrated that SGD might exhibit a heavy-tailed behavior in practical settings, and the heaviness of the tails might correlate with the overall performance. In this paper, we investigate the emergence of such heavy tails. Previous works on this problem only considered,  up to our knowledge, \emph{online} (also called single-pass) SGD, in which the emergence of heavy tails in theoretical findings is contingent upon access to an infinite amount of data. Hence, the underlying mechanism generating the reported heavy-tailed behavior in practical settings, where the amount of training data is finite, is still not well-understood. Our contribution aims to fill this gap. In particular, we show that the stationary distribution of \emph{offline} (also called multi-pass) SGD exhibits `approximate' power-law tails and the approximation error is controlled by how fast the empirical distribution of the training data converges to the true underlying data distribution in the Wasserstein metric. Our main takeaway is that, as the number of data points increases, offline SGD will behave increasingly `power-law-like'. To achieve this result, we first prove nonasymptotic Wasserstein convergence bounds for offline SGD to online SGD as the number of data points increases, which can be interesting on their own. Finally, we illustrate our theory on various experiments conducted on synthetic data and neural networks.

\end{abstract}

\section{Introduction}
\label{sec:introduction}

Many machine learning problems can be cast as the following population risk minimization problem:
\begin{align}
\text{ minimize } x \mapsto  F(x) := \mathbb{E}[f(x,Z)]  \eqsp, \quad Z \sim \mu_z\eqsp, 
\end{align}
where $x \in \rset^d$ denotes the model parameters,  $\mu_z$ is the data distribution over the measurable space $(\msz,\mcz)$ and $f : \mathbb{R}^d \times \msz \to \mathbb{R}$ is a loss function. The main difficulty in addressing this problem is that $\mu_z$ is typically unknown.
Suppose we have access to an \emph{infinite} sequence of independent and identically distributed (i.i.d.) data samples $\mathcal{D}:=\{Z_1, Z_2,\dots\}$ from $\mu_z$. In that case, we can resort to the \emph{online} stochastic gradient descent (SGD) algorithm, which is based on the following recursion: 
\begin{align}
    X_{k+1} = X_k - \frac{\eta}{b}  \sum_{i\in \Omega_{k+1}}  \nabla f(X_k, Z_{i}) \eqsp, \label{eqn:on_sgd_gen}
\end{align}
where $k$ denotes the iterations, $\Omega_k:=\{b(k-1)+1, b(k-1)+2, \ldots, b k\}$ is the batch of data-points at iteration $k$, $\eta$ denotes the step size, $b$ denotes the batch size such that $\left|\Omega_k\right|=b$, and $Z_i$ with $i\in \Omega_{k}$, denotes the $i$-th data sample at the $k$-th iteration. This algorithm is also called `single-pass' SGD, as it sees each data point $Z_i$ in the infinite data sequence $\mathcal{D}$ only once.

While drawing \iid~data samples at each iteration is possible in certain applications, in the majority of practical settings, we only have access to a finite number of data points, preventing online SGD use. More precisely, we have access to a dataset of $n$ \iid~points $\mathcal{D}_n := \{Z_1, \dots, Z_n\}$\footnote{Here we deliberately choose the same notation $Z_i$ for both infinite and finite data regimes to highlight the fact that we can theoretically view the finite data regime from the following perspective: given an infinite sequence of data points $\mathcal{D}$, the finite regime only uses \emph{the first $n$ elements} of $\mathcal{D}$, which constitute $\mathcal{D}_n$.}, and given these points the goal is then to minimize the empirical risk $\hat{F}^{(n)}$, given as follows:
\begin{align}
\text{minimize } x \mapsto  \hat{F}^{(n)}(x) := \frac1{n}\sum_{i=1}^n f(x, Z_i) \text{ over $\rset^d$} \eqsp.
\end{align}
To attack this problem, one of the most popular approaches is the \emph{offline} version of \eqref{eqn:on_sgd_gen}, which is based on the following recursion:
\begin{align}
    X_{k+1}^{(n)} = X_k^{(n)} - \frac{\eta}{b} \sum_{i\in \Omega_{k+1}^{(n)}}  \nabla f(X_k^{(n)}, Z_{i}) \eqsp, \label{eqn:off_sgd_gen}
\end{align}
where $\Omega_{k}^{(n)} \subset \{1,\dots,n\}$ denotes the indices of the (uniformly) randomly chosen data points at iteration $k$ with $|\Omega_{k}^{(n)}| = b \leq n$ and $\cdot^{(n)}$ emphasizes the dependence on the sample size $n$. Analogous to the single-pass regime, this approach is also called the \emph{multi-pass} SGD, as it requires observing the same data points multiple times.

Despite its ubiquitous use in modern machine learning applications, the theoretical properties of offline SGD have not yet been well-established. %
Among a plethora of analyses addressing this question, one promising approach has been based on the observation that parameters learned by SGD can exhibit \emph{heavy tails}. In particular, \cite{csimcsekli2019heavy} have empirically demonstrated a heavy-tailed behavior for the sequence of stochastic gradient noise:
\begin{equation*}
  \label{eq:1}
\textstyle  \parenthese{\nabla \hat{F}^{(n)}(X_k^{(n)}) - b^{-1} \sum_{i\in \Omega_{k+1}^{(n)}}  \nabla f(X_k^{(n)}, Z_{i})}_{k \geq 1} \eqsp.
\end{equation*}
\looseness=-1Since then, further studies have extended this observation to other sequences appearing in machine learning algorithms \cite{zhou2020towards,martin2019traditional, ziyin2021strength}. These results were recently extended by \cite{barsbey2021heavy}, who showed that the parameter sequence $(X_k^{(n)})_{k \geq1}$ itself can also exhibit heavy tails for large $\eta$ and small $b$. These empirical investigations all hinted a connection between the observed heavy tails and the generalization performance of SGD: heavier the tails might indicate a better generalization performance.

\looseness=-1Motivated by these empirical findings, several subsequent papers theoretically investigated how heavy-tailed behavior can emerge in stochastic optimization. 
In this context,  \cite{gurbuzbalaban2021heavy} have shown that when \emph{online SGD} is used with a quadratic loss (i.e., $f(x,z=(a,y)) =2^{-1}(a^\top x-y)^2$), the distribution of the iterates $(X_k)_{k\geq 0}$ can converge to a heavy-tailed distribution, even with exponentially light-tailed data (e.g., Gaussian). More precisely, for a fixed step-size $\eta >0$, they showed that, under appropriate assumptions, there exist constants $c\in\mathbb{R}_+$ and $\alpha>0$, such that the following identity holds:
\begin{align}
\label{intro_eqn_lin_reg}
        \lim _{t \rightarrow \infty} t^\alpha \mathbb{P}\left(\|X_{\infty}\|>t\right)=c,
\end{align}
where $X_\infty\sim \pi$ and $\pi$ denotes the stationary distribution of online SGD \eqref{eqn:on_sgd_gen}.
This result illustrates that the distribution of the online SGD iterates follows a power-law decay with the \emph{tail index} $\alpha$: $\mathbb{P}\left(\|X_{\infty}\|>t\right) \approx t^{-\alpha}$ for large $t$. Furthermore, $\alpha$ depends monotonically on the algorithm hyperparameters $\eta$, $b$, and certain properties of the data distribution $\mu_z$. 
This result is also consistent with \cite[Example 1]{durmus2021tight}, which shows a simple instance from linear stochastic approximation such that for any fixed step-size $\eta$, there exists $p_c >0$ such that for any $p \geq p_c$, $\lim_{k \to \plusinfty} \PE[\norm{X_k}^p] = \plusinfty$. 
In a concurrent study, \cite{hodgkinson2021multiplicative} showed that this property is, in fact, not specific for quadratic problems and can hold for more general loss functions and different choices of stochastic optimizers, with the constraint that \iid~data samples are available at every iteration\footnote{We will present these results in more detail in Section~\ref{prelim}.}.

Another line of research has investigated the theoretical links between heavy tails and the generalization performance of SGD. Under different theoretical settings and based on different assumptions, \cite{simsekli2020hausdorff,barsbey2021heavy,lim2022chaotic,raj2022algorithmic,hodgkinson2022generalization,raj2023algorithmic} proved generalization bounds (i.e., bounds on $|\hat{F}^{(n)}(x) -F(x)|$) illustrating that the heavy tails can be indeed beneficial for better performance. However, all these results rely on \emph{exact} heavy tails, which, to our current knowledge, can only occur in the \emph{online} SGD regime where there is access to an infinite sequence of data points. Hence the current theory (both on the emergence of heavy tails and their links to generalization performance) still falls short in terms of explaining the empirical heavy-tailed behavior observed in \emph{offline} SGD as reported in \cite{csimcsekli2019heavy,martin2019traditional,zhou2020towards,barsbey2021heavy}.

\textbf{Main problematic and contributions.} Although empirically observed, it is currently unknown how heavy-tailed behavior arises in offline SGD since the aforementioned theory \cite{gurbuzbalaban2021heavy, hodgkinson2021multiplicative} requires infinite data. Our main goal in this paper is hence to develop a theoretical framework to catch \emph{how} and \emph{in what form} heavy tails may arise in offline SGD, where the dataset is finite. 

We build our theory based on two main observations. The first observation is that, since we have finitely many data points in offline SGD, the moments (of any order) of the iterates $X_k^{(n)}$ can be bounded (see, e.g., \cite{can2019accelerated}), hence in this case we cannot expect \emph{exact} power-law tails in the form of \eqref{intro_eqn_lin_reg} in general. Our second observation, on the other hand, is that since the finite dataset $\mathcal{D}_n$ can be seen as the first $n$ elements of the infinite sequence $\mathcal{D}$, as $n$ increases the offline SGD recursion \eqref{eqn:off_sgd_gen} should converge to the online SGD recursion \eqref{eqn:on_sgd_gen} in some sense. Hence, when online SGD shows a heavy-tailed behavior (i.e., the case where $n\to +\infty$), as we increase $n$, we can expect that the tail behavior of offline SGD should be more and more `power-law-like'.

\begin{figure}%
    \centering
    \hfill {\includegraphics[width=0.49\textwidth]{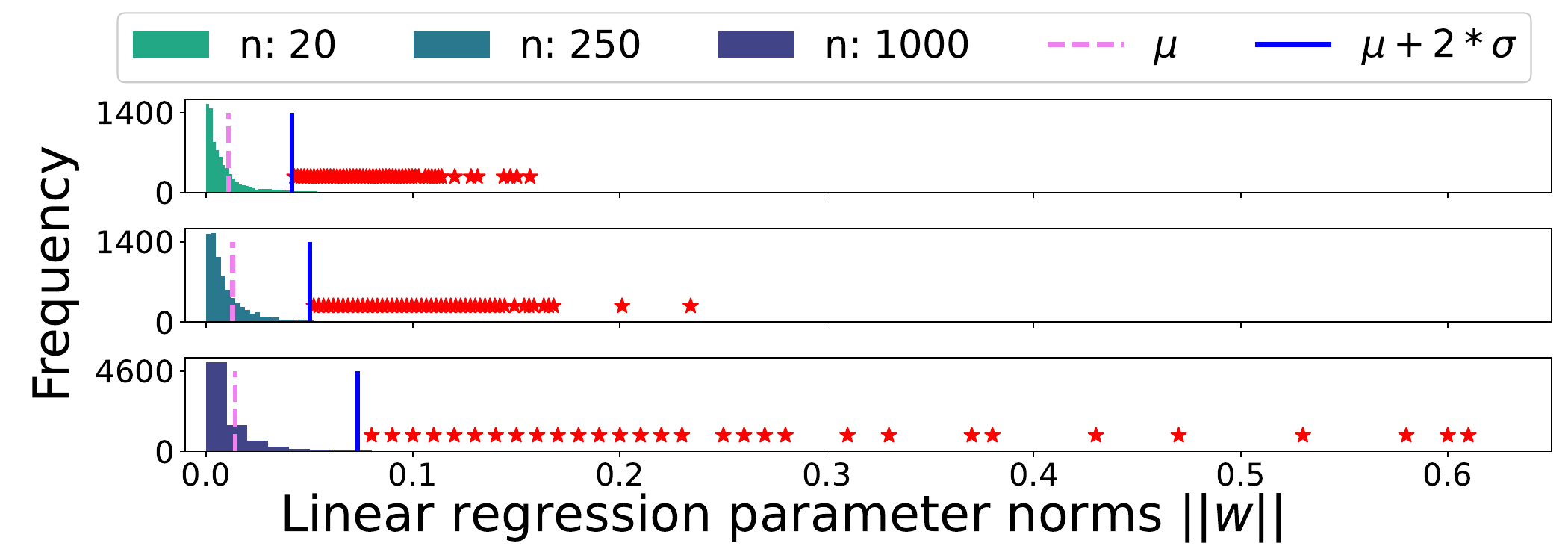} }%
    {\includegraphics[width=0.49\textwidth]{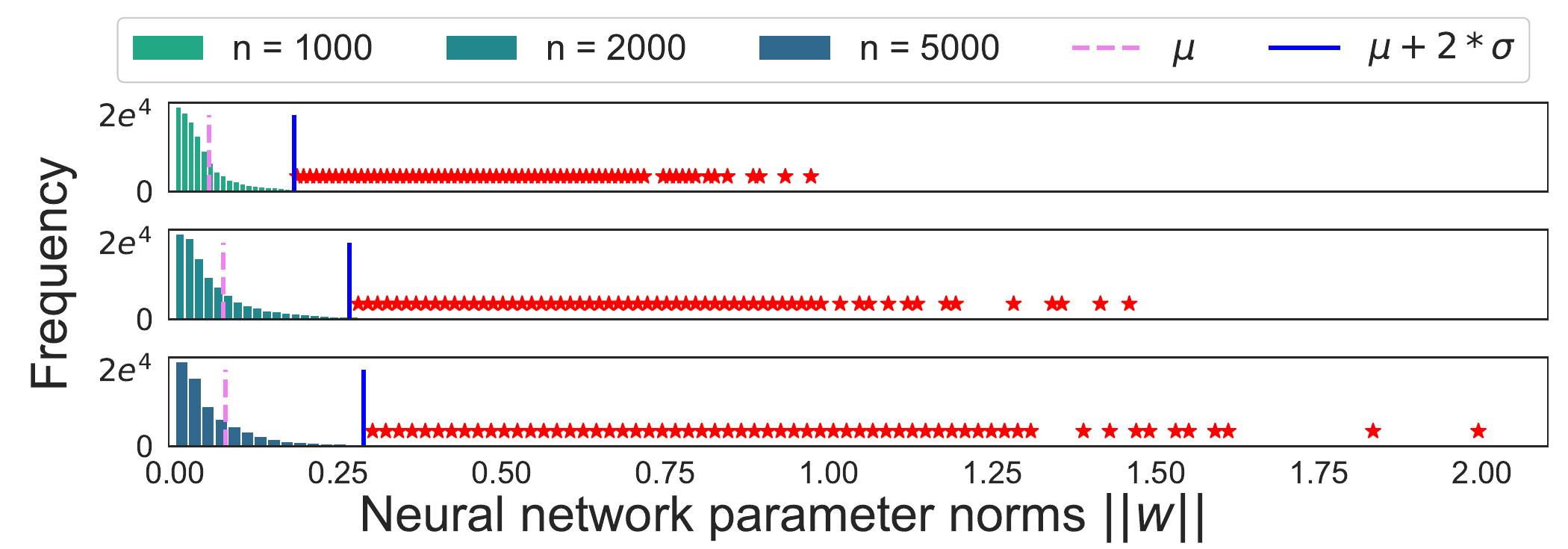} }%
    \caption{Left: histograms of the parameter norms for Gaussian data linear regression. Right: histograms of NN parameter norms for the first layer, trained on MNIST. Norms exceeding the $\mu+2\sigma$ threshold are marked with a red asterisk.}%
    \label{plt:1_hist}%
\end{figure}

\begin{wrapfigure}{r}{0.55\textwidth}
\centering
\vspace{-0.5cm}
\includegraphics[width=0.55\textwidth]{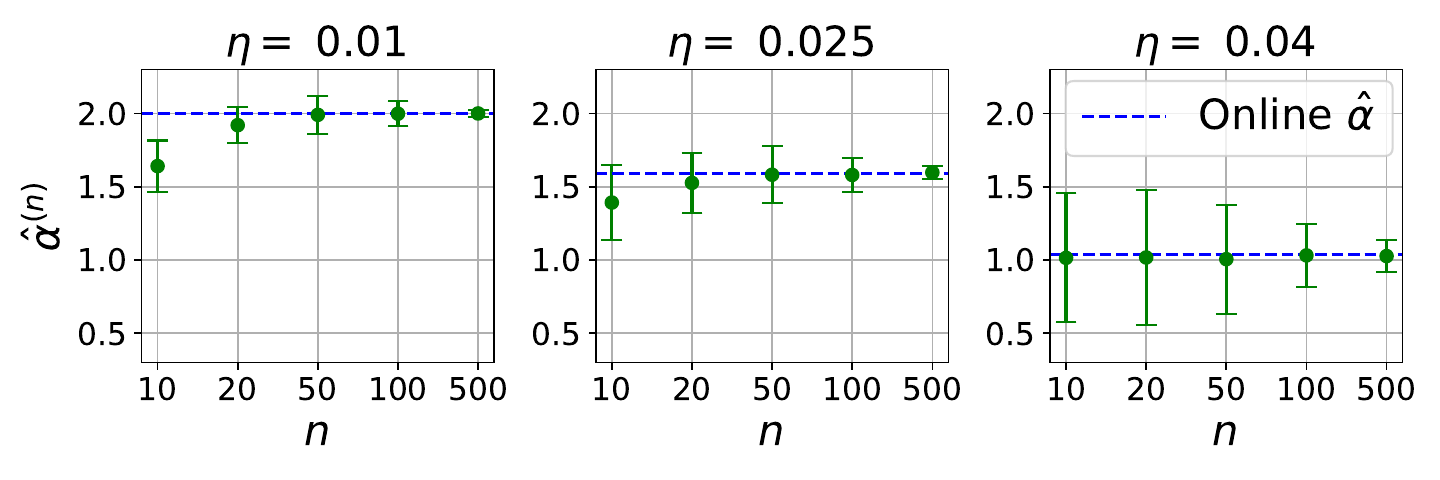}
    \vspace{-0.75cm}
  \caption{Estimated tail indices} \vspace{-0.25cm}
\label{plt:2_5_err_bar}
\end{wrapfigure}

To further illustrate this observation, as a preliminary exploration, we run offline SGD in a 100-dimensional linear regression problem, as well as a classification problem on the MNIST dataset, using a fully-connected, 3-layer neural network.\footnote{The exact experimental details can be found in Section \ref{sec:experiments}.} 

\looseness=-1In Figure \ref{plt:1_hist}, we plot the histograms of the parameter norms after $1000$ (and $5000$ resp.) iterations. We observe the following consistent patterns: \emph{(i)} the means and standard deviations of the parameters' estimated stabilizing distributions increase with $n$, and \emph{(ii)}, the quantity and magnitude of the parameters far from the bulk of the distribution increase with $n$. In other words, as we increase the number of samples $n$ in offline SGD, the behavior of the iterates becomes heavier-tailed.

As another example, we utilize the tail-estimator from \cite{gurbuzbalaban2021heavy}\footnote{The justification for the estimator choice can be found in the appendix, Sec. \ref{appx:sec_E}.} to compare the estimated tail indices $\alpha$ for offline and online SGD. The results are plotted in Figure \ref{plt:2_5_err_bar} and correspond to 10 random initializations with different step-sizes (for further details, see \nameref{para:choice_tail_estim}). We observe that, as $n$ increases, the estimated tail indices for offline SGD (shown in green) get closer to the true tail index corresponding to online SGD (the horizontal blue line).

Our main contribution in this paper is to make these observations rigorous and explicit. In particular, we extend the tail estimation results for online SGD  \cite{gurbuzbalaban2021heavy,hodgkinson2021multiplicative} to offline SGD, and show that the stationary distribution of offline SGD iterates will exhibit \emph{`approximate'} heavy tails. Informally, for both quadratic and a class of strongly convex losses, we show that, with high probability, there exist constants $c_1, c_2$, such that for large enough $t$ we have the following tail bound:  
\begin{align}
    \left[\frac{c_1}{ t^\alpha} - \frac{1}{t} \bfW_1(\mu_z, \mu_z^{(n)}) \>\right] \lesssim \mathbb{P}(\| X^{(n)}_{\infty} \|> t)  \> \lesssim \left[
    \frac{c_2}{t^\alpha}+ \frac{1}{t} \bfW_1(\mu_z, \mu_z^{(n)}) \right],
\end{align}
where $\alpha$ is the tail index of \emph{online} SGD as given in \eqref{intro_eqn_lin_reg}, $X^{(n)}_\infty \sim \pi^{(n)}$ denotes a sample from the stationary distribution $\pi^{(n)}$ of \emph{offline} SGD \eqref{eqn:off_sgd_gen},  $\mu_z^{(n)} = n^{-1} \sum_{i=1}^n \updelta_{Z_i}$ denotes the empirical measure of the data points in the data sample $\mathcal{D}_n=\{Z_1,\ldots,Z_n\}$, and $\bfW_1$ denotes the Wasserstein-1 distance (to be defined formally in the next section). 

Our result indicates that the tail behavior of offline SGD is mainly driven by two terms: (i) a persistent term $(c/t^\alpha)$ that determines the power-law decay, and (ii) a vanishing term $\bfW_1(\mu_z, \mu_z^{(n)})/t$, as we take $n \to \infty$. 
In other words, compared to \eqref{intro_eqn_lin_reg}, we can see that, with high-probability,  $X_\infty^{(n)}$ exhibits an \textit{approximate} power-law decay behavior, with a discrepancy controlled by $\bfW_1(\mu_z, \mu_z^{(n)})$. 

Fortunately, the term $\bfW_1(\mu_z, \mu_z^{(n)})$ has been well-studied in the literature. By combining our results with Wasserstein convergence theorems for empirical measures \cite{fournier2015rate}, we further present nonasymptotic tail bounds for offline SGD. 
\looseness=-1Our main takeaway is as follows: as $n$ increases, offline SGD will exhibit approximate power-laws, where the approximation error vanishes with a rate given by how fast the empirical distribution of the data points $\mu_z^{(n)}$ converges to the true data distribution $\mu_z$ as $n\to \infty$.

\looseness=-1To prove our tail bounds, as an intermediate step, we prove nonasymptotic Wasserstein convergence results for offline SGD in the form:  
$\bfW_1(\pi, \pi^{(n)}) \lesssim \bfW_1(\mu_z, \mu_z^{(n)})$, with high probability, recalling that $\pi$ and $\pi^{(n)}$ respectively denote online and offline SGD stationary distributions. These results, we believe, hold independent interest for the community. Finally, we support our theory with various experiments conducted on both synthetic quadratic optimization problems and real neural network settings.

\vspace{-3pt}
\section{Preliminaries and Technical Background}
\vspace{-3pt}

\label{prelim}

\subsection{Notation and preliminary definitions}
\looseness=-1We let $\Idd$ denote the $d \times d$ identity matrix. We denote $[n] = \{1, \dots, n \}$, $\scrP_{n,b}=\{\msi \subset [n], |\msi|=b\}$. For a vector $x\in \mathbb{R}^d$, $\|x\|$ denotes the Euclidean norm of $x$, and for matrix $\bfA$, the norm $\|\bfA\|=\sup _{\|x\|=1}\|\bfA x\|$ denote its spectral norm. We denote with $\sigma_{\min }(\bfA), \sigma_{\max }(\bfA)$ the smallest and largest singular values of $\bfA$, respectively. 
$\mathcal{P}\left(\mathbb{R}^d\right)$ is the set of all probability measures on $\mathbb{R}^d$, and $\scrL(X)$ denotes the probability law of a random variable $X$. We denote the set of all couplings of two measures $\mu$ and $\nu$ with $\Gamma(\mu, \nu)$. Finally, for two functions $f(n)$ and $g(n)$ defined on $\mathbb{R}_+$, we denote $f=\mathcal{O}(g)$, if there exist constants $c\in \mathbb{R}_+$, $n_0\in\mathbb{N}$, such that $f(n) \leq c g(n), \forall n>n_0$.
The Dirac measure concentrated at $x$ is defined as follows: for a measurable set $E$, $\delta_x(E) =1$ if $x \in E$, $\delta_x(E) =0$, otherwise.

    For $p \in[1, \infty)$, the Wasserstein-$p$ distance between two distributions $\mu$ and $\nu$ on $\mathbb{R}^d$ is defined as:
    \begin{align*}
\bfW_p(\mu, \nu):=\inf _{\gamma \in \Gamma(\mu, \nu)}\left(\int \|x- y\|^p \mathrm{~d} \gamma(x, y)\right)^{1 / p}.
    \end{align*}
We correspondingly define the Wasserstein-$p$ distance between two distributions $\mu$ and $\nu$ on $\mathbb{R}^{d\times d}$ as 
    \begin{align*}
\bfW_p(\mu, \nu):=\inf _{\gamma \in \Gamma(\mu, \nu)}\left(\int \|\bfA- \mathbf{B}\|^p \mathrm{~d} \gamma(\bfA, \mathbf{B})\right)^{1 / p}.
    \end{align*}

\subsection{Heavy tails in online SGD}
\vspace{-2pt}

\label{ht_defs}
We first formally define heavy-tailed distributions with a power-law decay, used throughout our work. 

\begin{definition}[Heavy-tailed distributions with a power-law decay]
    A random vector $X$ is said to follow a distribution with a power-law decay if $\lim_{t\to\infty}t^\alpha\mathbb{P}(\|X\| \geq t) = c_0$, for constants $c_0>0$ and $\alpha>0$, the latter known as the tail index. 
\end{definition} 
The tail index $\alpha$ determines the thickness of the tails: larger values of $\alpha$ imply a lighter tail. 

\looseness=-1Recently, \cite{gurbuzbalaban2021heavy} showed that online SGD might produce heavy tails even with exponentially light-tailed data. More precisely, they focused on a quadratic optimization problem, 
where $z \equiv (a,q) \in \mathbb{R}^{d+1}$ and $f(x,z) = 2^{-1} (a^\top x - q)^2$. With this choice, the online SGD recursion reads as follows:
\begin{align}
\label{eqn:on_sgd_lin} 
    X_{k+1} &= X_{k} - \frac{\eta}{b} \sum_{i\in\Omega_{k+1}} \left( a_{i} a_{i}^\top\ X_{k} - a_{i} q_{i} \right) = (\Idd- \eta \bfA_{k+1}) X_{k} + \eta \bbf_{k+1} 
\end{align}
where $(Z_k)_{k\geq 1} \equiv (a_k,q_k)_{k\geq 1}$, $\bfA_{k} := b^{-1} \sum_{i\in\Omega_k} a_{i} a_{i}^\top$, and $\bbf_k := b^{-1}\sum_{i\in\Omega_k} a_{i} q_{i}$.
In this setting, they proved the following theorem.
\begin{theorem}[\cite{gurbuzbalaban2021heavy}]
\label{thm:heavy_tail}
    Assume that $(a_k,q_k)_{k \geq 1}$ are \iid~random variables such that $a_{1} \sim \mathrm{N}(0, \sigma^2 \Idd)$, with $\sigma^2 >0$ and $q_i$ has a continuous density with respect to the Lebesgue measure on $\mathbb{R}$ with all its moments being finite.  In addition suppose that $\mathbb{E}[\log \|\Idd- \eta \bfA_1\|] < 0$. Then, there exists a unique $\alpha > 0$ such that $\mathbb{E}[\|\Idd- \eta \bfA_1\|^\alpha] = 1$ and the iterates $(X_k)_{k\geq0}$ in \eqref{eqn:on_sgd_lin} converge in distribution to a unique stationary distribution $\pi$ such that if $X_{\infty} \sim \pi$ and $\alpha \notin \mathbb{N}$:
    \begin{align}
    \label{eqn:ht_linear}
        \lim_{t \rightarrow \infty} t^\alpha \mathbb{P}\left(\|X_{\infty}\|>t\right)=c \eqsp, \quad  \text{ with } c\in\mathbb{R}_+ \eqsp.
    \end{align}
\end{theorem}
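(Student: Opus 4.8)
The plan is to treat the recursion \eqref{eqn:on_sgd_lin} as a multivariate \emph{random affine recursion} $X_{k+1} = M_{k+1} X_k + Q_{k+1}$, with i.i.d. coefficients $M_k := \Idd - \eta\bfA_k$ and $Q_k := \eta\bbf_k$, and to invoke the Kesten--Goldie theory for such recursions together with Goldie's implicit renewal theorem. First I would establish existence and uniqueness of the stationary law. Since the top Lyapunov exponent satisfies $\gamma := \lim_n n^{-1}\mathbb{E}[\log\|M_n\cdots M_1\|] \leq \mathbb{E}[\log\|M_1\|] < 0$ by Kingman subadditivity and the hypothesis, the backward series $\sum_{k\geq 0} M_1\cdots M_k Q_{k+1}$ converges almost surely (using $\mathbb{E}[\log^+\|Q_1\|]<\infty$, which follows from the finiteness of the moments of $q_1$ and the Gaussianity of $a_1$). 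A Letac-type backward-iteration argument then shows that $X_k$ converges in distribution to the law $\pi$ of this series and that $\pi$ is the unique invariant distribution.

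Next I would pin down the tail index. Define $\phi(s) := \mathbb{E}[\|\Idd-\eta\bfA_1\|^s]$. By Hölder's inequality $\phi$ is log-convex, with $\phi(0)=1$ and $\phi'(0) = \mathbb{E}[\log\|\Idd-\eta\bfA_1\|] < 0$. Because $a_1\sim\mathrm{N}(0,\sigma^2\Idd)$ has unbounded support, $\lambda_{\max}(\bfA_1)$ is unbounded, so $\|\Idd-\eta\bfA_1\|>1$ on an event of positive probability; hence $\phi(s)\to\infty$ as $s\to\infty$. Strict convexity then produces a unique $\alpha>0$ with $\phi(\alpha)=1$.

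The core step is the tail asymptotics, obtained by applying the vector-valued Kesten--Goldie / implicit renewal theorem for products of random matrices (Kesten 1973; Goldie 1991, in their modern multivariate form). This requires verifying: (i) irreducibility and proximality of the semigroup generated by the support of $M_1$, where the rotational invariance and absolute continuity induced by the isotropic Gaussian law of $a_1$ ensure a sufficiently rich support and a unique Furstenberg stationary measure on the sphere; (ii) the moment and regularity conditions $\mathbb{E}[\|M_1\|^\alpha\log^+\|M_1\|]<\infty$ and $\mathbb{E}[\|Q_1\|^\alpha]<\infty$, again from Gaussian tails and the finite moments of $q_1$; and (iii) non-arithmeticity of the law of $\log\|M_n\cdots M_1\|$, which holds since this law is spread out by Gaussianity. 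The implicit renewal theorem then yields the directional (spherical) decomposition of the tail and the limit $\lim_{t\to\infty} t^\alpha\mathbb{P}(\|X_\infty\|>t)=c$; the restriction $\alpha\notin\mathbb{N}$ is what guarantees that the limiting constant $c$ is strictly positive, by avoiding degenerate cancellations tied to integer homogeneity of the spectral measure.

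I expect the main obstacle to be the verification of the proximality and irreducibility hypotheses and, above all, the strict positivity $c>0$ of the tail constant. Goldie's renewal argument readily delivers the \emph{existence} of the limit $c\geq 0$, but ruling out $c=0$ is delicate: this is precisely where the non-integrality of $\alpha$ and the non-degeneracy of the stationary measure on the sphere (driven by the isotropic Gaussian data) must be exploited.
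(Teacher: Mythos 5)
A preliminary point of fact: the paper does not prove this statement at all. Theorem~\ref{thm:heavy_tail} is imported verbatim from \cite{gurbuzbalaban2021heavy} as background, and the appendix proves only the paper's own results (Theorems~\ref{thm:ours_1}--\ref{thm:ours_4}). So your proposal can only be measured against the proof in the cited source, and at the level of architecture it reconstructs that proof correctly: view \eqref{eqn:on_sgd_lin} as a random affine recursion $X_{k+1}=M_{k+1}X_k+Q_{k+1}$ with $M_k=\Idd-\eta\bfA_k$, $Q_k=\eta\bbf_k$; get existence, uniqueness and convergence from Letac's backward-iteration principle under a negative top Lyapunov exponent; get a unique root $\alpha$ from log-convexity of the moment function; and get the power law from a multivariate Kesten--Goldie renewal theorem, with the Gaussian law supplying density, irreducibility and non-arithmeticity.

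There is, however, a genuine gap in your exponent identification. The matrix Kesten theorem characterizes the tail index as the root of $\kappa(s)=1$, where $\kappa(s)=\lim_n\left(\mathbb{E}\left[\|M_n\cdots M_1\|^s\right]\right)^{1/n}$, \emph{not} of $\phi(s):=\mathbb{E}[\|M_1\|^s]=1$. By submultiplicativity and independence, $\kappa(s)\le\phi(s)$, in general strictly, so the $\alpha$ you construct in your second step is a priori only a bound on the exponent that the renewal theorem actually delivers; your third step silently equates the two. The missing ingredient --- and the place where Gaussianity does real work beyond ``rich support'' --- is isotropy: $OM_1O^\top \stackrel{d}{=} M_1$ for every orthogonal $O$, so writing $\|M_n\cdots M_1u\|$ as a telescoping product of ratios shows that it is distributed as a product of $n$ i.i.d.\ copies of $\|M_1u\|$ for a fixed unit vector $u$, whence $\kappa(s)=\mathbb{E}[\|M_1u\|^s]$ exactly. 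This identity is what allows \cite{gurbuzbalaban2021heavy} to identify $\alpha$ exactly for Gaussian data (and is why, without Gaussianity, they only obtain bounds on $\alpha$, as the paper's footnote after Theorem~\ref{thm:heavy_tail} records). Note also that the resulting moment condition involves $\|M_1u\|$ rather than the operator norm, so this step is needed even to reconcile your $\alpha$ with the one appearing in the tail asymptotics.

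Separately, you misattribute the role of $\alpha\notin\mathbb{N}$. The renewal theorem gives, for every unit vector $u$, $\lim_{t\to\infty}t^\alpha\mathbb{P}(u^\top X_\infty>t)=e_\alpha(u)$, and the positivity of this limit comes from non-degeneracy of the stationary law --- integer $\alpha$ or not. Non-integrality is used only to pass from these directional tails to the tail of the norm $\|X_\infty\|$ in \eqref{eqn:ht_linear}: the equivalence between regular variation of all one-dimensional projections and regular variation of the norm can fail at integer indices. This is precisely the content of the paper's own footnote and its pointer to \cite[Theorem C.2.1]{buraczewski2016stochastic}; it is not what rules out $c=0$.
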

The authors of \cite{gurbuzbalaban2021heavy} also showed that the tail index $\alpha$ is monotonic with respect to the algorithm hyperparameters $\eta$ and $b$: a larger $\eta$ or smaller $b$ indicates heavier tails, i.e., smaller $\alpha$\footnote{Without the Gaussian data assumption, \cite{gurbuzbalaban2021heavy} proved lower-bounds over $\alpha$ instead of the exact identification of $\alpha$ as given in \eqref{eqn:ht_linear}. We also note that the condition $\alpha \notin \mathbb{N}$ is not required in \cite{gurbuzbalaban2021heavy}: without such a condition, one can show that, for any $u \in \mathbb{R}^d$ with $\|u\|=1$, $\lim_{t \rightarrow \infty} t^\alpha \mathbb{P}\left(u^\top X_{\infty}>t\right)$ has a non-trivial limit. For the clarity of the presentation, we focus on the tails of the norm $\|X_{\infty}\|$, hence use a non-integer $\alpha$, which leads to \eqref{eqn:ht_linear} (for the equivalence of these expressions, see \cite[Theorem C.2.1]{buraczewski2016stochastic}).   }.

In a concurrent study, \cite{hodgkinson2021multiplicative} considered a more general class of loss functions and choices of stochastic optimization algorithms. They proved a general result, where we translate it for online SGD applied on strongly convex losses in the following theorem.
\begin{theorem}[\cite{hodgkinson2021multiplicative}]
  \label{thm:hodgkinson}
  Assume that for every $z \in \msz$, $f(\cdot,z)$ is twice differentiable and is strongly convex. 
Consider the recursion in \eqref{eqn:on_sgd_gen} with a sequence of \iid~random variables $(Z_k)_{k\geq 1}$  and let $R$ and $r$ be two non-negative functions defined as: for any $z \in\msz$,
\begin{align*}
  R(z) :=\sup\nolimits_{x \in \mathbb{R}^d}\left\|\Idd-\eta  \nabla^2 f(x, z)\right\| , \text { and } 
r(z) :=\liminf \nolimits_{\|x\| \rightarrow \infty} \sigma_{\min }\left(\Idd-\eta  \nabla^2 f(x, z)\right)\eqsp.
\end{align*}
 Further assume that $\PE[R(Z_1) + \|\nabla f(\xstar,Z_1) \|] < \plusinfty$,  for some $\xstar \in \mathbb{R}^d$, $\mathbb{E}[\log R(Z_1)] < 0$, and $\mathbb{P}(r(Z_1) >1) >0$. 
Then, the iterates $(X_k)_{k\geq1}$ in \eqref{eqn:on_sgd_gen} 
admit a stationary distribution $\pi$.  
Moreover, there exist $\alpha, \beta>0$ such that $\mathbb{E}[r(Z)^\alpha] = 1$, $\mathbb{E}[R(Z)^\beta] = 1$, and for any $\varepsilon >0$: 
\begin{align*}
    \limsup _{t \rightarrow \infty} t^{\alpha+\varepsilon} \mathbb{P}\left(\left\|X_{\infty}\right\|>t\right) &>0, \quad \text { and } 
    \quad \limsup _{t \rightarrow \infty} t^{\beta-\varepsilon} \mathbb{P}\left(\left\|X_{\infty}\right\|>t\right) <+\infty \eqsp,
\end{align*}
where $X_{\infty} \sim \pi$.
\end{theorem}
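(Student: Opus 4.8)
The plan is to recognize the recursion \eqref{eqn:on_sgd_gen} as an iterated random Lipschitz system and then import the tail theory for such systems (in the spirit of Kesten--Goldie and its extensions to Lipschitz recursions). Write $\psi_k(x) := x - (\eta/b)\sum_{i\in\Omega_k}\nabla f(x,Z_i)$, so that $X_{k+1} = \psi_{k+1}(X_k)$ with the $\psi_k$ i.i.d.\ random maps. By twice differentiability and convexity, each $\psi_k$ is globally Lipschitz with constant $L_k := \sup_{x}\|\Idd - (\eta/b)\sum_{i\in\Omega_k}\nabla^2 f(x,Z_i)\| \le (1/b)\sum_{i\in\Omega_k} R(Z_i)$, while the smallest singular value of the same Jacobian read off at infinity shows the maps are eventually expanding with a lower modulus governed by $r$. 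First I would establish existence and uniqueness of $\pi$: the hypothesis $\PE[\log R(Z_1)] < 0$ yields a negative top Lyapunov exponent for the backward iterates $\psi_1\circ\cdots\circ\psi_k$, so by the standard contraction-on-average argument (Diaconis--Freedman/Letac) these converge almost surely to a limit whose law is the unique stationary $\pi$, and $\PE[\|\nabla f(\xstar,Z_1)\|]<\plusinfty$ guarantees integrability of the affine part so that the limit is finite a.s.

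Next I would produce $\alpha$ and $\beta$. The maps $s\mapsto \PE[R(Z)^s]$ and $s\mapsto \PE[r(Z)^s]$ are log-convex (H\"older), equal $1$ at $s=0$, and have derivatives $\PE[\log R(Z)]<0$ and $\PE[\log r(Z)]\le \PE[\log R(Z)]<0$ there; since $\PP(r(Z_1)>1)>0$ forces both functions to diverge to $+\infty$ (using $r\le R$, so $\PP(R>1)>0$ as well), the intermediate value theorem furnishes unique $\alpha,\beta>0$ with $\PE[r(Z)^\alpha]=\PE[R(Z)^\beta]=1$, and $r\le R$ pointwise gives $\alpha\le\beta$. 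For the upper tail bound (index $\beta$) I would dominate the norm process: Lipschitzness gives $\|\psi_k(x)\|\le L_k\|x\| + \|\psi_k(0)\|$, so $\|X_k\|$ is stochastically dominated by the affine recursion $W_{k+1} = L_{k+1}W_k + C_{k+1}$ with $C_k := \|\psi_k(0)\|$, whose stationary law $W_\infty$ solves the affine stochastic fixed point equation. Since $\PE[L^s] < 1$ for every $s<\beta$ and the integrability assumption controls the moments of $C$, the Vervaat/Brandt moment bound gives $\PE[\|X_\infty\|^s]\le \PE[W_\infty^s]<\plusinfty$ for all $s<\beta$, and Markov's inequality then yields $\PP(\|X_\infty\|>t)=\mathcal{O}(t^{-(\beta-\varepsilon)})$, i.e.\ $\limsup_t t^{\beta-\varepsilon}\PP(\|X_\infty\|>t)<\plusinfty$.

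The genuinely hard part is the lower tail bound (index $\alpha$), certifying the tails are at least as heavy as $t^{-(\alpha+\varepsilon)}$. The difficulty is that $r$ is a liminf \emph{at infinity}, so expansion by a factor $\ge r$ is only asymptotic and the clean Kesten--Goldie minorization does not apply directly. I would localize: fix $\delta>0$ and $M$ large so that $\sigma_{\min}(\Idd-\eta\nabla^2 f(x,z))\ge r(z)-\delta$ whenever $\|x\|>M$, restrict attention to a favorable direction, and build a one-dimensional multiplicative minorant active only while the process stays in the expanding region $\{\|x\|>M\}$. A regeneration/implicit-renewal argument (in the spirit of Goldie's theorem, together with a matching lower bound) applied to this killed multiplicative random walk---whose multiplier has index $\alpha$ since $\PE[(r-\delta)^\alpha]\to 1$ as $\delta\to 0$---then shows that $\pi$ places at least a power-law amount of mass beyond $t$, giving $\limsup_t t^{\alpha+\varepsilon}\PP(\|X_\infty\|>t)>0$; the loss from $\alpha$ to $\alpha+\varepsilon$ absorbs the $\delta$-approximation and the cost of confining the walk to the expanding region. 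Controlling this escape-and-return mechanism, and verifying that time spent in the non-expanding core $\{\|x\|\le M\}$ does not destroy the power law, is where the main work lies.
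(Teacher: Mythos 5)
The paper itself contains no proof of Theorem~\ref{thm:hodgkinson}: it is imported verbatim from \cite{hodgkinson2021multiplicative} as background, so your attempt can only be compared against that source's argument. At the level of architecture, your proposal is a reconstruction of exactly that route: cast the update \eqref{eqn:on_sgd_gen} as an iterated random Lipschitz map, obtain existence and uniqueness of $\pi$ from contraction-on-average (Diaconis--Freedman/Letac), and obtain the two exponents from Kesten--Goldie/Alsmeyer-type implicit renewal theory, with the upper index governed by the Lipschitz constant $R$ and the lower index by the expansion-at-infinity $r$. So this is not a genuinely different approach; it is the same one, with the hardest step deliberately left as a sketch.

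Within that blueprint there are concrete problems. First, your side claim that $r\le R$ gives $\alpha\le\beta$ is backwards: since $r\le R$ pointwise, $\PE[r(Z)^\beta]\le\PE[R(Z)^\beta]=1$, and since $s\mapsto\PE[r(Z)^s]$ is convex, equals $1$ at $s=0$ and $s=\alpha$, and diverges, this forces $\beta\le\alpha$. The same inequality is forced by consistency of the two tail bounds you are proving (a tail that exceeds $c\,t^{-\alpha-\varepsilon}$ along a subsequence while being $O(t^{-\beta+\varepsilon})$ everywhere requires $\beta\le\alpha+2\varepsilon$ for every $\varepsilon$), and it is what the paper itself uses later (``$\beta<\alpha$'' in the proof of Theorem~\ref{thm:ours_3}). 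Second, your upper-bound argument has a real gap: bounding $\|X_\infty\|$ by the stationary solution of $W_{k+1}=L_{k+1}W_k+C_{k+1}$ and claiming $\PE[W_\infty^s]<\infty$ for all $s<\beta$ requires $\PE[C^s]<\infty$ for those $s$, whereas the stated hypotheses give only a \emph{first} moment of $\|\nabla f(\xstar,Z_1)\|$ (hence of $C$); if $\beta>1$, your argument as written only yields exponent $\min(\beta,1)$. The tails of the additive part genuinely matter (take $L\equiv 1/2$, so $\beta=\infty$, and $C$ with tail $t^{-3/2}$: then $W_\infty$ has tail index $3/2$), and Goldie-type theorems correspondingly assume $\PE[|C|^\beta]<\infty$; this hypothesis has to be added, not conjured. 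Third, a minor point: for batch size $b>1$, $\PE[\log L_k]<0$ does not follow from $\PE[\log R(Z_1)]<0$, since $L_k\le b^{-1}\sum_{i\in\Omega_k}R(Z_i)$ and Jensen runs the wrong way on the logarithm of an average; your ergodicity step is secure only for $b=1$ (which is, incidentally, the only case the paper invokes). Finally, the lower bound with index $\alpha$ --- which you correctly identify as the crux, because $r$ is only a liminf at infinity and the clean Kesten--Goldie minorization does not apply --- is deferred rather than executed. As it stands, the proposal is a faithful and well-motivated blueprint of the cited proof, but not a complete proof.
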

The result illustrates that power laws can arise in general convex stochastic optimization algorithms. We note that in general, Theorem~\ref{eqn:on_sgd_gen} does not require \emph{each} $f(\cdot,z)$ to be strongly convex, in fact, it can accommodate non-convex functions as well. However, we are not aware of any popular non-convex machine learning problem that can satisfy all the assumptions of Theorem~\ref{eqn:on_sgd_gen}. Nevertheless, for better illustration, in Section~\ref{sec:example_problem}, we show that an $\ell_2$-regularized logistic regression problem with random regularization coefficients falls into the scope of Theorem~\ref{eqn:on_sgd_gen}.

The two aforementioned theorems are limited to the online setting, and it is unclear whether they apply to a finite number of data points $n$. In the subsequent section, we aim to extend these theorems to the offline setting.

\vspace{-5pt}
\section{Main results}
\vspace{-5pt}

\label{sec:main_results}

\subsection{Quadratic objectives}
\vspace{-3pt}

\label{sec:quadr_opt}

We first focus on the quadratic optimization setting considered in Theorem~\ref{thm:heavy_tail} as its proof is simpler and more instructive. Similarly as before, let $(a_i,q_i)_{i \geq 1}$ be \iid~random variables in $\rset^{d+1}$, such that $z_i \equiv (a_i,q_i)$. 
Following the notation from  \eqref{eqn:off_sgd_gen}, we can correspondingly define the \emph{offline} SGD recursion as follows:
\begin{align}
\label{eqn:off_sgd_lin} 
    X_{k+1}^{(n)} = X_{k}^{(n)} - \frac{\eta}{b} \sum_{i\in \Omega_{k+1}^{(n)}} \left( a_{i} a^{\top}_{i} X_{k}^{(n)} - a_i q_{i} \right) = (\Idd- \eta \bfA_{k+1}^{(n)}) X_{k}^{(n)} + \eta \bbf_{k+1}^{(n)} ,
\end{align}
where $\bfA_{k}^{(n)} := b^{-1} \sum_{i\in\Omega_k^{(n)}} a_{i} a^{\top}_{i} \eqsp, \> \bbf_k^{(n)} := b^{-1}\sum_{i\in\Omega_k^{(n)}} a_{i} q_{i} \eqsp,$ and $\Omega_k^{(n)}$ is as defined in \eqref{eqn:off_sgd_gen}.  
Now, $(\bfA_k^{(n)},\bbf_k^{(n)})_{k \geq 1}$ are \iid~random variables with respect to the empirical measure:
\begin{equation}
  \label{eq:2}
\mu_{\bfA,\bbf}^{(n)}  =  \binom{n}{b}^{-1} \sum_{i=1}^{\binom{n}{b}} \updelta_{\{\overline{\bfA}_{i}^{(n)},\overline{\bbf}_{i}^{(n)}\}}\eqsp,
\end{equation}

where we enumerate all possible choices of minibatch indices $\scrP_{n,b}=\{\msi\subset\{1,\dots,n\}: |\msi|=b\}$ as $\scrP_{n,b}=\{\mss_1^{(n)},\mss_2^{(n)},\dots, \mss_{\binom{n}{b}}^{(n)}\}$ and $(  \overline{\bfA}_{i}^{(n)}, \overline{\bbf}_{i}^{(n)}\}_{i=1}^{\binom{n}{b}}$ are \iid~random variables defined as follows:
\begin{equation}
  \label{eq:3}
  \overline{\bfA}_{i}^{(n)}  = \frac1{b} \sum_{j\in\mss_i^{(n)}}  a_{j} a^{\top}_{j} \eqsp, \quad  \overline{\bbf}_{i}^{(n)}  = \frac1{b} \sum_{j\in\mss_i^{(n)}}  a_{j} q_{j} \eqsp.
\end{equation}
We denote by $\muAb$ the common distribution of $(  \overline{\bfA}_{i}^{(n)}, \overline{\bbf}_{i}^{(n)})_{i=1}^{\binom{n}{b}}$. Note that it does not depend on $n$ but only $b$.

With these definitions at hand, we define the two marginal measures of $\muAb$ and $\munAb$ with $\muA$, $\mub$, and $\munA$, $\munb$, respectively. 
Before proceeding to the theorem, we have to define Wasserstein ergodicity.
A discrete-time Markov chain $(Y_k)_{k\geq 0}$ is said to be (Wasserstein-1) geometrically ergodic if it admits a stationary distribution $\pi_Y$ and there exist constants $c \geq 0$ and $\rho \in (0,1)$, such that
    \begin{align}
\bfW_1(\scrL(Y_k), \pi_Y)\leq c \rme^{-\rho k} \eqsp, \text{ for any } k \geq 0 \eqsp.
    \end{align}
The chain is simply called (Wasserstein-1) ergodic if $\lim_{k\to \infty}\bfW_1(\scrL(Y_k), \pi_Y) =0$.

We can now state our first main contribution, an extension of Theorem \ref{thm:heavy_tail} to the offline setting: 
\begin{theorem}
\label{thm:ours_1}
    Let $n \geq 1$ and $\epsilon_n \in [0,1]$ be the probability that $(X^{(n)}_k)_{k\geq 0}$ is not ergodic (in the Wasserstein sense)\footnote{Since we are assuming Gaussian data, $\epsilon_n$ will converge to zero with an exponential rate. However, we do not focus on explicit calculations of this quantity as it is rather orthogonal to our problematic.} with a stationary distribution $\pi^{(n)}$ having finite $q$-th moment with $q>1$. 
    Assume that the conditions of Theorem~\ref{thm:heavy_tail} hold with $\alpha>1$. Then, for any $\epsilon>0$, there exist constants $\Tilde{c_1}, \Tilde{c_2}$, and $t_0 >0$ such that for all $\zeta \in (0,1]$, $t>t_0$,  with probability larger than $1-\epsilon_n-\zeta$, the following inequalities hold: 
    \begin{align}
    \label{eqn:thm_lin}
    \Biggl[\frac{1}{2^\alpha}
    \frac{c-\epsilon}{ t^\alpha} - \frac{\Tilde{c_1}}{tn^{1/2}}\sqrt{\log\frac{\Tilde{c_2}}{\zeta}}\Biggr] \leq \mathbb{P}(\| X^{(n)}_{\infty} \|> t) \> \leq \Biggl[2^\alpha\frac{c+\epsilon}{t^\alpha}+ \frac{2\Tilde{c_1}}{tn^{1/2}}\sqrt{\log\frac{\Tilde{c_2}}{\zeta}}\Biggr] \eqsp,
    \end{align}
    where $c$ is given in \eqref{eqn:ht_linear}.
\end{theorem}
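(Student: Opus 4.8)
The plan is to transport the \emph{exact} power law of online SGD (Theorem~\ref{thm:heavy_tail}) to the offline iterates by controlling the gap between the two stationary laws in $\bfW_1$, and then converting this $\bfW_1$ bound into a tail estimate. Throughout I work on the event $\{(X^{(n)}_k)_{k\ge0}\text{ is ergodic}\}$, which has probability at least $1-\epsilon_n$ and on which $\pi^{(n)}$ exists with finite first moment (since $q>1$). \textbf{Step 1 (from $\bfW_1$ to tails).} Let $(X_\infty,X^{(n)}_\infty)$ be an optimal $\bfW_1$-coupling of $\pi$ and $\pi^{(n)}$, so $\mathbb{E}\|X_\infty-X^{(n)}_\infty\|=\bfW_1(\pi,\pi^{(n)})<\infty$ (finite because $\alpha>1$ forces $\mathbb{E}\|X_\infty\|<\infty$). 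The inclusions $\{\|X^{(n)}_\infty\|>t\}\subseteq\{\|X_\infty\|>t/2\}\cup\{\|X_\infty-X^{(n)}_\infty\|>t/2\}$ and $\{\|X_\infty\|>2t\}\cap\{\|X_\infty-X^{(n)}_\infty\|\le t\}\subseteq\{\|X^{(n)}_\infty\|>t\}$, combined with Markov's inequality $\mathbb{P}(\|X_\infty-X^{(n)}_\infty\|>s)\le \bfW_1(\pi,\pi^{(n)})/s$, give
\begin{align*}
\mathbb{P}(\|X_\infty\|>2t)-\tfrac{1}{t}\bfW_1(\pi,\pi^{(n)})\;\le\;\mathbb{P}(\|X^{(n)}_\infty\|>t)\;\le\;\mathbb{P}(\|X_\infty\|>t/2)+\tfrac{2}{t}\bfW_1(\pi,\pi^{(n)}).
\end{align*}
Theorem~\ref{thm:heavy_tail} yields $(c-\epsilon)s^{-\alpha}\le\mathbb{P}(\|X_\infty\|>s)\le(c+\epsilon)s^{-\alpha}$ for all $s$ past some threshold; inserting $s=2t$ and $s=t/2$ produces exactly the dyadic factors $2^{-\alpha}$ and $2^{\alpha}$ of \eqref{eqn:thm_lin}. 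It thus remains to prove $\bfW_1(\pi,\pi^{(n)})\lesssim n^{-1/2}\sqrt{\log(\tilde c_2/\zeta)}$ with high probability.

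\textbf{Step 2 (contraction).} Both $\pi$ and $\pi^{(n)}$ are stationary for affine recursions driven respectively by $\muAb$ and $\munAb$. Run two chains started from $\pi$ and $\pi^{(n)}$, optimally $\bfW_1$-coupled, and at each step draw the driving pairs from the optimal $\bfW_1$-plan between $\muAb$ and $\munAb$, independently across steps. Writing the one-step difference as $(\Idd-\eta\bfA_1)(X_0-X^{(n)}_0)+\eta(\bfA^{(n)}_1-\bfA_1)X^{(n)}_0+\eta(\bbf_1-\bbf^{(n)}_1)$, taking norms, and using independence of the increment noise from the current states, one obtains
\begin{align*}
\bfW_1(\pi,\pi^{(n)})\;\le\;\rho\,\bfW_1(\pi,\pi^{(n)})+\eta\,M_n\,\bfW_1(\muAb,\munAb),\qquad \rho:=\mathbb{E}\|\Idd-\eta\bfA_1\|,
\end{align*}
where $M_n=\mathbb{E}\|X^{(n)}_\infty\|$. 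The crucial point is $\rho<1$: the map $s\mapsto\mathbb{E}\|\Idd-\eta\bfA_1\|^s$ is convex, equals $1$ at $s=0$ and (by Theorem~\ref{thm:heavy_tail}) at $s=\alpha$, and has negative slope at $0$ since $\mathbb{E}[\log\|\Idd-\eta\bfA_1\|]<0$; as $\alpha>1$, evaluating at $s=1$ gives $\rho<1$. This is precisely why the hypothesis $\alpha>1$ is imposed. Hence $\bfW_1(\pi,\pi^{(n)})\le\tfrac{\eta M_n}{1-\rho}\bfW_1(\muAb,\munAb)$, with $M_n$ bounded uniformly in $n$ (w.h.p.) because $\pi^{(n)}$ approaches the first-moment-finite $\pi$.

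\textbf{Step 3 (data concentration) and conclusion.} It remains to bound $\bfW_1(\muAb,\munAb)$. Since $\munAb$ is generated by the i.i.d.\ Gaussian-derived samples $(a_i,q_i)_{i\le n}$, the transport cost to the true batch law $\muAb$ is governed by the fluctuations of the empirical moments $n^{-1}\sum_i a_i a_i^\top$ and $n^{-1}\sum_i a_i q_i$ on which the recursion depends; their sub-exponential concentration yields, with probability at least $1-\zeta$, a bound $\bfW_1(\muAb,\munAb)\le\tilde c_1\,n^{-1/2}\sqrt{\log(\tilde c_2/\zeta)}$. Intersecting this event with the ergodicity event and substituting back through Steps 2 and 1 delivers \eqref{eqn:thm_lin} on an event of probability at least $1-\epsilon_n-\zeta$.

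I expect the main obstacle to be Step 3: obtaining the \emph{dimension-robust} $n^{-1/2}$ rate together with the sub-Gaussian $\sqrt{\log(1/\zeta)}$ deviation. A naive empirical-Wasserstein bound (à la Fournier--Guillin) degrades as $n^{-1/d}$ in high dimension, so the argument must exploit the Gaussianity of the data so that only the relevant empirical second-moment and cross-moment statistics enter, with the ambient dimension absorbed into the constant $\tilde c_1$. A secondary technical point is the uniform-in-$n$ control of $M_n=\mathbb{E}\|X^{(n)}_\infty\|$, which is needed to keep $\tilde c_1$ independent of $n$; this should follow from the one-step moment contraction (again using $\rho<1$) applied to the offline chain on the ergodic event.
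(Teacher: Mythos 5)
Your Steps 1 and 2 are essentially correct, and Step 1 is in fact a genuinely different route from the paper's: you convert $\bfW_1(\pi,\pi^{(n)})$ into tail bounds via an optimal coupling, set inclusions, and Markov's inequality, whereas the paper goes through Lemma~\ref{rev_tri_lemma} and the one-dimensional CDF-integral formula of Lemma~\ref{lem:wassertein1d}, then integrates the tail asymptotics over dyadic intervals $[t',2t']$; both yield the same $2^{\pm\alpha}$ factors, and yours is arguably more elementary. Your Step 2, including the convexity argument showing $\delta_A=\mathbb{E}\|\Idd-\eta\bfA_1\|<1$ when $\alpha>1$, also matches the paper in substance. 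The problem is that your contraction deliberately keeps the Wasserstein-$1$ distance $\bfW_1(\muAb,\munAb)$, and that choice makes Step 3 unprovable.

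The genuine gap is Step 3: the claimed bound $\bfW_1(\muAb,\munAb)\le \tilde c_1\, n^{-1/2}\sqrt{\log(\tilde c_2/\zeta)}$ is \emph{false} whenever $d\ge 2$, and no appeal to Gaussianity can repair it. Take $b=1$ (the theorem covers this case): $\munAb$ is purely atomic with $n$ atoms, while $\muAb$ is absolutely continuous on the $(d+1)$-dimensional manifold $\{(aa^\top,aq): (a,q)\in\mathbb{R}^{d+1}\}$. By the standard quantization lower bound, any $N$-atom measure is at $\bfW_1$-distance at least of order $N^{-1/D}$ from a measure carrying mass on a $D$-dimensional set, so $\bfW_1(\muAb,\munAb)\gtrsim n^{-1/(d+1)}\gg n^{-1/2}$ for $d\ge2$ (the general-$b$ case, with $\binom{n}{b}\le n^b$ atoms and a $b(d+1)$-dimensional support, gives the same rate). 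The underlying misconception is that $\bfW_1$ closeness could follow from closeness of the empirical second and cross moments: $\bfW_1$ is a full-distribution transport distance, and matching finitely many moments to accuracy $O(n^{-1/2})$ says nothing about the cost of transporting a continuous law onto finitely many atoms. So your own worry about the curse of dimensionality was exactly right, but the fix you propose does not exist, and the rate in \eqref{eqn:thm_lin} cannot be reached along your route.

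The paper's proof is built precisely to avoid ever bounding $\bfW_1(\muAb,\munAb)$. In the contraction step (Theorem~\ref{thm:ours_2}) the cross term $\mathbb{E}[\|X^{(n)}_i\|\,\|\bfA_{i+1}-\bfA^{(n)}_{i+1}\|]$ is split by H\"older's inequality with conjugate exponents $q<\alpha$ and $p=q/(q-1)$; since $\alpha>1$, one may take $q$ close to $1$ and hence $p>d/2$. The data-dependent quantity that survives is therefore $\bfW_p(\muAb,\munAb)$ with this \emph{large} exponent, paid for by the $q$-th moment of $\pi^{(n)}$. This is then reduced (Minkowski plus generalized H\"older applied to $a_1a_1^\top-a_1^{(n)}a_1^{(n)\top}$) to the data-level distance $\bfW_{2p}(\mu_a,\mu_a^{(n)})$ of the empirical measure of the Gaussian features themselves, and Lemma~\ref{thm:fournier} is invoked in its exponent-$>d/2$ regime, where the stated deviation function is $\exp(-cnx^2)$; that regime is what produces the $n^{-1/2}\sqrt{\log(\tilde c_2/\zeta)}$ term, and it is simply unavailable at exponent $1$, where the deviation function degrades to $\exp(-cnx^{d})$. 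This H\"older/high-order-Wasserstein device — the reason Theorem~\ref{thm:ours_2} is stated with $\bfW_p$, $p>d/2$, $q<\alpha$ rather than with $\bfW_1$ — is the key idea missing from your proposal.
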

\looseness=-1This result shows that whenever the online SGD recursion admits heavy tails, the offline SGD recursion will exhibit ‘approximate’ heavy tails. More precisely, as can be seen in \eqref{eqn:thm_lin}, the tails will have a global power-law behavior due to the term $t^{-\alpha}$, where $\alpha$ is the tail index determined by online SGD. On the other hand, the power-law behavior is only approximate, as we have an additional term, which vanishes at a rate $\mathcal{O}(n^{-1/2})$. Hence, as $n$ increases, the power-law behavior will be more prominent, which might bring an explanation for why heavy tails are still observed even when $n$ is finite. 

To establish Theorem~\ref{thm:ours_1}, as an intermediate step, we show that $\pi^{(n)}$ converges to $\pi$ in the Wasserstein-1 metric. The result is given in the following theorem and can be interesting on its own.

\begin{theorem}
\label{thm:ours_2}
    Under the setting of Theorem~\ref{thm:ours_1}, for $p, q>0$ with $1/p+1/q =1$, $p>d/2$, $q<\alpha$, the following holds with probability larger than $1-\epsilon_n$:
    \begin{align}
    \label{eqn:wass_conv_lin}
\bfW_1(\pi, \pi^{(n)}) \leq c_0 \frac{\eta}{1-\delta_A} \bfW_p(\mu_{\bfA,\bbf},\mu_{\bfA,\bbf}^{(n)}) \eqsp,
    \end{align}
    where 
    $c_0=(\mathbb{E}[\|X_0^{(n)}\|^q])^{1/q}+1$ with $X_0^{(n)} \sim \pi^{(n)}$, and $\delta_A = \mathbb{E}[\|\Idd-\eta \bfA_1\|]$. 
\end{theorem}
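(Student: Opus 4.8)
The goal is to bound the Wasserstein-1 distance between the two stationary distributions $\pi$ and $\pi^{(n)}$ by the Wasserstein-$p$ distance between the data-driven coefficient measures $\muAb$ and $\munAb$. Let me think through the natural approach.

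I'll start from the two linear recursions. Online SGD satisfies $X_{k+1} = (\Idd - \eta \bfA_{k+1})X_k + \eta \bbf_{k+1}$ with $(\bfA_k,\bbf_k)\sim\muAb$, and offline SGD satisfies the same recursion with coefficients drawn from $\munAb$. Both are affine random-coefficient (iterated-function-system) recursions, and under the contractivity condition $\delta_A = \PE[\|\Idd-\eta\bfA_1\|] < 1$ each is geometrically ergodic with the stated stationary law. The key idea is a \emph{synchronous coupling}: run both chains from the same stationary initialization and, at each step, couple the driving pairs $(\bfA_{k+1},\bbf_{k+1})$ and $(\bfA^{(n)}_{k+1},\bbf^{(n)}_{k+1})$ optimally so as to realize the Wasserstein-$p$ distance between $\muAb$ and $\munAb$.

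I would verify the second statement before anything else.

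Concretely, let me construct the coupling inductively. Take $X_0\sim\pi$ and $X_0^{(n)}\sim\pi^{(n)}$; at each step draw an optimal coupling of the coefficient pairs. Writing $\Delta_k = X_k - X_k^{(n)}$, subtract the two recursions to get $\Delta_{k+1} = (\Idd-\eta\bfA_{k+1})\Delta_k - \eta(\bfA_{k+1}-\bfA^{(n)}_{k+1})X_k^{(n)} + \eta(\bbf_{k+1}-\bbf^{(n)}_{k+1})$. Taking norms and expectations, the first term contracts by a factor $\delta_A$ (using independence of the multiplicative factor from $\Delta_k$ in expectation), while the two perturbation terms are controlled by the coupling cost. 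The term $\|\bfA_{k+1}-\bfA^{(n)}_{k+1}\|\,\|X_k^{(n)}\|$ is where the $q$-th moment bound enters: I would apply H\"older's inequality with exponents $p,q$ (so $1/p+1/q=1$) to split $\PE[\|\bfA-\bfA^{(n)}\|\,\|X^{(n)}\|]$ into $(\PE\|\bfA-\bfA^{(n)}\|^p)^{1/p}(\PE\|X^{(n)}\|^q)^{1/q}$, and similarly for the $\bbf$ term. The stationary moment $(\PE[\|X_0^{(n)}\|^q])^{1/q}$ is exactly the quantity appearing in $c_0$; this is finite precisely because $q<\alpha$ (the tail index controls which moments exist), which explains the constraint $q<\alpha$, while $p>d/2$ is what the Wasserstein rate theorem for empirical measures will later demand.

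Iterating the recursion for $\PE\|\Delta_k\|$ gives a geometric sum: the accumulated perturbation is bounded by $\eta\,(\text{coupling cost})\sum_{j}\delta_A^{\,j} = \eta\,(\text{coupling cost})/(1-\delta_A)$, and taking $k\to\infty$ identifies the limit with $\bfW_1(\pi,\pi^{(n)})$ by stationarity and geometric ergodicity. Optimizing the coupling of the coefficients turns the cost into $\bfW_p(\muAb,\munAb)$, yielding the claimed bound with $c_0 = (\PE[\|X_0^{(n)}\|^q])^{1/q}+1$ and prefactor $\eta/(1-\delta_A)$. The factor structure explains both the $+1$ in $c_0$ (absorbing the $\bbf$-perturbation, whose "$X$-norm" counterpart is the constant $1$) and the $\eta/(1-\delta_A)$ scaling.

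The main obstacle I anticipate is twofold. First, making the one-step contraction rigorous requires care: the factor $(\Idd-\eta\bfA_{k+1})$ is \emph{not} independent of $\Delta_k$ under an arbitrary coupling, since $\Delta_k$ depends on the earlier draws, so I must argue that $\bfA_{k+1}$ is drawn independently of the past conditional on the coupling construction, allowing $\PE[\|(\Idd-\eta\bfA_{k+1})\Delta_k\|]\le\PE[\|\Idd-\eta\bfA_{k+1}\|]\,\PE[\|\Delta_k\|]=\delta_A\,\PE\|\Delta_k\|$. Second, the entire argument is conditional on the event (of probability $\ge 1-\epsilon_n$) that $\pi^{(n)}$ exists and the offline chain is ergodic with finite $q$-th moment; this conditioning must be threaded carefully through the expectations, since $\munAb$ is itself random (it depends on the data $\mathcal{D}_n$), and the stationary moment bound on $X_0^{(n)}$ must hold uniformly on that event. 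Handling the randomness of the empirical measure $\munAb$ while keeping the coupling argument clean is the delicate point.
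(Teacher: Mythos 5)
Your proposal is correct, and it reaches the bound \eqref{eqn:wass_conv_lin} by a genuinely more direct route than the paper. The paper does not couple the two stationary chains head-on: it introduces a third, auxiliary chain $(Y_k)_{k\geq 0}$ driven by the \emph{online} coefficients $(\bfA_k,\bbf_k)$ but started at $Y_0=X_0^{(n)}$, applies the triangle inequality $\bfW_1(\pi,\pi^{(n)})\leq \bfW_1(\pi,\scrL(Y_k))+\bfW_1(\scrL(Y_k),\scrL(X_k^{(n)}))$, controls the first term by the geometric ergodicity of online SGD (citing \cite[Theorem 8]{gurbuzbalaban2021heavy}), and rearranges to get the prefactor $(1-c_\rho \rme^{-\rho k})^{-1}\to 1$; the choice $Y_0=X_0^{(n)}$ makes the initial term in the Duhamel-type expansion exactly zero. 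You instead couple $X_k\sim\pi$ and $X_k^{(n)}\sim\pi^{(n)}$ synchronously and let the initial discrepancy die through the contraction, $\delta_A^k\,\PE\|\Delta_0\|\to 0$, which is legitimate since $\PE\|X_0\|<\infty$ (as $\alpha>1$) and $\PE\|X_0^{(n)}\|<\infty$ (as $q>1$); from the recursion onward (difference decomposition, independence of the fresh draw from the past, H\"older with $1/p+1/q=1$, stationarity, geometric series in $\delta_A$) the two arguments coincide. Your route is more self-contained — no auxiliary chain and no appeal to the external ergodicity theorem — at the cost of having to verify the finiteness of the initial first moments, which the hypotheses supply anyway (and which the paper also implicitly needs, since its rearrangement step presumes $\bfW_1(\pi,\pi^{(n)})<\infty$). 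One further point in your favor: you take the coefficient coupling to be $\bfW_p$-optimal for the \emph{joint} measures $(\muAb,\munAb)$, which is cleaner than the paper's construction of $\bfW_1$-optimal couplings of the marginals $(\muA,\munA)$ and $(\mub,\munb)$, a construction that sits awkwardly with the later identification of $(\PE[\|\bfA_1-\bfA_1^{(n)}\|^p])^{1/p}$ with $\bfW_p(\muA,\munA)$.
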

This result shows that the stationary distribution of offline SGD will converge to the one of online SGD as $n\to \infty$, and the rate of this convergence is determined by how fast the empirical distribution of the dataset converges to the true data distribution, as measured by $\bfW_p(\mu_{\bfA,\bbf},\mu_{\bfA,\bbf}^{(n)})$. Once \eqref{eqn:wass_conv_lin} is established, to obtain Theorem~\ref{thm:ours_1}, we use the relationship that $\bfW_1(\pi, \pi^{(n)})  \geq \bfW_1(\scrL(\|X_\infty^{(n)}\|), \scrL(\|X_\infty\|) ) = \int_0^{\infty} |\mathbb{P}(\|X_\infty^{(n)}\|>t) - \mathbb{P}(\|X_\infty\|>t)| \> \rmd t$ (see Lemmas~\ref{lem:wassertein1d} and \ref{rev_tri_lemma} in the Appendix).
Finally, by exploiting the assumptions made on $\mu_{\bfA,\bbf}$ in Theorem~\ref{thm:heavy_tail}, we can invoke \cite{fournier2015rate} (see Lemma~\ref{thm:fournier} in Section~\ref{sec:app_existing}) and show that $\bfW_p(\mu_{\bfA,\bbf},\mu_{\bfA,\bbf}^{(n)})$ behaves as $\mathcal{O}(n^{-1/2})$.  

\vspace{-2pt}
\subsection{Strongly convex objectives}
\vspace{-2pt}

\label{sec:nonconveX_obj}

\looseness=-1We now focus on more complex loss functions, as in Theorem~\ref{thm:hodgkinson}.
 For simplicity, we consider $b=1$. The general case, while similar, would complicate the notation and, in our opinion, make our results more difficult to digest.
We start by rewriting offline SGD with respect to an empirical measure $\mu_z^{(n)}$:
\begin{equation}
    X_{k+1}^{(n)} = X_k^{(n)} -  \eta    \nabla f(X_k^{(n)}, \hat{Z}_{k+1}^{(n)}) \eqsp, \label{sgd_rewritten}
  \end{equation}
where $(\hat{Z}_{k}^{(n)})_{k\geq 1}$ are \iid~random variables associated with the empirical measure defined for any $\msa \in \mcz$ as:
\begin{equation}
  \label{eq:def_emp_measure}
\textstyle  \munz(\msa) = n^{-1} \sum_{j=1}^{n}  \updelta_{\hat{Z}_j^{(n)}}(\msa) \eqsp.
\end{equation}
 We first make the following assumption.
\begin{assumption}
  \label{asu:decoupling}
  \begin{enumerate}[wide, labelindent=0pt, noitemsep,topsep=0pt,parsep=0pt,partopsep=0pt, label=(\alph*)]
  \item \label{ass:lip} There exists $L>0$ such that
    $\|\nabla f(x, z)-\nabla f(x, z')\|\leq L(\|x\|+1)(\|z-z'\|)$, for
    any $z, z' \in \msz$ and $ x \in \mathbb{R}^d$.  
  \item The data
    distribution $\mu_z$ has finite $q$-th moments with some
    $q\geq1$.
\end{enumerate}

\end{assumption}
Assumption~\ref{asu:decoupling}-\ref{ass:lip} is a Lipschitz-like condition that is useful for decoupling $x$ and $z$, and is commonly used in the analysis of optimization and stochastic approximations algorithms; see, e.g., \cite{benveniste2012adaptive}. 
We can now state our second main contribution, an extension of Theorem \ref{thm:hodgkinson} to the offline setting.

\begin{theorem}
\label{thm:ours_3}
    Let $(X_k)_{k \geq 0}$ and $(X^{(n)}_k)_{k \geq 0}$ be defined as in \eqref{eqn:on_sgd_gen} and \eqref{eqn:off_sgd_gen} with $b=1$ and $n \geq 1$. Assume Assumption~\ref{asu:decoupling} and the conditions of Theorem~\ref{thm:hodgkinson} with $\beta >1$ hold. 
    Further assume that $R(Z_1)$ is non-deterministic (see Theorem~\ref{thm:hodgkinson}), $(X_k)_{k\geq 0}$ is geometrically ergodic and $\epsilon_n \in [0,1]$ is the probability that $(X^{(n)}_k)_{k\geq 0}$ is not ergodic (in the Wasserstein sense).  Then, for every $\varepsilon>0$, there exist  constants $c, c_\alpha$, $c_\beta$, and $ t_0 \geq 0$, such that for all $t>t_0$, the following inequalities hold with probability larger than $1-\epsilon_n$: 
    \begin{align}
    \nonumber  \Biggl[ \frac{1}{2^{\alpha+\epsilon}}\frac{c_\alpha}{t^{\alpha+\epsilon}} - \frac{c}{t} \bfW_1(\mu_z, \mu_z^{(n)}) \Biggr] \leq 
    \mathbb{P}(\|X_\infty^{(n)}\| &> t) \leq
    \Biggl[ 2^{\beta-\epsilon} \frac{c_\beta}{t^{\beta-\epsilon}}+\frac{2c}{t} \bfW_1(\mu_z, \mu_z^{(n)}) \Biggr], \quad \forall n,
    \end{align}

    where $c=(\mathbb{E}[\|X_0^{(n)}\|]+1)L\eta/(1-\delta_R)$ with $\delta_R = \mathbb{E}[R(Z_1)]$, and $X_0^{(n)} \sim \pi^{(n)}$.
\end{theorem}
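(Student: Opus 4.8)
The plan is to mirror the two-step strategy behind Theorem~\ref{thm:ours_1}: first establish the strongly-convex analogue of Theorem~\ref{thm:ours_2}, namely a Wasserstein contraction bound $\bfW_1(\pi,\pi^{(n)})\leq c\,\bfW_1(\mu_z,\mu_z^{(n)})$, and then transfer it to pointwise tail bounds by coupling, combined with the online tail estimates of Theorem~\ref{thm:hodgkinson}. All the analysis is carried out \emph{conditionally} on the dataset $\mathcal{D}_n=\{Z_1,\dots,Z_n\}$, so that $\mu_z^{(n)}$ and $\bfW_1(\mu_z,\mu_z^{(n)})$ are fixed (random) quantities; the probability $1-\epsilon_n$ records the event that the offline chain is ergodic.

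\textbf{Step 1 (Wasserstein contraction).} I would use a synchronous coupling in which, at each step, the fresh online draw $Z_{k+1}\sim\mu_z$ and the offline draw $\hat{Z}_{k+1}^{(n)}\sim\mu_z^{(n)}$ are sampled from an optimal $\bfW_1$-coupling of $\mu_z$ and $\mu_z^{(n)}$, so that $\mathbb{E}[\|Z_{k+1}-\hat{Z}_{k+1}^{(n)}\|\mid\mathcal{D}_n]=\bfW_1(\mu_z,\mu_z^{(n)})$. Adding and subtracting $X_k^{(n)}-\eta\nabla f(X_k^{(n)},Z_{k+1})$ splits the increment $X_{k+1}-X_{k+1}^{(n)}$ into a contractive part and a data-discrepancy part: twice differentiability and strong convexity give, through the fundamental theorem of calculus, $\|(x-\eta\nabla f(x,z))-(y-\eta\nabla f(y,z))\|\leq R(z)\|x-y\|$, while Assumption~\ref{asu:decoupling}-\ref{ass:lip} bounds the second part by $\eta L(\|X_k^{(n)}\|+1)\|Z_{k+1}-\hat{Z}_{k+1}^{(n)}\|$. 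Since $Z_{k+1}$ is conditionally independent of $(X_k,X_k^{(n)})$ given $\mathcal{D}_n$, taking conditional expectations factorizes the contraction factor and yields
\[ \mathbb{E}[\|X_{k+1}-X_{k+1}^{(n)}\|\mid\mathcal{D}_n]\leq \delta_R\,\mathbb{E}[\|X_k-X_k^{(n)}\|\mid\mathcal{D}_n]+\eta L(\mathbb{E}[\|X_0^{(n)}\|]+1)\,\bfW_1(\mu_z,\mu_z^{(n)}), \]
where $\delta_R=\mathbb{E}[R(Z_1)]$ and I used stationarity of the offline chain ($X_k^{(n)}\sim\pi^{(n)}$). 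The assumption $\beta>1$ forces $\delta_R<1$: the map $s\mapsto\mathbb{E}[R(Z_1)^s]$ is convex, equals $1$ at $s=0$ and $s=\beta$, and has slope $\mathbb{E}[\log R(Z_1)]<0$ at $0$, hence is $<1$ on $(0,\beta)\ni 1$. Starting both chains at $X_0=X_0^{(n)}\sim\pi^{(n)}$, iterating the geometric recursion, letting $k\to\infty$, and invoking geometric ergodicity of the online chain (so $\bfW_1(\pi,\scrL(X_k))\to0$) then gives $\bfW_1(\pi,\pi^{(n)})\leq c\,\bfW_1(\mu_z,\mu_z^{(n)})$ with $c=\eta L(\mathbb{E}[\|X_0^{(n)}\|]+1)/(1-\delta_R)$. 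Finiteness of $\mathbb{E}[\|X_0^{(n)}\|]$ is ensured because the finitely-supported offline dynamics have bounded drift, so $\pi^{(n)}$ has finite moments.

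\textbf{Step 2 (tail transfer).} Take an optimal $\bfW_1$-coupling $(X_\infty,X_\infty^{(n)})$ of $(\pi,\pi^{(n)})$, so that $\mathbb{E}[\|X_\infty-X_\infty^{(n)}\|]=\bfW_1(\pi,\pi^{(n)})$. Conditioning on $\{\|X_\infty-X_\infty^{(n)}\|\leq s\}$ and using the triangle inequality together with Markov's inequality gives, for the upper bound with $s=t/2$,
\[ \mathbb{P}(\|X_\infty^{(n)}\|>t)\leq \mathbb{P}(\|X_\infty\|>t/2)+\tfrac{2}{t}\,\bfW_1(\pi,\pi^{(n)}), \]
and, for the lower bound with $s=t$,
\[ \mathbb{P}(\|X_\infty^{(n)}\|>t)\geq \mathbb{P}(\|X_\infty\|>2t)-\tfrac{1}{t}\,\bfW_1(\pi,\pi^{(n)}). \]
Plugging in the online estimates of Theorem~\ref{thm:hodgkinson} — for $t$ large, $\mathbb{P}(\|X_\infty\|>t/2)\leq 2^{\beta-\epsilon}c_\beta t^{-(\beta-\epsilon)}$ and $\mathbb{P}(\|X_\infty\|>2t)\geq 2^{-(\alpha+\epsilon)}c_\alpha t^{-(\alpha+\epsilon)}$ — and bounding $\bfW_1(\pi,\pi^{(n)})\leq c\,\bfW_1(\mu_z,\mu_z^{(n)})$ from Step 1 reproduces exactly the two-sided bound in the statement; the $2^{\pm}$ factors and the constants $c_\alpha,c_\beta$ arise from evaluating the online tails at the shifted arguments $2t$ and $t/2$. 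This is consistent with the one-dimensional viewpoint $\bfW_1(\pi,\pi^{(n)})\geq\int_0^\infty|\mathbb{P}(\|X_\infty^{(n)}\|>t)-\mathbb{P}(\|X_\infty\|>t)|\,\rmd t$ noted via Lemmas~\ref{lem:wassertein1d} and \ref{rev_tri_lemma}.

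\textbf{Main obstacle.} The delicate step is Step~1, specifically the conditioning on $\mathcal{D}_n$: although the offline draws $(\hat{Z}_j^{(n)})_{j\leq k}$ and the online draws $(Z_j)_{j\leq k}$ share the dataset, the fresh draw $Z_{k+1}$ is conditionally independent of the past, which is what allows the random contraction factor $R(Z_{k+1})$ to decouple from $\|X_k-X_k^{(n)}\|$ in expectation. Verifying $\delta_R<1$ from $\beta>1$ and the finiteness of the first moment of $\pi^{(n)}$ are the other points requiring care. By contrast, once the Wasserstein bound is in hand, the tail transfer of Step~2 is routine.
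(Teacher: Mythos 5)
Your proposal is correct and follows the paper's overall two-step architecture, and in Step~1 it is essentially the paper's own proof of Theorem~\ref{thm:ours_4}: a synchronous coupling built from \iid~optimal couplings of $\mu_z$ and $\mu_z^{(n)}$ conditionally on the dataset, the integral-remainder (Taylor) bound $\|\Idd-\eta\int_0^1\nabla^2 f(\cdot,z)\,\rmd u\|\leq R(z)$ for the contractive part, Assumption~\ref{asu:decoupling} for the data-discrepancy part, stationarity of the offline chain, $\delta_R<1$, and geometric ergodicity of the online chain to remove the initialization error as $k\to\infty$. Your derivation of $\delta_R<1$ (convexity of $s\mapsto\mathbb{E}[R(Z_1)^s]$, equal to $1$ at $s=0$ and $s=\beta$, with negative slope $\mathbb{E}[\log R(Z_1)]<0$ at $0$) is a valid alternative to the paper's implicit strict-Jensen argument, which is where the non-determinism of $R(Z_1)$ is used; both give the needed strict inequality.

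Where you genuinely diverge is Step~2, the transfer from $\bfW_1(\pi,\pi^{(n)})$ to pointwise tail bounds. The paper goes through the one-dimensional route: $\bfW_1(\pi,\pi^{(n)})\geq\bfW_1(\scrL(\|X_\infty\|),\scrL(\|X_\infty^{(n)}\|))=\int_0^\infty|\mathbb{P}(\|X_\infty\|>t)-\mathbb{P}(\|X_\infty^{(n)}\|>t)|\,\rmd t$ (Lemmas~\ref{lem:wassertein1d} and~\ref{rev_tri_lemma}), then integrates the tail difference over dyadic intervals $[t',2t']$ and uses monotonicity of the survival functions. You instead realize an optimal $\bfW_1$-coupling of $(\pi,\pi^{(n)})$ and use the inclusions $\{\|X_\infty^{(n)}\|>t\}\subseteq\{\|X_\infty\|>t/2\}\cup\{\|X_\infty-X_\infty^{(n)}\|>t/2\}$ and $\{\|X_\infty\|>2t\}\subseteq\{\|X_\infty^{(n)}\|>t\}\cup\{\|X_\infty-X_\infty^{(n)}\|>t\}$ together with Markov's inequality. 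The two mechanisms yield exactly the same constants (the $2^{\pm(\cdot)}$ factors and the $c/t$, $2c/t$ discrepancy terms), so your route is a faithful, arguably more elementary substitute that bypasses the explicit 1D Wasserstein formula; the paper's route is what its auxiliary lemmas were built for and also quantifies the tail discrepancy in an integrated (averaged) sense. One caveat you share with the paper rather than introduce: Theorem~\ref{thm:hodgkinson} only asserts $\limsup_{t\to\infty}t^{\alpha+\varepsilon}\mathbb{P}(\|X_\infty\|>t)>0$, which strictly speaking gives the lower bound $\mathbb{P}(\|X_\infty\|>2t)\geq c_\alpha(2t)^{-(\alpha+\varepsilon)}$ only along a sequence of thresholds rather than for all $t>t_0$; the paper's proof makes the same pointwise reading when it lower-bounds $\int_{t'}^{t''}\mathbb{P}(\|X_\infty\|>t)\,\rmd t$, so you are on equal footing there.
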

This theorem shows that the approximate power-laws will also be apparent for more general loss functions: for large enough $n$, the tails of offline SGD will be upper- and lower-bounded by two power-laws with exponents $\alpha$ and $\beta$. Note that the ergodicity of $(X_k)_{k\geq 0}$ can be established by using similar tools provided in \cite{dieuleveut2020bridging}. On the other hand, the assumption on $R(Z_1)$ being non-deterministic and $\mathbb{E}[R^\beta(Z_1)] =1$ with $\beta>1$ jointly indicate that $\delta_R = \mathbb{E}[R(Z_1)] <1$, which is a form of contraction on average, hence the strong convexity condition is needed.

Similarly to Theorem~\ref{thm:ours_1}, in order to prove this result, we first obtain a nonasymptotic Wasserstein convergence bound between $\pi^{(n)}$ and $\pi$, stated as follows:

\begin{theorem}
\label{thm:ours_4}
    Under the setting of Theorem~\ref{thm:ours_3}, the following holds with probability greater than $1-\epsilon_n$:
    \begin{align*}
\bfW_1(\pi, \pi^{(n)}) \leq c_0 L \frac{\eta}{1-\delta_R} \bfW_1(\mu_z, \mu_z^{(n)}) \eqsp,
    \end{align*}
    with $c_0=\mathbb{E}[\|X_0^{(n)}\|]+1$, where $X_0^{(n)} \sim \pi^{(n)}$.
\end{theorem}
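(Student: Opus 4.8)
The plan is to run a synchronous coupling of the two chains started at stationarity and extract an affine, geometrically contracting recursion for the expected distance between the coupled iterates. Concretely, I initialize $X_0 \sim \pi$ and $X_0^{(n)} \sim \pi^{(n)}$, and at every step $k \geq 0$ I draw the pair $(Z_{k+1}, \hat{Z}_{k+1}^{(n)})$ from a fixed optimal coupling $\gamma^\star \in \Gamma(\mu_z, \mu_z^{(n)})$ of the true and empirical data laws, independently across $k$ and independently of the past. Under this coupling the online marginal uses $Z_{k+1}\sim\mu_z$ and the offline marginal uses $\hat{Z}_{k+1}^{(n)}\sim\mu_z^{(n)}$, so each chain follows its own dynamics and, by stationarity, $(X_k, X_k^{(n)})$ is a coupling of $(\pi, \pi^{(n)})$ for every $k$. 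Hence $\bfW_1(\pi, \pi^{(n)}) \leq \PE[\|X_k - X_k^{(n)}\|] =: u_k$ for all $k$, and it suffices to bound $\limsup_k u_k$. Everything is carried out conditionally on the dataset $\mathcal{D}_n$, on the event of probability at least $1-\epsilon_n$ on which offline SGD is ergodic and $\pi^{(n)}$ exists, so that $\mu_z^{(n)}$ and $\bfW_1(\mu_z, \mu_z^{(n)})$ may be treated as fixed.

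The core is the one-step estimate. Subtracting the recursions \eqref{eqn:on_sgd_gen} (with $b=1$) and \eqref{sgd_rewritten} and inserting the term $\nabla f(X_k^{(n)}, Z_{k+1})$ gives, by the fundamental theorem of calculus on the first bracket,
\begin{align*}
X_{k+1} - X_{k+1}^{(n)}
&= \Bigl(\Idd - \eta \int_0^1 \nabla^2 f\bigl(X_k^{(n)} + s(X_k - X_k^{(n)}), Z_{k+1}\bigr)\,\mathrm{d}s\Bigr)(X_k - X_k^{(n)}) \\
&\quad - \eta\bigl[\nabla f(X_k^{(n)}, Z_{k+1}) - \nabla f(X_k^{(n)}, \hat{Z}_{k+1}^{(n)})\bigr].
\end{align*}
The operator norm of the matrix factor is at most $\sup_x \|\Idd - \eta\nabla^2 f(x, Z_{k+1})\| = R(Z_{k+1})$ by the definition of $R$ in Theorem~\ref{thm:hodgkinson}, while the remaining term is controlled by Assumption~\ref{asu:decoupling}-\ref{ass:lip}, yielding
\[
\|X_{k+1} - X_{k+1}^{(n)}\| \leq R(Z_{k+1}) \|X_k - X_k^{(n)}\| + \eta L (\|X_k^{(n)}\| + 1)\, \|Z_{k+1} - \hat{Z}_{k+1}^{(n)}\|.
\]
Taking expectations and using that $(Z_{k+1}, \hat{Z}_{k+1}^{(n)})$ is independent of $(X_k, X_k^{(n)})$, that $\PE[R(Z_{k+1})] = \delta_R$, that $X_k^{(n)} \sim \pi^{(n)}$ so $\PE[\|X_k^{(n)}\| + 1] = c_0$, and that $\PE[\|Z_{k+1} - \hat{Z}_{k+1}^{(n)}\|] = \bfW_1(\mu_z, \mu_z^{(n)})$ by optimality of $\gamma^\star$, I obtain the affine recursion $u_{k+1} \leq \delta_R u_k + \eta L c_0 \bfW_1(\mu_z, \mu_z^{(n)})$.

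Finally I solve the recursion: since $\delta_R < 1$, induction gives $u_k \leq \delta_R^k u_0 + (1-\delta_R)^{-1}\eta L c_0 \bfW_1(\mu_z, \mu_z^{(n)})$, and letting $k \to \infty$ (with $u_0 < \infty$ thanks to the finite first moments of $\pi$ and $\pi^{(n)}$) yields $\limsup_k u_k \leq (1-\delta_R)^{-1}\eta L c_0 \bfW_1(\mu_z, \mu_z^{(n)})$. Combined with $\bfW_1(\pi, \pi^{(n)}) \leq u_k$ for all $k$, this is exactly the claimed bound. The main delicacy I anticipate is justifying the strict contraction $\delta_R = \PE[R(Z_1)] < 1$: this does not follow from $\PE[\log R(Z_1)] < 0$ alone, but from the hypotheses that $R(Z_1)$ is non-deterministic and $\PE[R(Z_1)^\beta] = 1$ with $\beta > 1$, via the strict Jensen inequality $\PE[R(Z_1)] < (\PE[R(Z_1)^\beta])^{1/\beta} = 1$. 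A secondary point is the careful bookkeeping of the conditioning on $\mathcal{D}_n$ and the ergodicity event so that the ``with probability at least $1-\epsilon_n$'' conclusion is clean, together with ensuring that $\pi$ has a finite first moment (guaranteed since $\alpha>1$) so that the recursion can be initialized.
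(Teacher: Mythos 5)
Your proof is correct, but it takes a genuinely different route from the paper's. The paper runs \emph{three} chains: the stationary online chain $(X_k)$, the stationary offline chain $(X_k^{(n)})$, and an auxiliary chain $(Y_k)$ following the online dynamics but started at $Y_0 = X_0^{(n)}$. It then invokes the geometric ergodicity hypothesis on online SGD to write $\bfW_1(\pi,\pi^{(n)}) \leq c_\rho\rme^{-\rho k}\bfW_1(\pi,\pi^{(n)}) + \bfW_1(\scrL(Y_k),\scrL(X_k^{(n)}))$, rearranges to get the prefactor $(1-c_\rho\rme^{-\rho k})^{-1}$, and finally unrolls the coupled recursion between $Y_k$ and $X_k^{(n)}$ — whose initial difference vanishes by construction — before letting $k\to\infty$. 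You instead couple the two stationary chains directly, so that $(X_k, X_k^{(n)})$ is a coupling of $(\pi,\pi^{(n)})$ at every $k$, and you replace the unrolling-plus-ergodicity argument by the one-step affine contraction $u_{k+1} \leq \delta_R u_k + \eta L c_0 \bfW_1(\mu_z,\mu_z^{(n)})$, letting $\delta_R^k u_0 \to 0$ kill the initial term. The core estimates are identical in both proofs (Taylor expansion with integral remainder, the bound $\|\Idd - \eta \mathbf{H}\| \leq R(Z_{k+1})$, the Lipschitz decoupling from Assumption~\ref{asu:decoupling}, and $\delta_R<1$ via strict Jensen using the non-degeneracy of $R(Z_1)$ and $\mathbb{E}[R(Z_1)^\beta]=1$ with $\beta>1$). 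What your route buys: it never uses the geometric ergodicity of $(X_k)_{k\geq0}$ at all — only existence of the two stationary distributions with finite first moments and the average contraction $\delta_R<1$ — so it establishes the bound under strictly weaker hypotheses and with less bookkeeping; the paper's three-chain scheme has the advantage of being the same template it reuses verbatim for the quadratic case (Theorem~\ref{thm:ours_2}). One minor slip: you justify the finite first moment of $\pi$ by ``$\alpha>1$'', but the upper tail of $\pi$ is governed by the exponent $\beta$ (via $\limsup_t t^{\beta-\varepsilon}\mathbb{P}(\|X_\infty\|>t)<\infty$), so the correct citation is $\beta>1$; this is immaterial here since $\alpha \geq \beta > 1$ under the stated hypotheses.
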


The convergence rate implied by Theorem \ref{thm:ours_4} is implicitly related to the finite moments of the data-generating distribution $\mu_z$. Using the results of \cite{fournier2015rate} ( Lemma \ref{thm:fournier} in the Appendix), we can determine the convergence rate $r$, where with high probability it will hold that $ \bfW_1(\pi, \pi^{(n)})=\mathcal{O}(n^{-r})$, with $r\in(0,1/2]$. Let the data have finite moments up to order $q$. Then, for example, if $q > 4d$, we get
$r=1/2$, i.e., the fastest rate. As another example: if $d > 2$ and $q=3$, we obtain a rate $r=1/d$. In other words, the moments of the data-generating distribution determine whether we can achieve a dimension-free rate or fall victim to the curse of dimensionality. Thus, if the data do not admit higher-order moments, our results show that we need a large number of data points $n$ to be able to observe power-law tails in offline SGD.

\section{Experiments}
\label{sec:experiments}
\vspace{-5pt}

\begin{figure}[t]
\centering
\begin{minipage}{.45\textwidth}
  \centering
  \includegraphics[width=.6\linewidth]{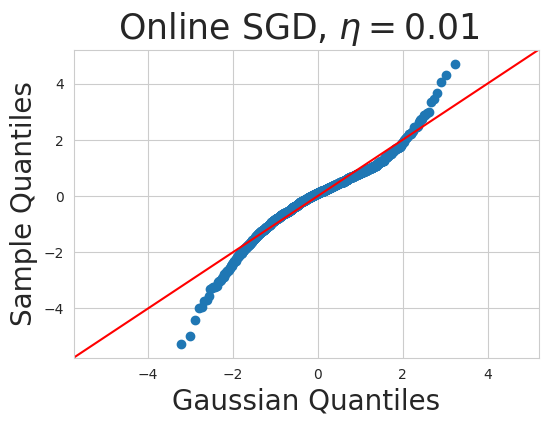}
  \label{fig:qq_left}
\end{minipage}%
\begin{minipage}{.45\textwidth}
  \centering
  \includegraphics[width=.75\linewidth]{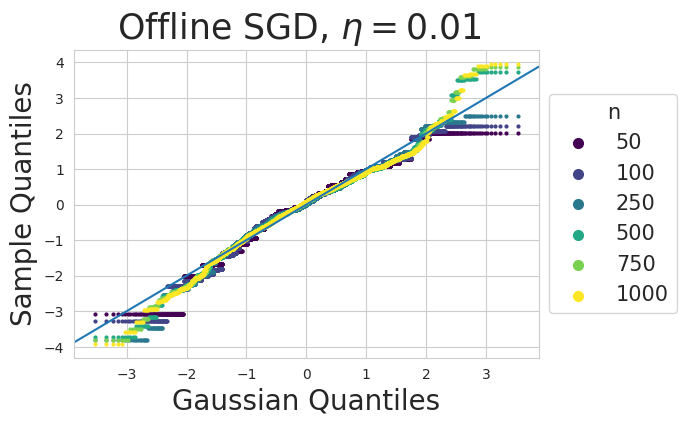}
  \label{fig:qq_right}
\end{minipage}
\caption{QQ-plots of 1D linear regression experiment. Left: Online SGD exhibits heavy-tails. Right: Offline SGD with varying $n$; larger $n$ exhibits heavier tails.}
\label{plt:2_qq}
\end{figure}

In this section, we support our theoretical findings on both synthetic and real-world problems. Our primary objective is to validate the tail behavior of offline SGD iterates by adapting previous online SGD analyses \cite{gurbuzbalaban2021heavy, 2019bsimsekli}, in which the authors investigated how heavy-tailed behavior relates to the step size $\eta$ and batch size $b$, or their ratio $\eta/b$\footnote{The code scripts for reproducing the experimental results can be accessed at \href{https://github.com/krunolp/offline_ht}{github.com/krunolp/offline{\_}ht}.}.

\subsection{Linear regression}  
\label{sec:exp_lin_reg}
\begin{wrapfigure}{r}{0.725\textwidth}
\centering
\vspace{-0.65cm}
\includegraphics[width=0.725\textwidth]{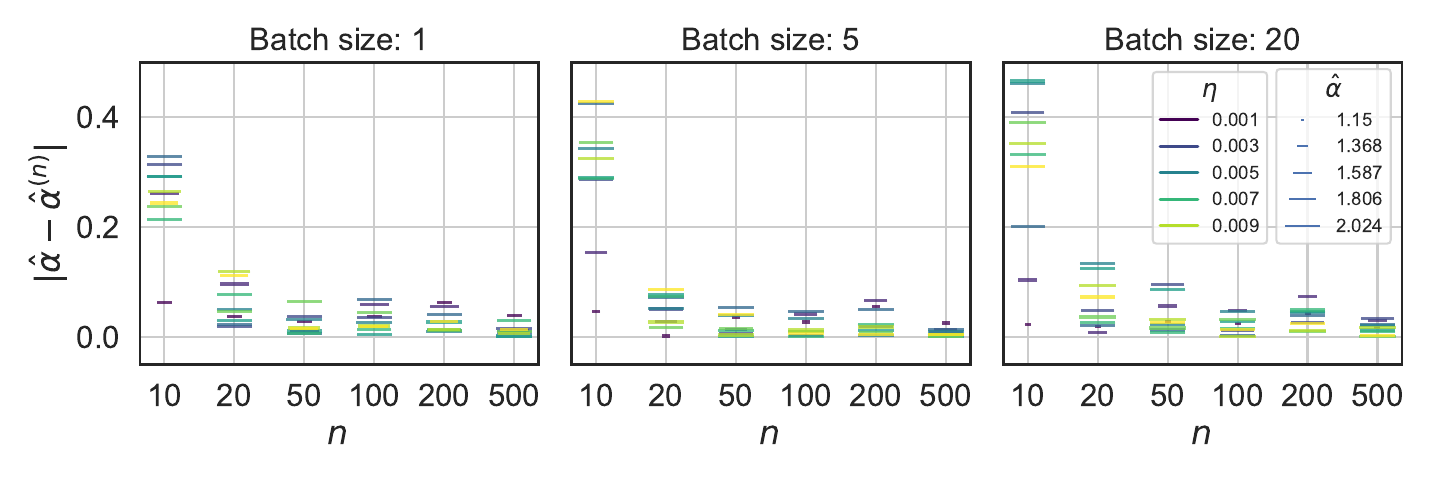}
\vspace{-0.5cm}
  \caption{Estimated tail indices difference}
\vspace{-0.3cm}
\label{plt:0_5_difference}
\end{wrapfigure}

\paragraph{Experimental setup.} \looseness=-1In the linear regression examples, we examine the case with a quadratic loss and Gaussian data. In this simple scenario, it was observed that online SGD could exhibit heavy-tailed behavior \cite{gurbuzbalaban2021heavy}. The  model can be summarized as follows: with initial parameters $\bar{X}_0 \sim \mathrm{N}\left(0, \sigma_x^2 \Idd\right)$, features $a_i \sim \mathrm{N}\left(0, \sigma^2 \Idd\right)$, and targets $y_i \mid a_i, \bar{X}_0 \sim \mathrm{N}\left(a_i^{\top} \bar{X}_0, \sigma_y^2\right)$, for $i=1, \ldots, n$, and $\sigma, \sigma_x, \sigma_y>0$. In our experiments, we fix $\sigma=1, \sigma_x=\sigma_y=3$, use either $d=1$ or $d=100$, and simulate the statistical model to obtain $\left\{a_i, y_i\right\}_{i=1}^n$. 

In order to examine the offline version of the model (i.e., for offline SGD), we utilize a finite dataset with $n$ points rather than observing new samples at each iteration. We then analyze the same experimental setting with an increasing number of samples $n$ and illustrate how similar heavy-tailed behavior emerges in offline SGD.

\paragraph{Preliminary tail inspection.}\label{para:tail_insp}\looseness=-1We begin by analyzing how the tail behavior differs between online and offline SGD (with varying $n$ in the latter). Specifically, in Figure \ref{plt:2_qq}, we depict a 1-dimensional linear regression task by analyzing the QQ-plots of the estimated stabilizing distributions after 1000 iterations. We observe that online SGD with a sufficiently large $\eta$ exhibits heavy, non-Gaussian tails, underscoring the prevalence of heavy tails even in such simple scenarios. Furthermore, we also observe that offline SGD exhibits increasingly heavier tails as the sample size $n$ increases, as our theoretical results suggest.

\paragraph{Tail estimation.}\label{para:choice_tail_estim} We now set $d=100$ and run the corresponding offline and online SGD recursions. We then use a tail-index estimator \cite{mohammadi2015estimating}, which assumes that the recursions both converge to an \emph{exact} heavy-tailed distribution. While this is true for online SGD due to Theorem~\ref{thm:heavy_tail}, offline SGD will only possess \emph{approximate} heavy-tails, and the power-law behavior might not be apparent for small $n$. Hence, for small $n$, we expect the estimated tail index for offline SGD will be inaccurate and get more and more accurate as we increase $n$. 

\looseness=-1We illustrate this in Figure \ref{plt:2_5_err_bar}, in which we plot the range of estimated offline SGD tail indices (marked in green) corresponding to 10 random initializations, while varying $n$ and $\eta$. We can see that, across all learning rates, the variance of the tail estimation decreases as $n$ increases and that the estimated values get closer to the estimated tail index for online SGD (marked as the horizontal blue line).

\begin{figure}[t]%
    \centering
    {\includegraphics[width=1\textwidth]{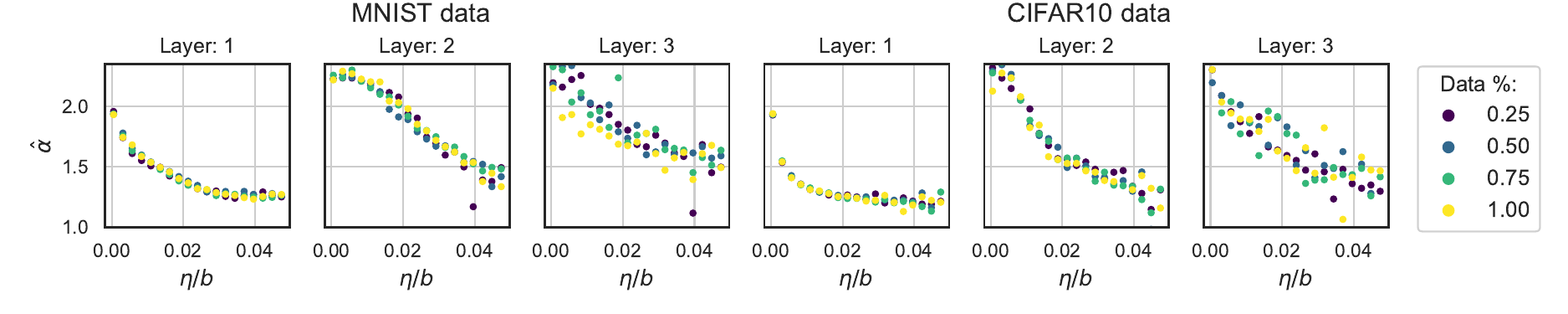} }%
    \vspace{-0.5cm}
    \caption{Estimated tail indices for a 3-layer fully connected NN}%
    \vspace{-5pt}
    \label{plt:3_5_combined}%
\end{figure}

\paragraph{Further analyses.}\label{para:analyses_lin_reg} We now run online and offline SGD recursions, varying $\eta$ from $0.001$ to $0.01$, $b$ from 1 to 20, and $n$ from $1$ to $500$. Each hyperparameter configuration is run 1600 times with distinct initializations, yielding 1600 estimated samples from the stationary distributions, used to estimate the tail indexes. 
In order to estimate the tail-indexes $\alpha$ (online SGD) and $\alpha^{(n)}$ (offline SGD) of the respective stationary distributions, we follow the procedure as explained in \cite{gurbuzbalaban2021heavy}.
Finally, in Figure \ref{plt:0_5_difference}, we plot the absolute difference of the estimated indexes, $|\Hat{\alpha}^{(n)}-\Hat{\alpha}|$.

We find that a larger number of data samples leads to a smaller discrepancy between the online and offline approximations across all batch sizes $b$ and step sizes $\eta$. This trend is consistently observed. Moreover, we observe that larger values of $\eta$ lead to a smaller discrepancy on average, across all $n$. The conclusion here is that, as $n$ increases, the power-law tails in offline SGD become more apparent and the estimator can identify the true tail index corresponding to online SGD even for moderately large $n$, which confirms our initial expectations.

\subsection{Neural networks (NN)}
\label{sec:exp_nn}
To test the applicability of our theory in more practical scenarios, we conduct a second experiment using fully connected (FC) NNs (3 layers, 128 neurons, ReLU activations), as well as larger architectures, such as LeNet (60k parameters, 3 convolutional layers, 2 FC layers)\footnote{For completeness, LeNet further uses 2 subsampling layers.}\cite{lecun1998gradient}, and AlexNet (62.3M parameters, 5 convolutional layers, 3 FC layers)\cite{krizhevsky2017imagenet}. The models are trained for $10,000$ iterations using cross-entropy loss on the MNIST and CIFAR-10 datasets. We vary the learning rate from $10^{-4}$ to $10^{-1}$, and the batch size $b$ from 1 to 10, with offline SGD utilizing $25\%$, $50\%$, and $75\%$ of the training data.

We proceed similarly to the linear regression experiment and again replicate the method presented in \cite{gurbuzbalaban2021heavy}: 
we estimate the tail index per layer, and plot the corresponding results, with different colors representing different data proportions ($1.00$ indicates the full data set). The results for the fully connected network are presented in Figure \ref{plt:3_5_combined} (on CIFAR-10 \& MNIST), LeNet (on MNIST) in Figure \ref{plt:4_lenet_mnist}, and AlexNet (on CIFAR-10) in Figure \ref{plt:4_alexnet_cifar10}. The LeNet CIFAR-10 and AlexNet MNIST results can be found in the Appendix \ref{appx:sec_D}.

\begin{wrapfigure}{r}{0.75\textwidth}
\centering
\includegraphics[width=0.75\textwidth]{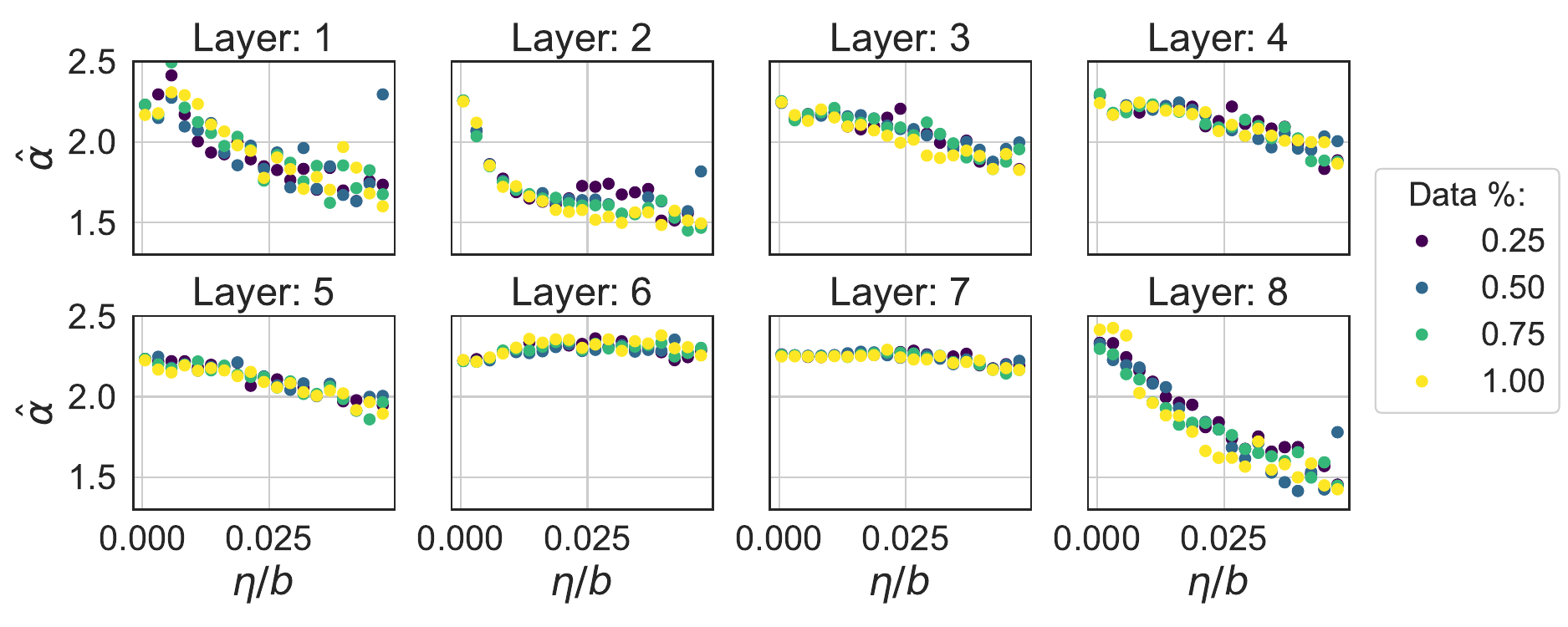}
  \vspace{-0.35cm}
  \caption{Estimated tail indices, AlexNet, CIFAR-10}
    \vspace{-.5cm}
\label{plt:4_alexnet_cifar10}
\end{wrapfigure}

\looseness=-1Our observations show that the estimated $\Hat{\alpha}^{(n)}$ has a strong correlation with $\Hat{\alpha}$: using a reasonably large proportion of the data enables us to estimate the tail index that is measured over the whole dataset. Moreover, although the dependence of $\Hat{\alpha}^{(n)}$ on $\eta / b$ varies between layers, the measured $\Hat{\alpha}^{(n)}$'s correlate well with the ratio $\eta / b$ across all datasets and NN architectures\footnote{The monotonic relation between the tail exponent and the $\eta/b$ ratio is in line with findings from \cite{gurbuzbalaban2021heavy}, although new findings (see Sec.6.6 in \cite{ziyin2021strength}) point out that a mere dependence on this ratio has been found broken. We believe that for other ranges of the parameters (outside of the ones originally used in \cite{gurbuzbalaban2021heavy}), a different behavior could be observed.}. While our theory does not directly cover this setup, our results show that similar behavior is also observed in more complicated scenarios.

\begin{figure}[t]%
    \centering
    {\includegraphics[width=\textwidth]{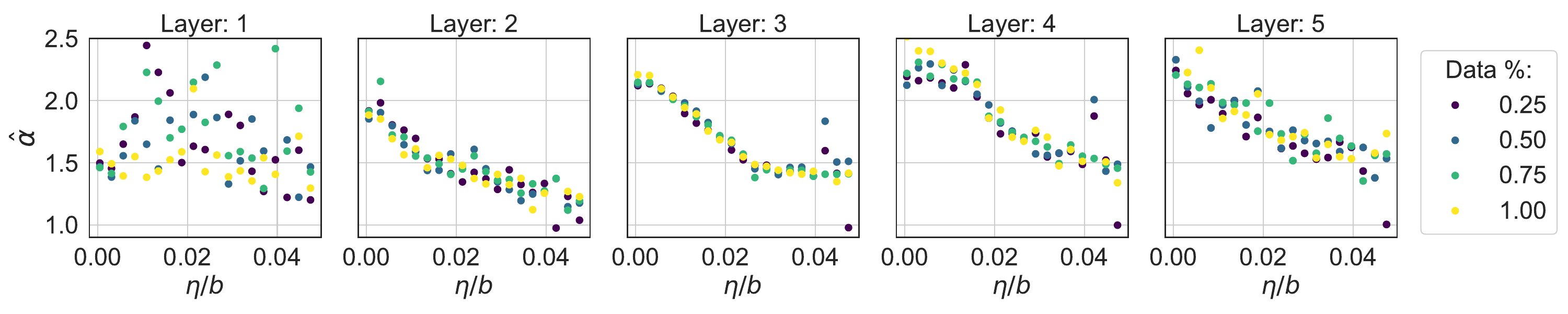} }%
    \caption{Estimated tail indices, LeNet, MNIST}%
    \label{plt:4_lenet_mnist}%
\end{figure}

\section{Conclusion}

We established a relationship between the data-generating distributions and stationary distributions of offline and online SGD. This enabled us to develop the first theoretical result illuminating the heavy-tailed behavior in offline SGD. We extended previous results encompassing both quadratic losses, as well as more sophisticated strongly convex loss functions. Through an experimental study, we validated our theoretical findings in a variety of settings. 

\textbf{Limitations.} There are two main limitations to our work. Firstly, our analysis focuses predominantly on the `upper' tails of the distributions. 
Studying the bulk of the distribution of the iterates may reveal new findings. For example, research in this direction could connect our findings to learning theory by leveraging existing generalization bounds (e.g., see Corollary 2 in \cite{hodgkinson2022generalization} for a link between `lower' tails and generalization error). Secondly, it would be of great interest to extend our results to the non-convex deep learning settings.
Finally, since this is a theoretical paper studying online and offline SGD, our work contains no direct potential negative societal impacts.

\textbf{Acknowledgments. }
We thank Benjamin Dupuis for the valuable feedback. AD would like
to thank the Isaac Newton Institute for Mathematical Sciences for support and hospitality during the programme
\emph{The mathematical and statistical foundation of future data-driven engineering} when work on this paper was undertaken. Umut \c{S}im\c{s}ekli's research is supported by the French government under management of
Agence Nationale de la Recherche as part of the “Investissements d’avenir” program, reference
ANR-19-P3IA-0001 (PRAIRIE 3IA Institute) and the European Research Council Starting Grant
DYNASTY – 101039676.

%%%%%%%%%%%%%%%%%%%%%%%%%%%%%%%%%%%%%%%%%%%%%%%%%%%%%%%%%%%%
% 

\bibliography{main}
\bibliographystyle{alpha}

\newpage
\appendix

\begin{center}

\Large \bf Approximate Heavy Tails in Offline (Multi-Pass)
Stochastic Gradient Descent \vspace{3pt}\\ {\normalsize APPENDIX}

\end{center}

The organization of the appendix is as follows:
\begin{itemize}
    \item In Section \ref{appx:sec_A}, we provide the background material required for the proof methodology.
    \item In Section \ref{appx:sec_B}, we present technical results used to conclude the results of our work.
    \item In Section \ref{appx:sec_C}, we give the proofs of Theorems 3-6.
    \item In Section \ref{appx:sec_D}, we provide additional experimental results.
    \item In Section \ref{appx:sec_E}, we justify the tail estimator choice.    \end{itemize}

\section{Additional Technical Background}
\label{appx:sec_A}

\subsection{Existing results}
\label{sec:app_existing}

\begin{lemma}[Explicit solution of Wasserstein distance \cite{panaretos2019statistical}]
\label{lem:wassertein1d}

Let $\mu_1, \mu_2 \in \mathcal{P}(\mathbb{R})$ be two probability measures on $\mathbb{R}$, and denote their cumulative distribution functions by $F_1(x)$ and $F_2(x)$ respectively. Then, the Wasserstein-$p$ distance between $\mu_1$ and $\mu_2$ has an explicit formula:
\begin{align*}
    \bfW_p\left(\mu_1, \mu_2\right)=\left(\int_0^1\left|F_1^{-1}(q)-F_2^{-1}(q)\right|^p \mathrm{~d} q\right)^{1 / p},
\end{align*}
where $F_1^{-1}$ and $F_2^{-1}$ denote the quantile functions. In the case when $p=1$, by applying the change of variables, one can obtain the following:
\begin{align*}
    \bfW_1\left(\mu_1, \mu_2\right)=\int_{\mathbb{R}}\left|F_1(x)-F_2(x)\right| \mathrm{~d} x .
\end{align*}
\end{lemma}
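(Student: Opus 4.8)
The plan is to establish the quantile formula for $\bfW_p$ first, and then deduce the $p=1$ identity from it by a Fubini-type change of variables. For the quantile formula I would prove the two matching inequalities separately.

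For the upper bound (achievability), let $U$ be uniformly distributed on $[0,1]$ and recall that the generalized inverse $F_i^{-1}(q) = \inf\{x : F_i(x) \geq q\}$ pushes the uniform law onto $\mu_i$, i.e. $F_i^{-1}(U) \sim \mu_i$ for $i=1,2$. Hence the law of the pair $(F_1^{-1}(U), F_2^{-1}(U))$ is a coupling $\gamma^\star \in \Gamma(\mu_1,\mu_2)$, and evaluating the transport cost along this particular coupling gives $\bfW_p(\mu_1,\mu_2)^p \leq \int_0^1 |F_1^{-1}(q) - F_2^{-1}(q)|^p \, \mathrm{d}q$ immediately.

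For the lower bound (optimality of the comonotone coupling), the key structural fact is that $t \mapsto |t|^p$ is convex for $p \geq 1$, which makes the cost $(x,y) \mapsto |x-y|^p$ submodular: for all $x \leq x'$ and $y \leq y'$,
\[
|x - y|^p + |x' - y'|^p \leq |x - y'|^p + |x' - y|^p,
\]
a consequence of the fact that increments of a convex function are nondecreasing. I would then argue that an optimal plan can be taken with monotone (noncrossing) support: if $(x_1,y_1)$ and $(x_2,y_2)$ both lie in the support with $x_1 < x_2$ but $y_1 > y_2$, the displayed inequality shows that rerouting the mass onto $(x_1,y_2)$ and $(x_2,y_1)$ does not increase the cost. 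Since the comonotone coupling $(F_1^{-1}(U), F_2^{-1}(U))$ is the unique monotone coupling of two one-dimensional laws, the lower bound matches the upper bound and the quantile formula follows. To make the uncrossing argument rigorous at the level of general (possibly atomic) measures I would invoke the cyclical-monotonicity characterization of the support of an optimal plan afforded by Kantorovich duality, or else approximate $\mu_1,\mu_2$ by discrete measures and pass to the limit; this measure-theoretic bookkeeping is the main obstacle, whereas the combinatorial heart of the proof is the one-line swapping inequality above.

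Finally, for $p=1$ I would convert $\int_0^1 |F_1^{-1}(q) - F_2^{-1}(q)|\,\mathrm{d}q$ into $\int_{\mathbb{R}} |F_1(x) - F_2(x)|\,\mathrm{d}x$ by Fubini's theorem. Both integrals compute the area of the planar region lying between the graphs of the two distribution functions in the $(x,q)$-plane, namely $\{(x,q): q \text{ lies between } F_1(x) \text{ and } F_2(x)\}$: slicing horizontally (integrating over $q$) yields the quantile integral, while slicing vertically (integrating over $x$) yields $\int_{\mathbb{R}}|F_1 - F_2|$, using the elementary equivalence that $q$ lies between $F_1(x)$ and $F_2(x)$ if and only if $x$ lies between $F_1^{-1}(q)$ and $F_2^{-1}(q)$. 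The only care required is the standard handling of the generalized inverses at jumps and flat pieces of the $F_i$, which affects only a Lebesgue-null set and therefore changes neither integral.
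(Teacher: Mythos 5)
The paper never proves this lemma: it is imported verbatim, with a citation to Panaretos and Zemel, and only the $p=1$ identity is actually used downstream (via Lemma~\ref{rev_tri_lemma} in the proofs of Theorems~\ref{thm:ours_1} and \ref{thm:ours_3}). So there is no internal proof to compare yours against; what can be judged is whether your sketch correctly proves the classical result, and it does. The three ingredients you use are exactly the standard ones: inverse-transform sampling shows that $(F_1^{-1}(U),F_2^{-1}(U))$ is a legitimate coupling, giving the upper bound; the quadrangle (submodularity) inequality for the convex cost $|x-y|^p$ plus an uncrossing argument gives the matching lower bound; and the Fubini ``area between the graphs'' identity, resting on the equivalence $F_i^{-1}(q)\le x \iff q\le F_i(x)$, converts the quantile formula into $\int_{\mathbb{R}}|F_1-F_2|\,\mathrm{d}x$ at $p=1$.

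One refinement you should make. Of the two routes you propose for making the uncrossing rigorous, the cyclical-monotonicity route works cleanly only for $p>1$, where $|x-y|^p$ is \emph{strictly} submodular, so that cyclical monotonicity of the support of an optimal plan forces the support to be monotone and hence the plan to coincide with the comonotone coupling. For $p=1$ the cost is submodular but not strictly: with $\mu_1=\tfrac12(\delta_0+\delta_1)$ and $\mu_2=\tfrac12(\delta_2+\delta_3)$, both the monotone coupling and the crossing coupling have cost $2$ and are optimal, so cyclical monotonicity of an optimal plan does not identify it with the comonotone one. At $p=1$ you therefore need your second route (discrete approximation plus continuity of $\bfW_p$ under the approximation), or simply the observation that you only need \emph{optimality}, not uniqueness, of the monotone coupling. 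Alternatively, for the one case the paper uses there is a fully elementary argument bypassing optimal-transport machinery: for any coupling $(X,Y)$ of $(\mu_1,\mu_2)$ one has $F_1(t)-F_2(t)=\mathbb{P}(X\le t<Y)-\mathbb{P}(Y\le t<X)$, so that $\int_{\mathbb{R}}|F_1(t)-F_2(t)|\,\mathrm{d}t\le\mathbb{E}\,|X-Y|$, while for the comonotone coupling the two events have probabilities $(F_1(t)-F_2(t))^+$ and $(F_2(t)-F_1(t))^+$, giving equality; combined with your Fubini identity this proves both displayed formulas at $p=1$ in a few lines.
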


Now, for $q>0, \alpha>0$, $\gamma>0$ and $\mu \in \mathcal{P}\left(\mathbb{R}^d\right)$, let:
\begin{align*}
M_q(\mu):=\int_{\mathbb{R}^d}|x|^q \mu(\mathrm{d}x) \eqsp \text { and } \eqsp \mathcal{E}_{\alpha, \gamma}(\mu):=\int_{\mathbb{R}^d} \rme^{\gamma|x|^\alpha} \mu(\mathrm{d} x).
\end{align*}
Furthermore, consider an \iid~sequence $\left(X_k\right)_{k \geq 1}$ of $\mu$-distributed random variables and, for $n \geq 1$, define the empirical measure by:
\begin{align*}
\mu^{(n)}:=\frac{1}{n} \sum_{k=1}^n \updelta_{X_k}.
\end{align*}

We can now proceed to state the following result. 
\begin{lemma}[\cite{fournier2015rate}]
\label{thm:fournier}
Let $\mu \in \mathcal{P}\left(\mathbb{R}^d\right)$ and let $p>0$. Assume one of the three following conditions: \\
(1) $\alpha>p, \gamma>0, \mathcal{E}_{\alpha, \gamma}(\mu)<\infty$,\\
(2) $\alpha \in(0, p), \gamma>0, \mathcal{E}_{\alpha, \gamma}(\mu)<\infty$,\\
(3) $q>2 p, M_q(\mu)<\infty$.\\
Then for all $n \geq 1$, all $x \in(0, \infty)$,
\begin{align*}
\mathbb{P}\left(\bfW_p(\mu^{(n)}, \mu) \geq x\right) \leq a(n, x) \mathbf{1}_{\{x \leq 1\}}+b(n, x)
\end{align*}
where
\begin{align*}
a(n, x)=C \begin{cases}\exp \left(-c n x^2\right) & \text { if } p>d / 2 \\ \exp \left(-c n(x / \log (2+1 / x))^2\right) & \text { if } p=d / 2 \\ \exp \left(-c n x^{d / p}\right) & \text { if } p \in[1, d / 2)\end{cases}
\end{align*}
and
\begin{align*}
b(n, x)=C \begin{cases}\exp \left(-c n x^{\alpha / p}\right) \mathbf{1}_{\{x>1\}} & \text { under (1), } \\ \exp \left(-c(n x)^{(\alpha-\varepsilon) / p}\right) \mathbf{1}_{\{x \leq 1\}}+\exp \left(-c(n x)^{\alpha / p}\right) \mathbf{1}_{\{x>1\}} & \forall \varepsilon \in(0, \alpha) \text { under (2), } \\ n(n x)^{-(q-\varepsilon) / p} & \forall \varepsilon \in(0, q) \text { under (3). }\end{cases}
\end{align*}
The positive constants $C$ and $c$ depend only on $p, d$ and either on $\alpha, \gamma, \mathcal{E}_{\alpha, \gamma}(\mu)$ (under (1)) or on $\alpha, \gamma, \mathcal{E}_{\alpha, \gamma}(\mu), \varepsilon$ (under (2)) or on $q, M_q(\mu), \varepsilon$ (under (3)).
\end{lemma}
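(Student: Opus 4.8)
The plan is to follow the dyadic multiscale partitioning strategy of Fournier and Guillin \cite{fournier2015rate}. The core idea is to reduce the control of $\bfW_p(\mu^{(n)}, \mu)$ — a transport problem — to the control of the mass discrepancies $|\mu^{(n)}(F) - \mu(F)|$ over a nested family of dyadic cubes $F$, each of which is a simple empirical average of \iid~indicators that concentrates via elementary Bernstein/Hoeffding-type inequalities. Three ingredients then have to be combined: a deterministic multiscale upper bound for the transport cost, sharp concentration of the cube masses, and a truncation argument that uses the hypotheses (1)--(3) to discard the contribution of mass far from the origin.

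First I would establish the deterministic block inequality. At each scale $\ell \geq 0$, partition $\mathbb{R}^d$ into dyadic cubes $F \in \mathcal{P}_\ell$ of side $2^{-\ell}$, so that $\mathcal{P}_{\ell+1}$ refines $\mathcal{P}_\ell$ and at most $\lesssim (R 2^{\ell})^d$ of them meet the ball $B(0,R)$. By constructing a transport plan that successively matches the two measures scale by scale, one obtains, up to the mass lying outside the truncation ball, a bound of the shape
$$ \bfW_p^p(\mu, \nu) \lesssim \sum_{\ell \geq 0} 2^{-\ell p} \sum_{F \in \mathcal{P}_\ell} \bigl| \mu(F) - \nu(F) \bigr| . $$
This converts the transport cost into a weighted sum of $\ell^1$-type discrepancies between the cube masses, which is exactly the quantity amenable to concentration.

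Next I would inject the randomness. For a fixed cube $F$, $n\mu^{(n)}(F)$ follows a $\mathrm{Binomial}(n, \mu(F))$ law, so $\mathbb{E}|\mu^{(n)}(F) - \mu(F)| \leq \sqrt{\mu(F)/n}$; summing this over the $\lesssim (R 2^\ell)^d$ cubes meeting $B(0,R)$ and applying Cauchy--Schwarz ($\sum_F \sqrt{\mu(F)} \leq (R2^\ell)^{d/2}$) controls the expectation of the multiscale sum, while the exponential deviation follows from a bounded-differences argument, since altering one sample perturbs $\sum_F |\mu^{(n)}(F) - \mu(F)|$ by $O(1/n)$. Balancing the weight $2^{-\ell p}$ against the cube count $2^{\ell d}$ yields the three regimes of $a(n,x)$: the scale series converges, giving the dimension-free rate $\exp(-cnx^2)$, precisely when $p > d/2$; it is logarithmically critical at $p = d/2$; and it is dominated by the finest retained scale when $p < d/2$, giving $\exp(-cnx^{d/p})$. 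The truncation radius $R$ is then chosen and the mass of $\mu$ outside $B(0,R)$ is estimated by Markov's inequality applied to $\rme^{\gamma|x|^\alpha}$ using $\mathcal{E}_{\alpha, \gamma}(\mu)<\infty$ in cases (1)--(2), or to $|x|^q$ using $M_q(\mu)<\infty$ in case (3); optimizing $R$ against the deviation level $x$ produces the tail term $b(n,x)$ and its three forms.

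The main obstacle is the simultaneous optimization over the number of scales retained, the truncation radius $R$, and the deviation $x$, so that the bound is sharp across all three dimension regimes at once. The delicate point is the critical case $p = d/2$, where the scale series diverges only logarithmically and must be summed carefully to produce the $\log(2 + 1/x)$ correction; ensuring that the constants $C, c$ depend only on $(p,d)$ and the relevant moment or exponential functionals — and not on $n$ or $x$ — is what makes the argument technical.
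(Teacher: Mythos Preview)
The paper does not prove this lemma: it is stated in the appendix under ``Existing results'' and attributed directly to \cite{fournier2015rate}, with no proof given. So there is nothing in the paper to compare your argument against.

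That said, your sketch faithfully outlines the strategy of the original Fournier--Guillin paper: the dyadic partitioning bound $\bfW_p^p(\mu,\nu) \lesssim \sum_{\ell} 2^{-\ell p} \sum_{F \in \mathcal{P}_\ell} |\mu(F)-\nu(F)|$, the binomial concentration for each cube mass, the Cauchy--Schwarz summation over cubes, the trichotomy in $p$ versus $d/2$ from balancing $2^{-\ell p}$ against the cube count, and the truncation handled via Markov on $\rme^{\gamma|x|^\alpha}$ or $|x|^q$. For the purposes of this paper, simply citing \cite{fournier2015rate} (as the authors do) is the appropriate ``proof''; your outline is correct as a summary of that reference but would need the full optimization over scales, truncation radius, and deviation level to be a self-contained argument.
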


\section{Technical Lemmas}
\label{appx:sec_B}
\begin{lemma}
\label{rev_tri_lemma}
    Let $X$ and $Y$ be two random vectors in $\mathbb{R}^d$. Then, we have
    \begin{align}
        \label{rev_tri_lemma_1} \bfW_1(\scrL(X),\scrL(Y)) \geq& \bfW_1(\scrL(\|X\|),\scrL(\|Y\|)) .
    \end{align}
\end{lemma}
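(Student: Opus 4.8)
The plan is to invoke the elementary fact that the Wasserstein-$1$ distance is non-expansive under the simultaneous application of a $1$-Lipschitz map to both arguments, together with the observation that the Euclidean norm $x \mapsto \|x\|$ is exactly such a map by the reverse triangle inequality $\absolute{\|x\|-\|y\|} \leq \|x-y\|$. Concretely, I would transport any coupling of $\scrL(X)$ and $\scrL(Y)$ to a coupling of the laws of the norms via the map $T\colon (x,y) \mapsto (\|x\|,\|y\|)$, and then compare costs.

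First I would fix $\epsilon>0$ and pick a coupling $\gamma \in \Gamma(\scrL(X),\scrL(Y))$ that is $\epsilon$-optimal, i.e. $\int \|x-y\|\,\rmd\gamma(x,y) \leq \bfW_1(\scrL(X),\scrL(Y)) + \epsilon$ (using a near-optimal coupling avoids having to argue existence of an exact optimizer). Define $T\colon \rset^d \times \rset^d \to \rset \times \rset$ by $T(x,y)=(\|x\|,\|y\|)$ and let $\gamma_T := T_{\#}\gamma$ be the pushforward. The next step is to verify that $\gamma_T \in \Gamma(\scrL(\|X\|),\scrL(\|Y\|))$: its first marginal is the pushforward of the first marginal of $\gamma$ (which is $\scrL(X)$) under $x\mapsto\|x\|$, hence equals $\scrL(\|X\|)$, and symmetrically for the second marginal. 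Therefore $\gamma_T$ is an admissible coupling for the one-dimensional problem on the right-hand side.

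Since $\gamma_T$ is admissible but not necessarily optimal, the definition of $\bfW_1$ yields the bound
\begin{align*}
\bfW_1(\scrL(\|X\|),\scrL(\|Y\|)) \leq \int_{\rset\times\rset} \absolute{s-t}\,\rmd\gamma_T(s,t) = \int_{\rset^d\times\rset^d} \absolute{\|x\|-\|y\|}\,\rmd\gamma(x,y),
\end{align*}
where the equality is the change-of-variables formula for the pushforward. Applying the reverse triangle inequality pointwise under the integral gives $\absolute{\|x\|-\|y\|}\leq \|x-y\|$, so the right-hand side is at most $\int \|x-y\|\,\rmd\gamma \leq \bfW_1(\scrL(X),\scrL(Y)) + \epsilon$. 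Letting $\epsilon \downarrow 0$ completes the argument.

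There is no genuine obstacle here; the only points requiring a moment of care are measurability of $T$ (immediate, as the norm is continuous) and the verification that $\gamma_T$ has the correct marginals, which is where the argument actually uses that $\gamma$ is a coupling rather than an arbitrary measure. The reverse triangle inequality is the single structural ingredient doing the real work, and it is precisely what makes the norm a $1$-Lipschitz contraction on which the monotonicity of $\bfW_1$ rests.
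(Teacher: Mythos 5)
Your proof is correct and takes essentially the same route as the paper's: push a coupling of $\scrL(X)$ and $\scrL(Y)$ forward under $(x,y)\mapsto(\|x\|,\|y\|)$, check that the pushforward is a coupling of $\scrL(\|X\|)$ and $\scrL(\|Y\|)$, and conclude via the pointwise reverse triangle inequality $\absolute{\|x\|-\|y\|}\leq\|x-y\|$. The only (harmless) difference is that you work with an $\epsilon$-optimal coupling and let $\epsilon\downarrow 0$, whereas the paper invokes the existence of an exact optimal coupling; your variant sidesteps that existence result at no cost.
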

\begin{proof}
    By the definition of the Wasserstein distance, we have that
    \begin{align}
 \bfW_1(\scrL(X),\scrL(Y)) =& \inf_{\gamma\in \Gamma(\scrL(X),\scrL(Y))} \int_{{\mathbb{R}^d} \times \mathbb{R}^d} \|x-y\| \mathrm{d}\gamma(x,y) \\
         =& \int_{\mathbb{R}^d\times \mathbb{R}^d} \|x-y\| \mathrm{d}\gamma^*(x,y),
    \end{align}
    where $\gamma^*$ is the coupling that attains the infimum (the proof of the existence of such coupling for any Wasserstein-$p$ distance, where $p\geq1$, can be found in, e.g., \cite[Theorem 8.10.45]{bogachev2006measure}). Then, by the reverse triangle inequality, we have
    \begin{align}
\bfW_1(\scrL(X),\scrL(Y)) \geq& \int_{\mathbb{R}^d \times \mathbb{R}^d} \left| \|x\|- \|y\|  \right| \mathrm{d}\gamma^*(x,y) \\
        =& \int_{\mathbb{R}_+ \times \mathbb{R}_+} |x-y| \mathrm{~d} T_\#\gamma^*(x,y),
        \end{align}
        where $T :\mathbb{R}^d \times \mathbb{R}^d \mapsto \mathbb{R}_+ \times \mathbb{R}_+$ is the map $(x,y) \mapsto (\|x\|, \|y\|)$ and $T_\#\gamma^*$ is the pushforward measure such that for a measurable set $B\subset \mathbb{R}_+ \times \mathbb{R}_+$, we have $T_\#\gamma^*(B) = \gamma^*(T^{-1}(B))$. As $T_\#\gamma^* $ is a coupling between $\|X\|$ and $\|Y\|$, we have that 
        \begin{align}
             \bfW_1(\scrL(X),\scrL(Y)) \geq&  \inf_{\gamma \in \Gamma(\scrL(\|X\|), \scrL(\|Y\|))} \int_{\mathbb{R}_+ \times \mathbb{R}_+} |x-y| \mathrm{~d}\gamma(x,y) \\
                =& \bfW_1(\scrL(\|X\|),\scrL(\|Y\|)).
        \end{align}
    This concludes the proof.
\end{proof}

\section{Proofs}
\label{appx:sec_C}
This section contains the proofs of our theoretical findings.
\subsection{Proof of Theorem \ref{thm:ours_2}} 
\label{pf:thm_2}
\begin{proof}

First, note that the unique stationary distributions of $(X_k)_{k\geq 0}$ and $(X_k^{(n)})_{k\geq 0}$ are denoted respectively by $\pi$ and $\pi^{(n)}$. 
Denote by $E_n$ the event on which $(X_k^{(n)})_{k\geq0}$ has a stationary distribution $\pi^{(n)}$ such that $\mathbb{P}(E_n)\geq1-\epsilon_n$ (where $\epsilon_n \in [0,1]$ is the probability that $(X^{(n)}_k)_{k\geq 0}$ is not ergodic, in the Wasserstein sense).
Given $E_n$, let $(\bfA_k, \bfA_k^{(n)})_{k\geq1}$ and $(\bbf_k, \bbf_k^{(n)})_{k\geq1}$ be the sequences of \iid~optimal couplings for $\muA$ and $\munA$, $\mub$ and $\munb$ respectively. Therefore, by construction, for any $k\in\mathbb{N}$, $\mathbb{E}[\|\bfA_k-\bfA_k^{(n)}\|]= \bfW_1(\muA, \munA)$, and $\mathbb{E}[\|\bbf_k-\bbf_k^{(n)}\|]= \bfW_1(\mub, \munb)$. Based on these two sequences, we consider the processes $(X_k)_{k\geq0}$, $(X_k^{(n)})_{k\geq0}$, $(Y_k)_{k\geq0}$ defined by the recursions:

\begin{enumerate}
    \item $X_{k+1} =  (\Idd- \eta \bfA_{k+1}) X_{k} + \eta \bbf_{k+1}$ , where $X_0 \sim \pi$,
    \item  $X_{k+1}^{(n)} = (\Idd- \eta \bfA_{k+1}^{(n)}) X_{k}^{(n)} + \eta \bbf_{k+1}^{(n)}$, where $X_0^{(n)} \sim \pi^{(n)}$,
    \item $Y_{k+1} = (\Idd - \eta \bfA_{k+1}) Y_k + \eta \bbf_{k+1}$, where $Y_0 = X_0^{(n)}$. 
\end{enumerate}
Note that $(X_k)_{k\geq0}$ corresponds to the online SGD recursion \eqref{eqn:on_sgd_lin}, and $(X_k^{(n)})_{k\geq0}$ to the offline SGD recursion \eqref{eqn:off_sgd_lin}. In addition, since these two Markov chains are started at stationarity, we have: 
\begin{align*}
\bfW_1(\pi, \pi^{(n)})=\bfW_1(\scrL(X_0), \scrL(X^{(n)}_0))=\bfW_1(\scrL(X_k), \scrL(X^{(n)}_k)),\ \forall k.
\end{align*} 

With these definitions at hand and this observation, using triangle inequality, we can obtain:
\begin{align*}
    \bfW_1(\pi, \pi^{(n)}) &= 
    \bfW_1(\scrL(X_k), \scrL(X^{(n)}_k)) \\
    &\leq \bfW_1(\scrL(X_k), \scrL(Y_k)) + \bfW_1(\scrL(Y_k), \scrL(X^{(n)}_k)).
\end{align*}

Now, by \cite[Theorem 8]{gurbuzbalaban2021heavy}, we have that $Y_k$ is geometrically ergodic with respect to its stationary distribution $\pi$, i.e., there exist constants $c_\rho>0$,  $\rho \in(0,1)$ such that the following inequality holds:
\begin{align}
    \bfW_1(\scrL(Y_k),\pi) \leq c_\rho\bfW_1(\scrL(Y_0),\pi) \rme^{-\rho k} \eqsp, \text{ for any } k \geq 0 \eqsp.
\end{align}
Therefore, we have:
\begin{align*}
    \bfW_1(\pi, \pi^{(n)}) \leq c_\rho\rme^{-\rho k}\bfW_1(\pi, \scrL(Y_0)) + \bfW_1(\scrL(Y_k), \scrL(X^{(n)}_k)),
\end{align*}
as $\bfW_1(\scrL(X_k), \scrL(Y_k))=\bfW_1(\pi, \scrL(Y_k))\leq c_\rho\rme^{-\rho k}\bfW_1(\pi, \scrL(Y_0))$. Rearranging the above terms, we get     $\bfW_1(\pi, \pi^{(n)}) (1-c_\rho\rme^{-\rho k}) \leq \bfW_1(\scrL(Y_k), \scrL(X^{(n)}_k))$ implying: 
\begin{align}
\label{eqn:wass_dist}
    \bfW_1(\pi, \pi^{(n)}) \leq (1-c_\rho\rme^{-\rho k})^{-1} \bfW_1(\scrL(Y_k), \scrL(X^{(n)}_k)).
\end{align}

To bound the right-hand side of \eqref{eqn:wass_dist}, we consider the following difference by using the recursion definitions:
\begin{align}
    \label{eqn:difference}
    Y_{k+1}-X^{(n)}_{k+1} 
    &= Y_k-\eta \bfA_{k+1} Y_k+\eta \bbf_{k+1} - X_k^{(n)} + \eta \bfA_{k+1}^{(n)} X_k^{(n)} - \eta \bbf_{k+1}^{(n)} \\
    &= (\Idd-\eta \bfA_{k+1})(Y_k-X^{(n)}_k) -\eta (\bfA_{k+1}-\bfA^{(n)}_{k+1})X^{(n)}_k +  \eta(\bbf_{k+1}-\bbf^{(n)}_{k+1}).
\end{align}
Writing out the recursion, we can obtain:
\begin{align*}
    Y_{k+1}-X^{(n)}_{k+1} &= 
    (Y_0-X^{(n)}_0)\prod_{i=0}^{k}(\Idd-\eta \bfA_{i+1}) \\
    &- \eta\sum_{i=0}^{k}\left[X^{(n)}_{i}(\bfA_{i+1}-\bfA^{(n)}_{i+1})\prod_{j=i+2}^{k+1}(\Idd-\eta \bfA_{j})\right]  \\
    &+ \eta \sum_{i=0}^k\left[(\bbf_{i+1}-\bbf^{(n)}_{i+1})\prod_{j=i+2}^{k+1}(\Idd-\eta \bfA_{j})\right],
\end{align*}
where for any sequence $(a_i)_{i\geq0}$, we let $\prod_{i=j}^k a_i=1$ when $j>k$. Now, taking the norm of both sides, using the triangle inequality, and taking expectations given $E_n$, we obtain: 
\begin{equation}
  \begin{split}
    \label{eqn:first_inequality}  
    \mathbb{E}[\|Y_{k+1}-X^{(n)}_{k+1}\|] &\leq
    \mathbb{E} [\|(Y_0-X^{(n)}_0)\prod_{i=0}^k(\Idd-\eta \bfA_{i+1})\|] \\
    &+ \eta \mathbb{E}[\sum_{i=0}^k\|X^{(n)}_{i}(\bfA_{i+1}-\bfA^{(n)}_{i+1})\prod_{j=i+2}^{k+1}(\Idd-\eta \bfA_{j})\|] \\
    &+ \eta \mathbb{E}[\sum_{i=0}^k\|(\bbf_{i+1}-\bbf^{(n)}_{i+1})\prod_{j=i+2}^{k+1} (\Idd-\eta \bfA_{j})\|].
    \end{split}
\end{equation}

We shall now analyze each of the three summands separately. 
For the first term in \eqref{eqn:first_inequality}, as we start with $Y_0=X^{(n)}_0$, it equals zero. For the second term in \eqref{eqn:first_inequality}, we have:
\begin{align*}
    &\eta \mathbb{E} [\sum_{i=0}^k\|X^{(n)}_{i}(\bfA_{i+1}-\bfA^{(n)}_{i+1})\prod_{j=i+2}^{k+1}(\Idd-\eta \bfA_{j})\|] \\
    &\labelrel={eqq:2} \eta \sum_{i=0}^k\mathbb{E} [\|X^{(n)}_{i}(\bfA_{i+1}-\bfA^{(n)}_{i+1})\prod_{j=i+2}^{k+1}(\Idd-\eta \bfA_{j})\|]  \\ 
    &\labelrel\leq{eqq:3} \eta \sum_{i=0}^k\mathbb{E} [\|X^{(n)}_{i}(\bfA_{i+1}-\bfA^{(n)}_{i+1})\|]\mathbb{E}^{k-i}[\|\Idd-\eta \bfA_1\|] \\ &\labelrel\leq{eq:4} \eta \sum_{i=0}^k (\mathbb{E} [\|X^{(n)}_{i}\|^q])^\frac{1}{q} (\mathbb{E}[\|\bfA_{i+1}-\bfA^{(n)}_{i+1}\|^p])^\frac{1}{p}\mathbb{E}^{k-i}[\|\Idd-\eta \bfA_1\|],
\end{align*}
where \eqref{eqq:2} follows from the linearity of expectation, \eqref{eqq:3} follows as $\bfA_1,\dots,\bfA_{i-1}$ are \iid~and independent from $\bfA_i, \dots, \bfA_{k+1}$, and \eqref{eq:4} is obtained by applying the H\"older's inequality, where $p$ and $q$ are such that $1/p+1/q=1$, with $p,q \in (1,+\infty)$, $q<\alpha$ and $p>d/2$ (used later in the proof), where $\alpha$ is the tail index.

As the chain $(X_k^{(n)})_{k\geq0}$ starts from its stationary distribution $\pi^{(n)}$, we have that $(\mathbb{E}[\|X^{(n)}_{i}\|^q])^{1/q} = (\mathbb{E}[\|X^{(n)}_{0}\|^q])^{1/q}, \forall i$. Using the fact that $(\bfA_i-\bfA^{(n)}_i)_{i=1}^{k+1}$ are \iid~random variables (the randomness arises from the empirical measure $\munA$), $\mathbb{E}[\|\Idd- \eta \bfA_1\|^\alpha] = 1$ and $\alpha>1$, we have that:
\begin{align}
\label{appx_axx_on_A}
    \delta_A:=\mathbb{E}[\|\Idd- \eta \bfA_1\|] <1.
\end{align}

Therefore, we can obtain the following bound:
\begin{align*}
    &\eta \mathbb{E} [\sum_{i=0}^k\|X^{(n)}_{i}(\bfA_{i+1}-\bfA^{(n)}_{i+1})\prod_{j=i+2}^{k+1}(\Idd-\eta \bfA_{j})\|]\\
    &\leq \eta (\mathbb{E} [\|X^{(n)}_{0}\|^q])^\frac{1}{q} (\mathbb{E}[\|\bfA_{1}-\bfA^{(n)}_{1}\|^p])^\frac{1}{p} \sum_{i=0}^k \mathbb{E}^{k-i}[\|\Idd-\eta \bfA_1\|] \\
    &\leq \eta (\mathbb{E} [\|X^{(n)}_{0}\|^q])^\frac{1}{q} (\mathbb{E}[\|\bfA_{1}-\bfA^{(n)}_{1}\|^p])^\frac{1}{p} \frac{1}{1-\delta_A},
\end{align*}
where the last equality follows from \eqref{appx_axx_on_A} and the power series sum formula. Using the same arguments, we can bound the third term in \eqref{eqn:first_inequality}:
\begin{align*}
    \eta \mathbb{E}[\sum_{i=0}^k\|(\bbf_{i+1}-\bbf^{(n)}_{i+1})\prod_{j=i+2}^{k+1} (\Idd-\eta \bfA_{j})\|] \leq 
    \eta \mathbb{E}[\|\bbf_1-\bbf^{(n)}_1\|]\frac{1}{1-\delta_A}.
\end{align*}
Combining the above, we can obtain that:
\begin{align}
    \nonumber \mathbb{E}[\|Y_{k+1}-X^{(n)}_{k+1}\|]  &\leq
    \eta (\mathbb{E} [\|X^{(n)}_{0}\|^q])^\frac{1}{q} (\mathbb{E}[\|\bfA_{1}-\bfA^{(n)}_{1}\|^p])^\frac{1}{p} \frac{1}{1-\delta_A} + \eta \mathbb{E}[\|\bbf_1-\bbf^{(n)}_1\|]\frac{1}{1-\delta_A} \\
    \nonumber &= \frac{\eta}{1-\delta_A} \left((\mathbb{E}[\|X^{(n)}_{0}\|^q])^\frac{1}{q} (\mathbb{E}[\|\bfA_{1}-\bfA^{(n)}_{1}\|^p])^\frac{1}{p}+ \mathbb{E}[\|\bbf_1-\bbf^{(n)}_1\|]\right) \\
    &\nonumber \leq \frac{\eta}{1-\delta_A} \left((\mathbb{E} [\|X^{(n)}_{0}\|^q])^\frac{1}{q} (\mathbb{E}[\|\bfA_{1}-\bfA^{(n)}_{1}\|^p])^\frac{1}{p}+ (\mathbb{E}[\|\bbf_1-\bbf^{(n)}_1\|^p])^{\frac{1}{p}}\right) \\
    &= \frac{\eta}{1-\delta_A} \left((\mathbb{E} [\|X^{(n)}_{0}\|^q])^\frac{1}{q} \bfW_p(\muA,\munA)+ \bfW_p(\mub,\munb)\right)\label{w1_dist_nonexp_1}\\
    &\leq \frac{\eta}{1-\delta_A} \left(((\mathbb{E}[\|X^{(n)}_{0}\|^q])^\frac{1}{q}+1) \bfW_p(\muAb,\munAb)\right), \label{w1_dist_nonexp}
\end{align}
where the penultimate inequality follows from Jensen's inequality: it implies that $\bfW_p(\mu, \nu) \leq \bfW_q(\mu, \nu)$, for $p\leq q$ (see, e.g., \cite{villani2009wasserstein}). The last inequality follows as $\bfW_p(\mu_{\bfA},\mu_{\bfA}^{(n)})\leq\bfW_p(\mu_{\bfA,\bbf},\mu_{\bfA,\bbf}^{(n)})$, and $\bfW_p(\mub,\munb)\leq\bfW_p(\muAb,\munAb)$.  Together with \eqref{eqn:wass_dist}, we have:

\begin{align*}
    \bfW_1(\pi, \pi^{(n)}) \leq ((\mathbb{E} [\|X^{(n)}_{0}\|^q])^\frac{1}{q}+1) (1-c_\rho\rme^{-\rho k})^{-1} \frac{\eta}{1-\delta_A} \bfW_p(\muAb,\munAb) \eqsp.
\end{align*}
By taking the limit as $k\rightarrow \infty$, we can conclude the proof of Theorem \ref{thm:ours_2}.
\end{proof}

\subsection{Proof of Theorem \ref{thm:ours_1}} 
\begin{proof}

Using results from Theorem \ref{thm:ours_2}, we now proceed to bound the Wasserstein-$p$ distance between the probability laws of $\bfA_1$ and $\bfA_1^{(n)}$. Reminiding ourselves that each element in the \iid~sequence $(a_k)_{k\geq1}$ follows a $\mathrm{N}(0, \sigma^2 \Idd)$, we denote this measure with $\mu_a$. Correspondingly, we denote its empirical measure with $\mu_a^{(n)}=n^{-1}\sum_{j=1}^n\updelta_{a_j}$. Given $E_n$, we let $(a_k, a_k^{(n)})_{k\geq1}$ be the sequence of \iid~optimal couplings for $\mu_a$ and $\mu_a^{(n)}$, so that by construction, for any $k\in\mathbb{N}$, $(\mathbb{E}[\|a_k-a_k^{(n)}\|^{2p}])^{1/2p}= \bfW_{2p}(\mu_a, \mu_a^{(n)})$. Now:
\begin{align*}
    \bfW_p(\muA,\munA) &\leq (\mathbb{E}[\|\frac{1}{b}\sum_{i\in\Omega_1} a_{i} a_{i}^\top-\frac{1}{b}\sum_{j\in\Omega_1^{(n)}}a_{j}a_{j}^{\top}\|^p])^{1/p} \\
    &\labelrel\leq{eq:5} \frac{1}{b}b(\mathbb{E}[\| (a_{1} a_{1}^\top-a_{1}^{(n)}a_{1}^{(n)\top})\|^p])^{1/p} \\
    &= (\mathbb{E}[\|a_1a_1^\top-a_1a_1^{(n)\top}+a_1a_1^{(n)\top}-a^{(n)}_1 a^{(n)\top}_1\|^p])^{1/p} \\
    & \labelrel\leq{eq:7}   (\mathbb{E}[\|a_1a_1^\top-a_1a_1^{(n)\top}\|^p])^{1/p}+(\mathbb{E}[\|a_1a_1^{(n)\top}-a^{(n)}_1 a^{(n)\top}_1\|^p])^{1/p} \\
    &= (\mathbb{E}[\|a_1\|^p\|a_1-a_1^{(n)}\|^p])^{1/p}+(\mathbb{E}[\|a^{(n)}_1\|^p\|a_1-a^{(n)}_1 \|^p])^{1/p} \\
    &\labelrel\leq{eq:8} \left((\mathbb{E}[\|a_1\|^{2p}])^{1/{2p}}+(\mathbb{E}[\|a_1^{(n)}\|^{2p}])^{1/{2p}}\right)(\mathbb{E}[\|a_1-a_1^{(n)}\|^{2p}])^{1/{2p}} 
\end{align*}
where \eqref{eq:5} follows from Minkowski's inequality and the fact that $(a_k)_{k\geq1}$ are \iid~random variables, as well as $(a_k^{(n)})_{k\geq1}$. %
Inequality \eqref{eq:7} follows due to Minkowski's inequality, and \eqref{eq:8} follows using the generalized H\"older's inequality with $1/{2p}+1/{2p}=1/p$. 

Now, we can proceed to utilize Lemma \ref{thm:fournier} and \eqref{w1_dist_nonexp_1}. In order to apply Lemma \ref{thm:fournier} to the difference of measures between $\bfA_1$ and $\bfA^{(n)}_1$ (i.e., between $a_1$ and $a^{(n)}_1$), as $\mathbb{E}[\|X^{(n)}_0\|^q]^{1/q}<\infty$ for all $q<\alpha$, we require $q<\alpha$ and $1/p+1/q=1$ (due to H\"older's inequality).

First, we select $p$ large and $q$ small enough for both $q<\alpha$ and $p>d/2$. Now, in order to satisfy the requirements of Lemma \ref{thm:fournier}, we set $\epsilon^*:=\sqrt{\frac1{nC_1}\log \frac{4c_1}{\zeta}}$. This choice allows us to obtain that, with probability greater than $1-\zeta/2$, $(\mathbb{E}[\|a_1-a_1^{(n)}\|^{2p}])^{1/{2p}}=\bfW_{2p}(\mu_{a}, \mu_{a}^{(n)})<\epsilon^*$. Therefore, we have that, with probability greater than $1-\zeta/2$:
\begin{align*}
    (\mathbb{E}[\|a_1-a_1^{(n)}\|^{2p}])^{1/{2p}}\leq \sqrt{\frac1{nC_1}\log \frac{4c_1}{\zeta}} = \mathcal{O}(n^{-1/2}), 
\end{align*}
where the positive constants $C_1$ and $c_1$ depend only on $p$, $d$, $\mu_{a_1}$. 
Therefore, with probability greater than $1-\zeta/2$:
\begin{align}
    \nonumber (\mathbb{E}[\|\bfA_1-\bfA^{(n)}_1\|^{p}])^{1/p} &\leq  ((\mathbb{E}[\|a_1\|^{2p}])^{1/{2p}}+ (\mathbb{E}[\|a_1^{(n)}\|^{2p}])^{1/{2p}})\sqrt{\frac1{nC_1}\log \frac{4c_1}{\zeta}} \\
    &= C_2 n^{-1/2}\sqrt{\log\frac{c_3}{\zeta}}, \label{pf:thm_1_part_1}
\end{align}
with $C_2=\sqrt{\frac1{C_1}}((\mathbb{E}[\|a_1\|^{2p}])^{1/{2p}}+(\mathbb{E}[\|a_1^{(n)}\|^{2p}])^{1/{2p}})$ and $c_3=4c_1$.

In order to bound $(\mathbb{E}[\|\bbf_k-\bbf_k^{(n)}\|^p])^{1/p}$, we can apply Lemma \ref{thm:fournier} again to obtain constants $C_4$ and $c_5$ such that, with probability greater than $1-\zeta/2:$
\begin{align}
    (\mathbb{E}[\|\bbf_k-\bbf_k^{(n)}\|^p])^{1/p} \leq C_4 n^{-1/2}\sqrt{\log\frac{c_5}{\zeta}} = \mathcal{O}(n^{-1/2}).\label{pf:thm_1_part_2}
\end{align}
 Using \eqref{pf:thm_1_part_1}, \eqref{pf:thm_1_part_2} with Theorem \ref{thm:ours_2} (see \eqref{eqn:wass_dist} and \eqref{w1_dist_nonexp_1}), and denoting $C_6=C_2(\mathbb{E}[\|X_0^{(n)}\|^q])^{1/q}+C_4$ and $c_7 = \max\{c_3, c_5\}$ we can obtain, with probability greater than $1-\zeta$:
\begin{align}
\label{eqn:final_proof_1}
\bfW_1(\pi, \pi^{(n)}) \leq \frac{\eta}{1-\delta} \left(C_6\sqrt{\log\frac{c_7}{\zeta}}\right)n^{-1/2}.
\end{align}
Therefore, we obtain that, with probability greater than $1-\zeta$, $\bfW_1(\pi, \pi^{(n)}) =\mathcal{O}(n^{-1/2})$. 

Due to \cite{gurbuzbalaban2021heavy} we have $\lim_{t \rightarrow \infty} t^{\alpha} \mathbb{P}\left(\|X_{\infty}\|>t\right)\in(0,\infty)$. Now, let $\epsilon>0$. Then, there exists $t_0$ s.t. for all $t\geq t_0$, $|t^\alpha \mathbb{P}(\|X_{\infty}\|>t)-c|\leq \epsilon$, for some $c\in(0,\infty)$. From \eqref{eqn:final_proof_1}, we know that with probability greater than $1-\zeta$, there exist constants $\Tilde{c_1},\Tilde{c_2}$ such that $\bfW_1(\pi, \pi^{(n)}) \leq \Tilde{c_1}\sqrt{\log\frac{\Tilde{c_2}}{\zeta}}n^{-1/2}$.

Now, using Lemma \ref{rev_tri_lemma},
we can obtain that, with probability greater than $1-\epsilon_n-\zeta$:
\begin{align*} \Tilde{c_1}\sqrt{\log\frac{\Tilde{c_2}}{\zeta}}\frac{1}{n^{1/2}} &\geq  \bfW_1(\pi, \pi^{(n)}) \\
    &\geq \int_0^\infty \mid \mathbb{P}(\|X_\infty\|>t)-\mathbb{P}(\|X^{(n)}_\infty\|>t)\mid \mathrm{d} t \qquad \\
    &\geq \int_{t'}^{t''} \mid \mathbb{P}(\|X_\infty\|>t)-\mathbb{P}(\|X^{(n)}_\infty\|>t)\mid \mathrm{d} t,
\end{align*}
where we have used that $\lim_{t\rightarrow\infty}\mathbb{P}(\|X_\infty\|>t)=0$ and $\lim_{t\rightarrow\infty}\mathbb{P}(\|X^{(n)}_\infty\|>t)=0$, in order to select $t'$ and $t''$ large enough for the last inequality to hold. Now, we have that
\begin{align*}
    \int_{t'}^{t''} \mid \mathbb{P}(\|X_\infty\|>t)-\mathbb{P}(\|X^{(n)}_\infty\|>t)\mid \mathrm{d} t &\geq
    \int_{t'}^{t''} \frac{c-\epsilon}{t^\alpha} \mathrm{~d} t - \int_{t'}^{t''} \mathbb{P}(\|X^{(n)}_\infty\| > t) \mathrm{~d} t \\
    &\geq (t''-t')\frac{(c-\epsilon)}{(t'')^\alpha} - (t''-t')\mathbb{P}(\|X^{(n)}_\infty\| > t').
\end{align*}
Therefore, by choosing $t''=2t'$, we can obtain:
\begin{align}
\label{upp_bound_1_ineq_l}
    \mathbb{P}(\|X^{(n)}_\infty\| > t')&\geq 
    \frac{c-\epsilon}{2^\alpha (t')^\alpha} - \frac{\Tilde{c_1}}{t'n^{1/2}}\sqrt{\log\frac{\Tilde{c_2}}{\zeta}}.
\end{align}
Similarly, for any $\epsilon>0$ and $t>t_0$, we have $t^\alpha \mathbb{P}(\|X_\infty\|>t)\leq \epsilon+c$, and using similar arguments we obtain:
\begin{align*}  \Tilde{c_1}\sqrt{\log\frac{\Tilde{c_2}}{\zeta}}\frac{1}{n^{1/2}} &\geq
    \int_{t'}^{t''} \mathbb{P}(\|X^{(n)}_\infty\| > t) \mathrm{~d} t - \int_{t'}^{t''} \frac{c+\epsilon}{t^\alpha} \mathrm{~d} t \\
    &\geq (t''-t') \mathbb{P}(\|X^{(n)}_\infty\| > t'') - (t''-t')\frac{c+\epsilon}{(t')^\alpha}.
\end{align*}
Finally, choosing as before $t''=2t'$, we can obtain:
\begin{align}
\begin{split}
    \mathbb{P}(\|X^{(n)}_\infty\| > 2t') &\leq
    \frac{c+\epsilon}{(t')^\alpha}+\frac{\Tilde{c_1}}{t'n^{1/2}}\sqrt{\log\frac{\Tilde{c_2}}{\zeta}}.
\end{split}
\end{align}
Substituting for $\bar{t}=2t'$, we can obtain:
\begin{align}
\label{upp_bound_1_ineq_r}
    \mathbb{P}(\|X^{(n)}_\infty\| > \bar{t}) &\leq
     \frac{2^\alpha(c+\epsilon)}{\bar{t}^\alpha}+\frac{2\Tilde{c_1}}{\bar{t}n^{1/2}}\sqrt{\log\frac{\Tilde{c_2}}{\zeta}}.
\end{align}
Combining \eqref{upp_bound_1_ineq_l} and \eqref{upp_bound_1_ineq_r}, we obtain that with probability greater than $1-\zeta-\epsilon_n$, for any $\epsilon>0$ and $t>t_0$:
\begin{align}
    \mathbb{P}(\|X^{(n)}_\infty\| > t) &\geq \frac{1}{2^\alpha}
    \frac{c-\epsilon}{ t^\alpha} - \frac{\Tilde{c_1}}{tn^{1/2}}\sqrt{\log\frac{\Tilde{c_2}}{\zeta}}, \text{ and}\\
    \mathbb{P}(\|X^{(n)}_\infty\| > t) &\leq
    2^\alpha\frac{c+\epsilon}{t^\alpha}+ \frac{2\Tilde{c_1}}{tn^{1/2}}\sqrt{\log\frac{\Tilde{c_2}}{\zeta}}.
\end{align}
This concludes the proof of Theorem \ref{thm:ours_1}.
\end{proof}

\subsubsection*{Proof of results in Section \ref{sec:nonconveX_obj}}
Let us now consider a setting where $f(\cdot, z)$ is not necessarily a quadratic function. As before, we assume the data comes from an unknown data distribution $\mu_z$ on $\rset^{d+1}$. Furthermore, with $(\hat{Z}_k^{(n)})_{k\geq1}$, we denote the \iid~random variables associated with the empirical measure $\munz = n^{-1} \sum_{j=1}^{n}  \updelta_{\hat{Z}_j^{(n)}} \eqsp$, as defined in \eqref{sgd_rewritten}. We consider the case $b=1$ for simplicity. 

\subsection{Proof of Theorem \ref{thm:ours_4}}

\begin{proof}

First, we utilize the tail index limits from \cite{hodgkinson2021multiplicative} in the strongly convex setting from Theorem \ref{thm:hodgkinson}. We use $\alpha$ and $\beta$ in order to construct the proof and provide the same argument as in Section \ref{sec:quadr_opt}.

Denote by $E_n$ the event on which $(X_k^{(n)})_{k\geq0}$ has a stationary distribution $\pi^{(n)}$ such that $\mathbb{P}(E_n)\geq1-\epsilon_n$. Given $E_n$, let $(Z_k, \hat{Z}_k^{(n)})_{k\geq1}$ be the sequences of \iid~optimal couplings for $\mu_z$ and $\mu_z^{(n)}$. Therefore, by construction, for any $k\in\mathbb{N}$, $\mathbb{E}[\|Z_k-\hat{Z}_k^{(n)}\|]= \bfW_1(\mu_z, \mu_z^{(n)})$. Based on this sequence, we consider the processes $(X_k)_{k\geq0}$, $(X_k^{(n)})_{k\geq0}$, $(Y_k)_{k\geq0}$ defined by the recursions:
\begin{enumerate}
    \item $X_{k+1} = X_k - \eta\nabla f(X_k, Z_{k+1})$, where $X_0 \sim \pi$,
    \item $X_{k+1}^{(n)} = X_k^{(n)} - \eta  \nabla f(X_k^{(n)}, \hat{Z}_{k+1}^{(n)})$, where $X_0^{(n)} \sim \pi^{(n)}$,
    \item $Y_{k+1} = Y_k - \eta  \nabla f(Y_k, Z_{k+1})$
    where $Y_0 = X_0^{(n)}$,
\end{enumerate}
with the unique stationary distributions of $(X_k)_{k\geq 0}$ and $(X_k^{(n)})_{k\geq 0}$ being denoted by $\pi$ and $\pi^{(n)}$ respectively. Note again that $(X_k)_{k\geq0}$ corresponds to the online SGD recursion \eqref{eqn:on_sgd_gen}, and $(X_k^{(n)})_{k\geq0}$ to the offline SGD recursion \eqref{eqn:off_sgd_gen}, both with $b=1$.
%%%%%%%%%%%%%%%%%%%%%%%%%%%%%%
Under our assumptions, by the definition of geometric ergodicity, there exist constants $c_\rho>0$ and $\rho \in(0,1)$ such that the following inequality holds: 
\begin{align}
    \bfW_1(\scrL(Y_k),\pi) \leq c_\rho\bfW_1(\scrL(Y_0),\pi) \rme^{-\rho k}, \text{ for any } k \geq 0 \eqsp.
\end{align}

Analogously to \eqref{eqn:wass_dist}, we can conclude:
\begin{align}
\label{eqn:wass_dist2}
    \bfW_1(\pi, \pi^{(n)}) \leq (1-c_\rho\rme^{-\rho k})^{-1} \bfW_1(\scrL(Y_k), \scrL(X^{(n)}_k)).
\end{align}

The proof then utilizes Taylor's expansion with the remainder part in integral form (see, e.g., \cite{stewart2020calculus}) in the following way:

\begin{align}
\nabla f(Y_k,z)-\nabla f(X_k^{(n)},z)=\mathbf{H}^{(n)}_{k,z}(Y_k-X_k^{(n)})
\end{align}
with $\mathbf{H}^{(n)}_{k,z} := \int_0^1 \nabla^2 f(Y_k-u(Y_k-X_k^{(n)}), z) \mathrm{~d} u$. Therefore:

\begin{align*}
    &Y_{k+1}-X_{k+1}^{(n)} = Y_{k}-X_{k}^{(n)} - \eta\left(\nabla f(Y_k, Z_{k+1})-\nabla f(X_k^{(n)}, \hat{Z}_{k+1}^{(n)})\right) \\
    &= Y_{k}-X_{k}^{(n)} - \eta\left(\nabla f(Y_k, Z_{k+1}) - \nabla f(X_k^{(n)}, Z_{k+1}) + \nabla f(X_k^{(n)}, Z_{k+1}) -\nabla f(X_k^{(n)}, \hat{Z}_{k+1}^{(n)})\right) \\
    &= Y_{k}-X_{k}^{(n)} -\eta \mathbf{H}^{(n)}_{k,Z_{k+1}} \left(Y_k-X_k^{(n)}\right)- \eta\left(\nabla f(X_k^{(n)}, Z_{k+1}) - \nabla f(X_k^{(n)}, \hat{Z}_{k+1}^{(n)})\right) \\
    &= (\Idd-\eta  \mathbf{H}^{(n)}_{k,Z_{k+1}})(Y_k-X_k^{(n)}) -\eta\left(\nabla f(X_k^{(n)}, Z_{k+1}) - \nabla f(X_k^{(n)}, \hat{Z}_{k+1}^{(n)})\right).
\end{align*}

Now, taking the norm of both sides and using 
the triangle inequality together with Assumption \ref{asu:decoupling}, we can obtain the following:

\begin{align*}
    \|Y_{k+1}-X_{k+1}^{(n)}\| &\leq  \|\Idd-\eta  \mathbf{H}^{(n)}_{k,Z_{k+1}}\|\|Y_k-X_k^{(n)}\| +\eta\|\nabla f(X_k^{(n)}, Z_{k+1}) - \nabla f(X_k^{(n)}, \hat{Z}_{k+1}^{(n)})\| \\
    &\leq \|\Idd-\eta  \mathbf{H}^{(n)}_{k,Z_{k+1}}\|\|Y_k-X_k^{(n)}\| +\eta L(\|X_k^{(n)}\|+1)\|Z_{k+1}-\hat{Z}_{k+1}^{(n)}\|.
\end{align*}

Now, 
\begin{align}
\nonumber \|\Idd-\eta \int_0^1 \nabla^2 f(y-u(y-x), z) \mathrm{~d} u\| &=\|\int_0^1 \Idd-\eta \nabla^2 f(y-u(y-x), z) \mathrm{~d} u\| \\
\nonumber & \leq \int_0^1\left\|\Idd-\eta \nabla^2 f(y-u(y-x), z)\right\| \mathrm{d} u \\
\nonumber & \leq \int_0^1\sup\nolimits_{x \in \mathbb{R}^d}\left\|\Idd-\eta \nabla^2 f(x, z)\right\| \mathrm{d} u\\
& =R(z), \label{auX_result}
\end{align}

where again $R(z) =\sup\nolimits_{x \in \mathbb{R}^d}\left\|\Idd-\eta  \nabla^2 f(x, z)\right\|$. This allows us to conclude the following:
\begin{align*}
    \|Y_{k+1}-X_{k+1}^{(n)}\| \leq R(Z_{k+1})\|Y_k-X_k^{(n)}\| +\eta L(\|X_k^{(n)}\|+1)\|Z_{k+1}-\hat{Z}_{k+1}^{(n)}\|.
\end{align*}

Denoting $B_k^{(n)}:= \|X_k^{(n)}\|+1$, we have: 

\begin{align*}
    &\|Y_{k+1}-X_{k+1}^{(n)}\| \leq  R(Z_{k+1}) \|Y_k-X_k^{(n)}\| + \eta L B_k^{(n)} \|Z_{k+1}-\hat{Z}_{k+1}^{(n)}\| \\
    &\leq R(Z_{k+1})R(Z_{k})\|Y_{k-1}-X_{k-1}^{(n)}\| + \eta L R(Z_{k+1}) B_{k-1}^{(n)} \|Z_{k}-\hat{Z}_{k}^{(n)}\| + \eta L B_k^{(n)} \|Z_{k+1}-\hat{Z}_{k+1}^{(n)} \|\\
    &\leq \|Y_0-X_0^{(n)}\|\prod_{i=1}^{k+1}R(Z_{i}) + \eta L \sum_{i=0}^k B_i^{(n)} \|Z_{i+1}-\hat{Z}_{i+1}^{(n)}\|\prod_{j=i+1}^k R(Z_{j+1}),
\end{align*}

where again, for any sequence $(a_i)_{i\geq0}$, $\prod_{i=j}^k a_i=1$, for $j>k$. Furthermore, as $\beta>1$ and $\mathbb{E}[R(Z_1)^\beta]=1$, we have:
\begin{align}
\label{apx_nonconv_converg}
\delta_R:=\mathbb{E}[R(Z_1)]<1.
\end{align}
Now, we can take the expectation given $E_n$ and use that $R(Z_1), ..., R(Z_{k+1})$, and $B_0^{(n)},\dots, B_k^{(n)}$ are \iid~Furthermore, using \eqref{auX_result}, \eqref{apx_nonconv_converg}, and proceeding as in the proof of Theorem \ref{thm:ours_2}, we can obtain:
\begin{align*}
    \mathbb{E}[\|Y_{k+1}-X_{k+1}^{(n)}\|] &\leq  \eta L(\mathbb{E}[\|X_0^{(n)}\|]+1)\mathbb{E}[\|Z_1-\hat{Z}_{1}^{(n)}\|]\frac1{1-\delta_R}.
\end{align*}
Finally, denoting $\mathbb{E}[\|X_0^{(n)}\|]+1=c_0$, we obtain that, with probability larger than $1-\epsilon_n$:
\begin{align*}
    \bfW_1(\pi, \pi^{(n)}) \leq c_0 L (1-c_\rho\rme^{-\rho k})^{-1} \frac{\eta}{1-\delta_R} \bfW_1(\mu_z, \mu_z^{(n)}).
\end{align*}
Taking the limit as $k\rightarrow \infty$, we can conclude the proof of Theorem \ref{thm:ours_4}.
\end{proof}

\subsection{Proof of Theorem \ref{thm:ours_3}}

\begin{proof}
Due to \cite{hodgkinson2021multiplicative} we know that there exist $\alpha, \beta$ such that $\beta<\alpha$ and:
\begin{align}
    \limsup _{t \rightarrow \infty} t^{\alpha+\epsilon} \mathbb{P}\left(\left\|X_\infty\right\|>t\right)>0, \quad \text { and } \quad \limsup _{t \rightarrow \infty} t^{\beta-\epsilon} \mathbb{P}\left(\left\|X_\infty\right\|>t\right)<+\infty.    
\end{align}
Using Lemma \ref{rev_tri_lemma}, we can obtain:
\begin{align*}
    \bfW_1(\pi, \pi^{(n)}) &\geq \int_0^\infty \mid \mathbb{P}(\|X_\infty\|>t)-\mathbb{P}(\|X^{(n)}_\infty\|>t)\mid \mathrm{d} t \qquad \\
    &\geq \int_{t'}^{t''} \mid \mathbb{P}(\|X_\infty\|>t)-\mathbb{P}(\|X^{(n)}_\infty\|>t)\mid \mathrm{d} t,
\end{align*}
where we have used that $\lim_{t\rightarrow\infty}\mathbb{P}(\|X_\infty\|>t)=0$ and $\lim_{t\rightarrow\infty}\mathbb{P}(\|X^{(n)}_\infty\|>t)=0$, in order to select $t'$ and $t''$ large enough for the last inequality to hold. Now, using Theorem \ref{thm:hodgkinson}, we can obtain:
\begin{align}
    \nonumber\int_{t'}^{t''} \mid \mathbb{P}(\|X_\infty\|>t)-\mathbb{P}(\|X_\infty^{(n)}\|>t)\mid \mathrm{d} t &\geq
    \int_{t'}^{t''} \frac{c_\alpha}{t^{\alpha+\epsilon}} \mathrm{~d} t - \int_{t'}^{t''} \mathbb{P}(\|X_\infty^{(n)}\| > t) \mathrm{~d} t \\
    \label{final_eqn_1}&\geq (t''-t')\frac{c_\alpha}{(t'')^{\alpha+\epsilon}} - (t''-t') \mathbb{P}(\|X_\infty^{(n)}\| > t'),
\end{align}
for some constant $c_\alpha$. Therefore, by choosing $t''=2t'$, we can obtain:
\begin{align*}
\begin{split}
    \mathbb{P}(\|X_\infty^{(n)}\| > t') &\geq 
    \frac{c_\alpha}{(2t')^{\alpha+\epsilon}} - \frac1{t'} \bfW_1(\pi, \pi^{(n)}).
\end{split}
\end{align*}
Using similar arguments, we can obtain:
\begin{align*}
    \bfW_1(\pi, \pi^{(n)}) &\geq
    \int_{t'}^{t''} \mathbb{P}(\|X_\infty^{(n)}\| > t) \mathrm{~d} t - \int_{t'}^{t''} \frac{c_\beta}{t^{\beta-\epsilon}} \mathrm{~d} t \\
    &\geq (t''-t') \mathbb{P}(\|X_\infty^{(n)}\| > t'') - (t''-t')\frac{c_\beta}{(t')^{\beta-\epsilon}}.
\end{align*}
for a constant $c_\beta$. Choosing $t''=2t'$ and substituting $\frac{1}{2}\Bar{t}=t'$, we obtain:
\begin{align}
\label{final_eqn_2}
    \mathbb{P}(\|X_\infty^{(n)}\| > \Bar{t}) \leq
    \frac{2}{\Bar{t}} \bfW_1(\pi, \pi^{(n)})+  \frac{2^{\beta-\epsilon} c_\beta}{\Bar{t}^{\beta-\epsilon}}.
\end{align}
Using Theorem \ref{thm:ours_4} and \eqref{final_eqn_1}-\eqref{final_eqn_2}, we can conclude the proof of Theorem \ref{thm:ours_3}.
\end{proof}

\newpage

\section{Further experimental results}
\label{appx:sec_D}

In this section, we present the remaining experimental findings.

\begin{figure}[t!]%
    \centering
    {\includegraphics[width=\textwidth]{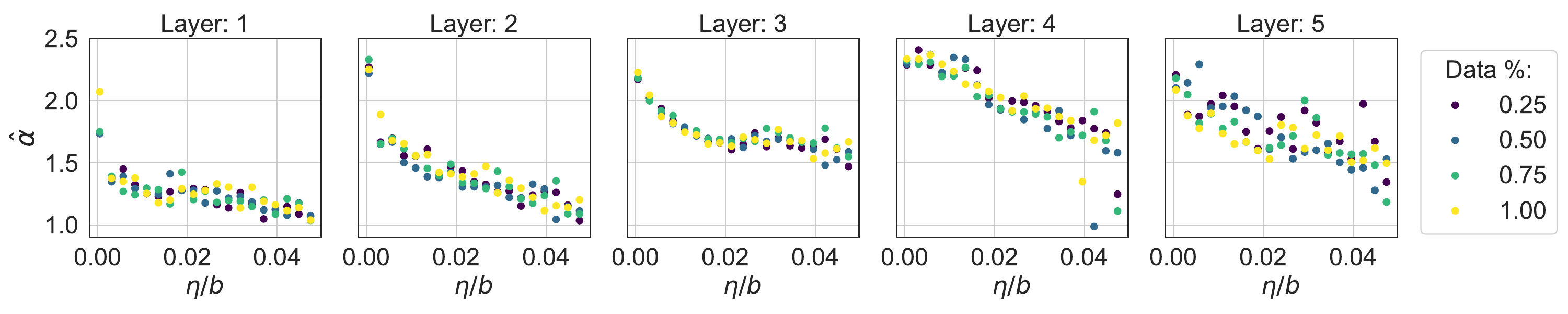} }%
    \vspace{-0.7cm}
    \caption{Estimated tail indices, LeNet, CIFAR-10}%
    \label{plt:4_lenet_cifar10}%
\end{figure}

\begin{figure}[t!]%
    \centering
    {\includegraphics[width=0.75\textwidth]{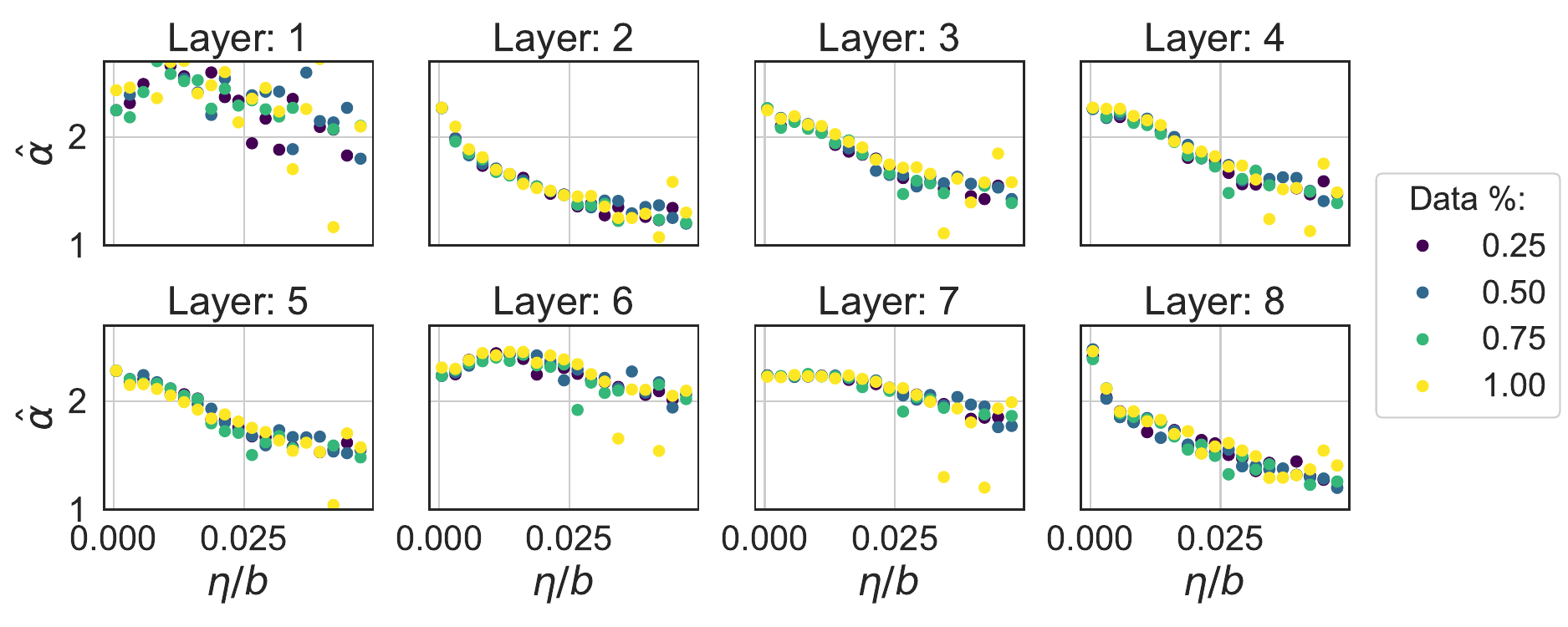} }%
    \caption{Estimated tail indices, AlexNet, MNIST}%
    \label{plt:4_alexnet_mnist}%
\end{figure}
\subsection{Neural Network Experiments}
First, we showcase the remaining plots from Section \ref{sec:experiments}. As outlined in \ref{sec:exp_nn}, we conducted model training for $10,000$ iterations, utilizing the cross-entropy loss and employing the MNIST and CIFAR-10 datasets. The learning rates ranged from $10^{-4}$ to $10^{-1}$, while the batch sizes varied between 1 and 10. We employed offline SGD with a subset of the data amounting to $25\%$, $50\%$, and $75\%$. In Figure \ref{plt:4_lenet_cifar10}, we exhibit the estimated tail indices for the LeNet architecture with the CIFAR-10 dataset, while Figure \ref{plt:4_alexnet_mnist} shows the estimated tail indices for the AlexNet architecture implemented with the MNIST dataset.

The inclusion of these plots serves the purpose of completeness, as the overall conclusions remain unchanged; a strong correlation between $\Hat{\alpha}^{(n)}$ and $\Hat{\alpha}$ is exhibited, as well as a notable correlation between $\Hat{\alpha}^{(n)}$'s and the ratio $\eta / b$. These conclusions hold true across all datasets and neural network architectures, providing further substantiation for our theoretical propositions.

\subsection{Further tail examination}
From \cite{gurbuzbalaban2021heavy}, we have that $\mathbb{P}\left(\left\|X_{\infty}\right\|>t\right) \approx t^{-\alpha}$, so the log-log tail histogram of the stationary distribution should follow a linear line with slope $-\alpha$ in the tails (for large $t$). We first examine the histograms of the estimated stationary distributions for both the linear regression setting (as in Sec. \ref{sec:exp_lin_reg}) and the NN setting (as in Sec. \ref{sec:exp_nn}), analyzing their behavior as the number of samples $n$ increases. These are depicted in Figures \ref{plt:appx_lin_reg_hist}-\ref{plt:appx_1_hist_2}. Then, we re-run the experiments and examine the histograms on a log-log scale to see whether the linear slope $-\alpha$ becomes more apparent. For both cases, as $n$ increases, we observe that this heavy-tailed phenomenon becomes increasingly apparent, and the tails follow a clearer linear trend. The linear behavior in the tails can be observed in Figures \ref{plt:lin_reg_log_Log}-\ref{plt:log_log_nn_appx}.

In the entirety of our experiments, encompassing diverse learning rates and layers within both the linear regression and NN settings, a consistent observation emerges: an increase in the number of samples employed in offline SGD leads to a convergence of behavior towards that of online SGD. As previously mentioned in Section \ref{sec:introduction}, while we do not anticipate an \textit{exact} power-law tailed behavior, it is noteworthy that the tails exhibit progressively more characteristics resembling a power-law distribution. Specifically, the log-log plots demonstrate a linear trend in the tails. This empirical observation aligns with our theoretical findings, again reinforcing the consistency between the two.

\begin{figure}[ht]
\centering
{\includegraphics[width=.6\textwidth]{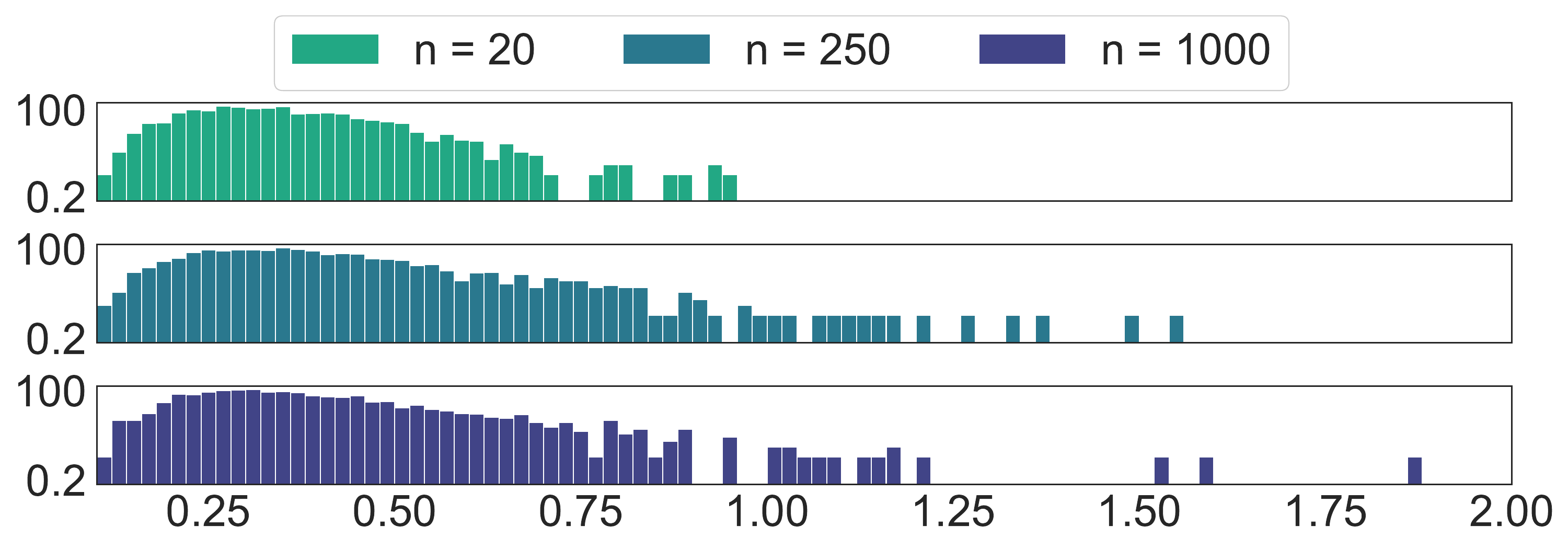}}
\caption{Histograms of the parameter norms for linear regression with Gaussian data}
\label{plt:appx_lin_reg_hist}
\end{figure}

\begin{figure}[ht]%
    \centering
    {\includegraphics[width=.9\textwidth]{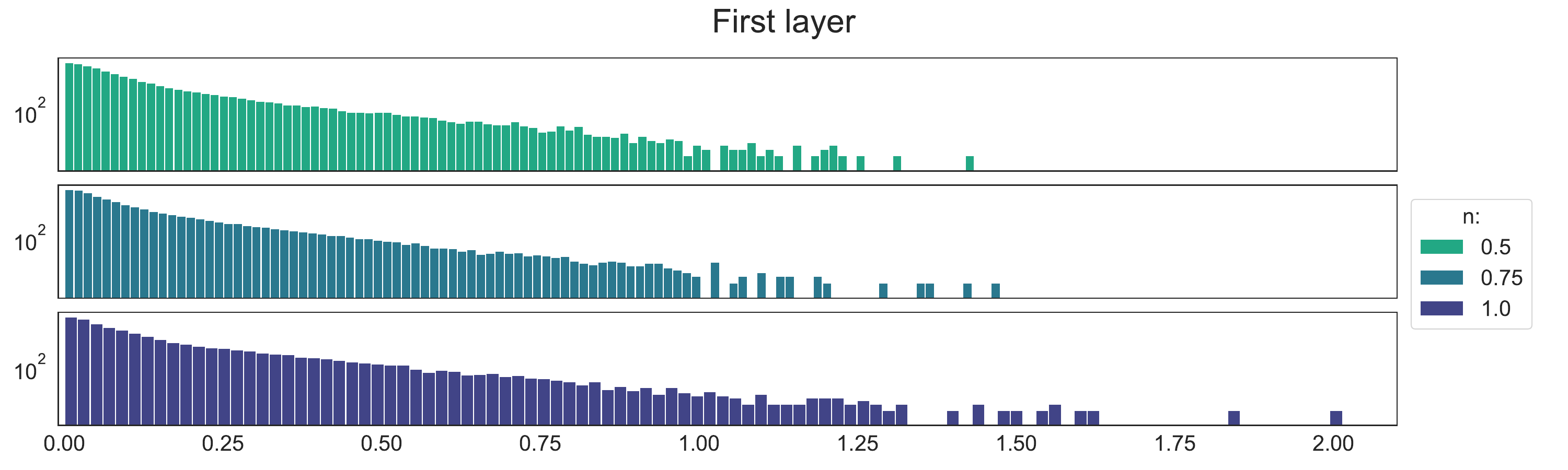} }%
    {\includegraphics[width=.9\textwidth]{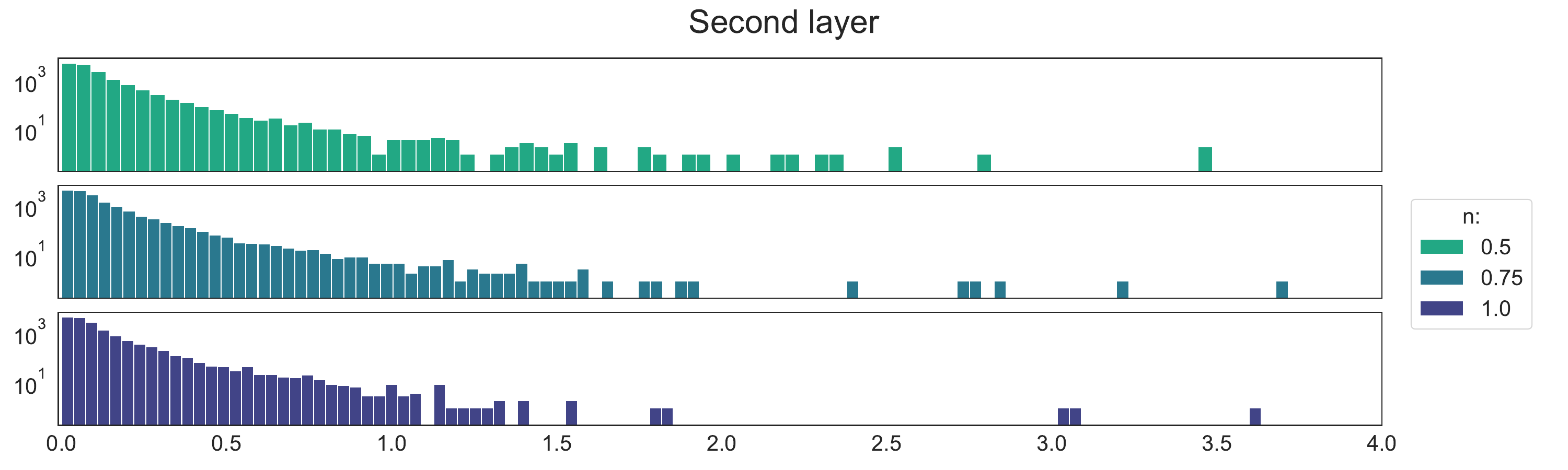} }%
    {\includegraphics[width=.9\textwidth]{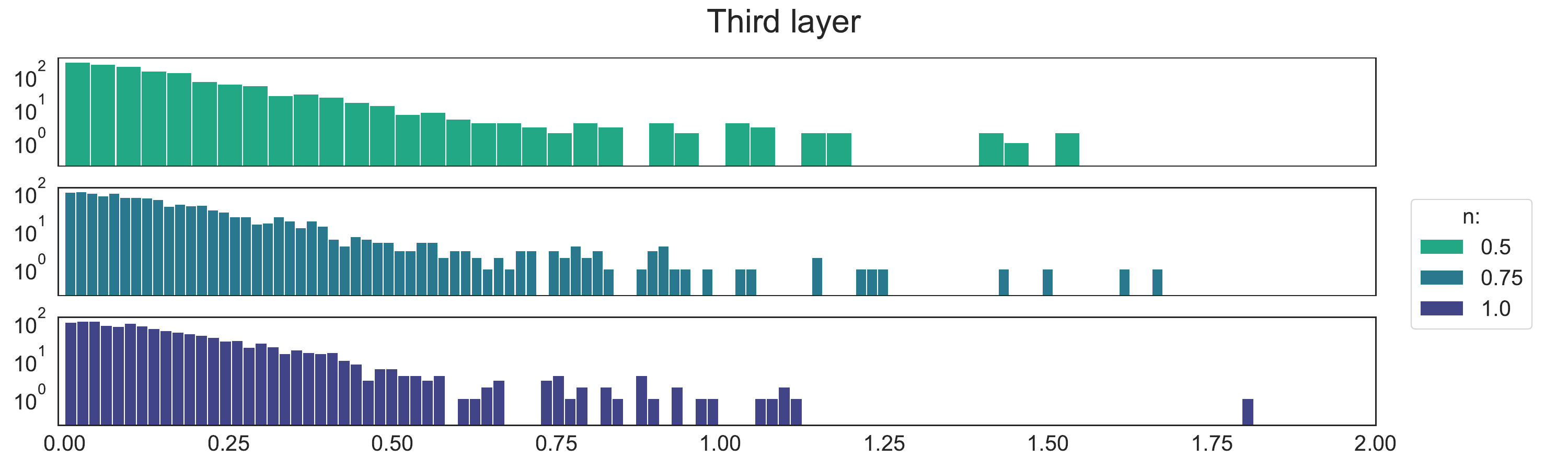} }%
    \caption{Histograms of the weight norms for FC on MNIST dataset}%
    \label{plt:appx_1_hist_2}%
\end{figure}

\begin{figure}[ht]
\centering
{\includegraphics[width=.9\textwidth]{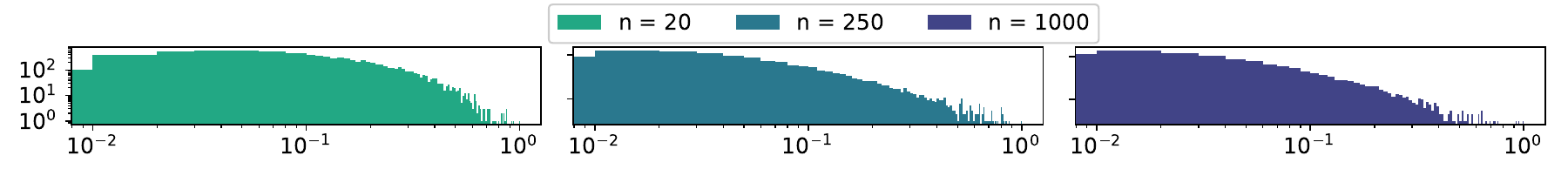}}
\caption{Log-log histograms of the parameter norms for linear regression with Gaussian data}
\label{plt:lin_reg_log_Log}
\end{figure}

\begin{figure}[t!]
\centering
\includegraphics[width=1\textwidth]{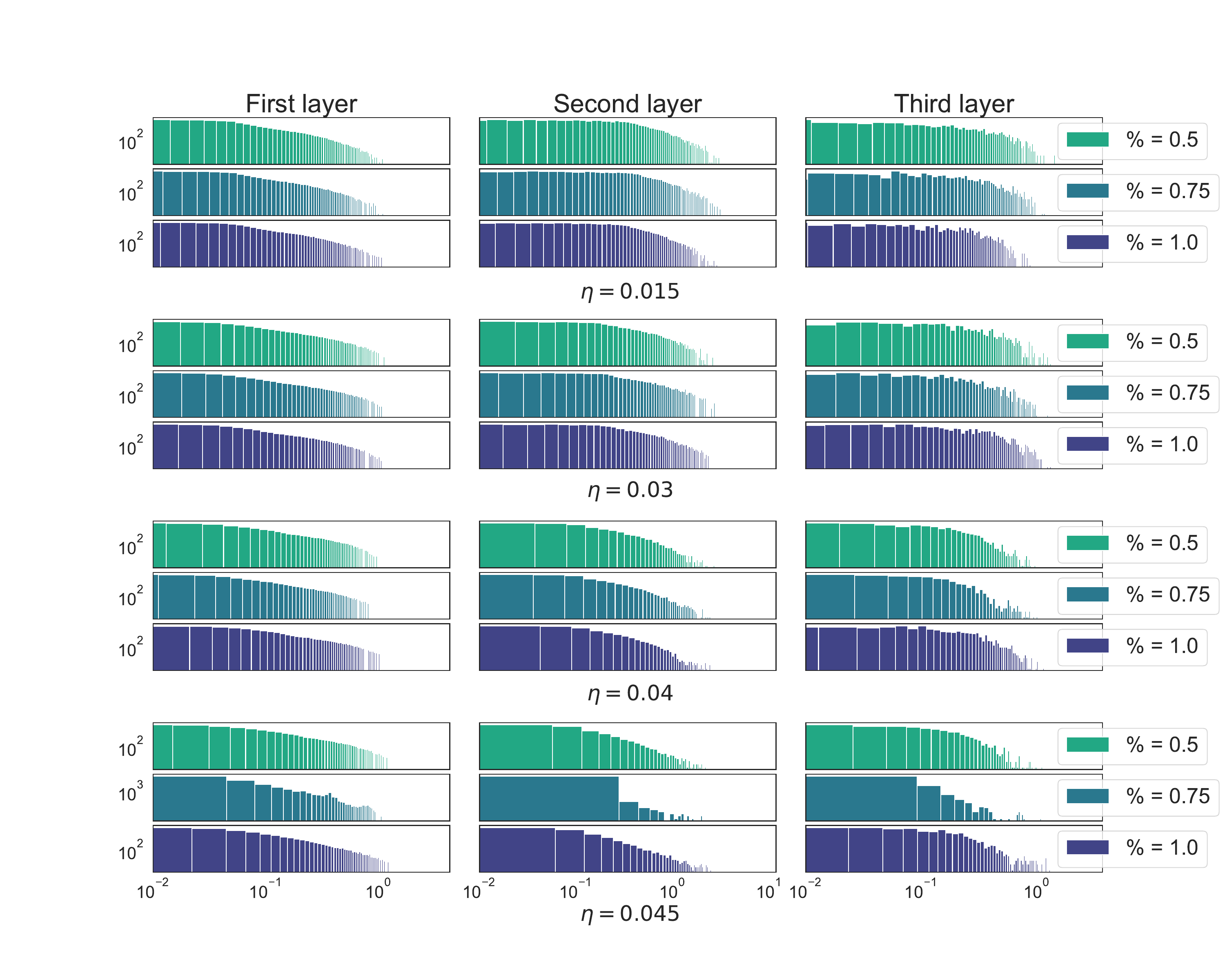}
\caption{Log-log histograms of the weight norms for FC on MNIST dataset}
\label{plt:log_log_nn_appx}
\end{figure}

\clearpage

\section{Estimator choice justification}
\label{appx:sec_E}

\begin{wrapfigure}{r}{0.5\textwidth}
\centering
\includegraphics[width=0.8\linewidth]{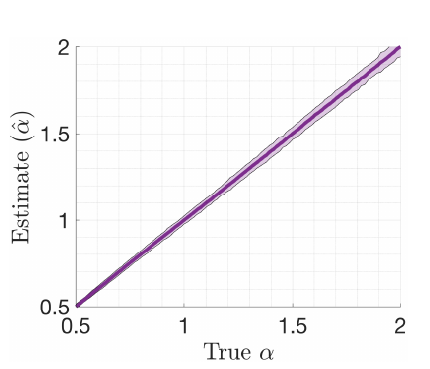}
  \caption{Evaluation of the estimator from \cite{mohammadi2015estimating}. Figure is directly taken from \cite{2019bsimsekli}.}
\label{plt:estim_eval}
\end{wrapfigure}

Analytical computation of the true tail exponent, even in the linear regression setting, is, to our knowledge, unfortunately not possible. Therefore, an estimator choice is required. In this section, we provide our rationale for selecting the tail estimator from \cite{mohammadi2015estimating} by highlighting its strengths. 

The main arguments are as follows: \emph{(i)} the estimator's theoretical framework is established through its convergence in distribution to the true tail index (via a Central Limit Theorem result, detailed in Theorem 2.3 \cite{mohammadi2015estimating}), which is further complemented by its demonstrated asymptotic consistency (as per Corollary 2.4 \cite{mohammadi2015estimating}).
\emph{(ii)} The estimator has already been used in various articles and its qualities have been thoroughly examined. For example, this can be observed in Figure \ref{plt:estim_eval} (taken from \cite{2019bsimsekli}), which, in our opinion, contains convincing estimation results (e.g., small error bars regardless of the magnitude of the true tail index).

\section{Strongly Convex Problem Example }
\label{sec:example_problem}
Consider the following one-dimensional logistic regression where the regularization parameter $\lambda\sim Exp(\mu)$, i.e., it follows an Exponential distribution with mean $1/\mu$ - its probability density function is as follows:
\begin{align*}
f(\lambda ; \mu)= \begin{cases}\mu^{-1} e^{-\lambda/\mu} & \lambda \geq 0 \\ 0 & \lambda<0\end{cases}.
\end{align*}

As before, let $(a_i,q_i)_{i \geq 1}$ be \iid~random variables in $\rset^{2}$, such that $z_i \equiv (a_i,q_i)$. In this scenario, where $a\sim N(0,\sigma^2)$ and $y\in\{0,1\}$, the loss function equals:

\begin{align*}
\ell(x,z)=-y \ln \left(\frac{1}{\mathrm{e}^{-a x}+1}\right)-(1-y) \ln \left(1-\frac{1}{\mathrm{e}^{-a x}+1}\right)+\frac1{2}\lambda\|x\|^2.
\end{align*}

Now, the second derivative of the loss with respect to the parameter $x$ equals $\nabla^2 \ell(x,z)=\frac{a^2 \mathrm{e}^{a x}}{\left(\mathrm{e}^{a x}+1\right)^2}+\lambda$. Note that $|\nabla^2 \ell(x,z)|\leq a^2/4+\lambda, \forall x\in X$. In the SGD case, from \cite{hodgkinson2021multiplicative}, we have $r(z)=\liminf _{\|x\| \rightarrow \infty} \sigma_{\min }\left(\Idd-\gamma \nabla^2 \ell(x, z)\right)$ and $R(z)=\sup_x\left\|\Idd-\gamma \nabla^2 \ell(x, z)\right\|$. In other words, as we are in the one-dimensional setting, we have:

\begin{align*}
    r(z)=\liminf_{|x|\rightarrow \infty} \left|1-\gamma \lambda - \gamma \frac{a^2\mathrm{e}^{ax}}{(\mathrm{e}^{ax}+1)^2}\right|&=|1-\gamma \lambda|, \text{ and} \\
    R(z)=\sup _x\left|1-\gamma \lambda - \gamma \frac{a^2\mathrm{e}^{ax}}{(\mathrm{e}^{ax}+1)^2}\right|&=\max\left(|1-\gamma \lambda|,|1-\gamma \lambda-\gamma \frac{a^2}{4}|\right).
\end{align*}

Now, we require $\mathbb{P}(r(Z_1)>1)>0$ and $\mathbb{E}[R(Z_1)]<1$. As $\lambda\sim Exp(\mu)$, we have that $\mathbb{P}(r(Z_1)>1)>0$. For the latter condition, we can calculate the required expectation:

\begin{align*}
\mathbb{E}|1-\gamma \lambda| &= \int_0^\frac{1}{\gamma}(1-\gamma \lambda) \mu \mathrm{e}^{-\mu \lambda} \mathrm{~d} \lambda +\int_\frac1{\gamma}^\infty(\gamma \lambda-1) \mu \mathrm{e}^{-\mu \lambda} \mathrm{~d} \lambda \\
&= -\frac{(\gamma-\mu)-\gamma \mathrm{e}^{-\frac{\mu}{\gamma}}}{\mu} + \frac{\gamma \mathrm{e}^{-\frac{\mu}{\gamma}}}{\mu} \\
&= \frac{2\gamma\mathrm{e}^{-\frac{\mu}{\gamma}}-\gamma}{\mu}+1.
\end{align*}

Furthermore, we have:
\begin{align*}
\mathbb{E}|1-\gamma \frac{x^2}{4}-\gamma \lambda|&= \mathbb{E}\int_0^{1-\frac{\gamma x^2}{4}}(1-\gamma \frac{x^2}{4}-\gamma \lambda) \mu \mathrm{e}^{-\mu \lambda} \mathrm{~d} \lambda + \mathbb{E}\int_{1-\frac{\gamma x^2}{4}}^\infty(\gamma \frac{x^2}{4}+\gamma \lambda-1) \mu \mathrm{e}^{-\mu \lambda} \mathrm{~d} \lambda\\
&= \mathbb{E}\left[\frac{2\gamma}{\mu}\mathrm{e}^{\frac{\mu}{\gamma}(1-\frac{\gamma x^2}{4})}+1-\frac{\gamma}{\mu}-\frac{\gamma x^2}{4}\right] \\
&= \int_{-\infty}^\infty\left(\frac{2\gamma}{\mu}\mathrm{e}^{\frac{\mu}{\gamma}(1-\frac{\gamma x^2}{4})}+1-\frac{\gamma}{\mu}-\frac{\gamma x^2}{4} \right)\frac{\mathrm{e}^{-\frac{1}{2\sigma^2}x^2}}{\sqrt{2\pi\sigma^2}} \mathrm{~d}x \\
&= \frac{2\sqrt{2}\gamma}{\mu\sqrt{2-\sigma^2\mu}}\mathrm{e}^{-\frac{\mu}{\gamma}}-\gamma(\frac{1}{\mu}+\frac{\sigma^2}{4})+1
\end{align*}

We can see that, for example, both expressions are less than 1 in modulus for $\mu=0.1$, $\sigma^2=1$, and $\gamma=0.1$. It is important to highlight that the aforementioned calculations necessitate the condition $\frac{1}{2\sigma^2}-\frac{\mu}{4}>0$, or equivalently $\sigma^2\mu<2$. This condition can be interpreted as a stability criterion, indicating that the mean of the penalization term $\lambda$ must increase as the variance of the data grows. In other words, a larger variance necessitates a higher mean value for $\lambda$.

\end{document}